\pgfplotsset{compat=1.18}
\theoremstyle{plain}
\newtheorem{theorem}{Theorem}[section]
\newtheorem{property}[theorem]{Property}
\newtheorem{lemma}[theorem]{Lemma}
\theoremstyle{definition}
\newtheorem{definition}[theorem]{Definition}
\newtheorem{assumption}[theorem]{Assumption}
\theoremstyle{remark}
\definecolor{mydarkred}{rgb}{0,0,0}
\newcommand{\userind}{i}
\newcommand{\FL}{\texttt{FL}\xspace}
\newcommand{\DP}{\texttt{DP}\xspace}
\newcommand{\PDP}{\texttt{PDP}\xspace}
\newcommand{\DPFL}{\texttt{DPFL}\xspace}
\newcommand{\algname}[1]{{\sf\color{mydarkred}\scalefont{0.90}{#1}}\xspace}
\DeclareRobustCommand\onedot{\futurelet\@let@token\@onedot}
\def\@onedot{\ifx\@let@token.\else.\null\fi\xspace}
\newcommand{\floor}[1]{\left\lfloor #1 \right\rfloor}
\newcommand{\Sc}{\mathcal{S}}
\newcommand{\Eb}{\mathbb{E}}
\newcommand{\Rb}{\mathbb{R}}
\newcommand{\wv}{\mathbf{w}}
\newcommand{\thetav     }{\boldsymbol \theta     }
\newcommand{\lambdav    }{\boldsymbol \lambda    }
\icmltitlerunning{Noise-Aware Algorithm for Heterogeneous Differentially Private
Federated Learning}
\begin{document}

\twocolumn[
\icmltitle{Noise-Aware Algorithm for \\ Heterogeneous Differentially Private
Federated Learning}



\icmlsetsymbol{equal}{*}

\begin{icmlauthorlist}
\icmlauthor{Saber Malekmohammadi}{yyy,zzz}
\icmlauthor{Yaoliang Yu}{yyy,zzz}
\icmlauthor{Yang Cao}{comp}
\end{icmlauthorlist}

\icmlaffiliation{yyy}{School of Computer Science, University of Waterloo, Waterloo, Canada}
\icmlaffiliation{zzz}{Vector Institute, Toronto, Canada}
\icmlaffiliation{comp}{Department of Computer Science, Tokyo Institute of Technology, Tokyo, Japan}

\icmlcorrespondingauthor{Saber Malekmohammadi}{saber.malekmohammadi@uwaterloo.ca}

\icmlkeywords{Machine Learning, ICML}

]



\printAffiliationsAndNotice{} 
\begin{abstract}
High utility and rigorous data privacy are of the main goals of a federated learning (\FL) system, which learns a model from the data distributed among some clients. The latter has been tried to achieve by using differential privacy in \FL (\DPFL). There is often heterogeneity in clients' privacy requirements, and existing \DPFL works either assume uniform privacy requirements for clients or are not applicable when server is not fully trusted (our setting). Furthermore, there is often heterogeneity in batch and/or dataset size of clients, which as shown, results in extra variation in the \DP noise level across clients' model updates. With these sources of heterogeneity, straightforward aggregation strategies, e.g., assigning clients' aggregation weights proportional to their privacy parameters ($\epsilon$) will lead to lower utility. We propose \algname{Robust-HDP}, which efficiently estimates the true noise level in clients' model updates and reduces the noise-level in the aggregated model updates considerably. \algname{Robust-HDP} improves utility and convergence speed, while being safe to the clients that may maliciously send falsified privacy parameter $\epsilon$ to server. Extensive experimental results on multiple datasets and our theoretical analysis confirm the effectiveness of \algname{Robust-HDP}. Our code can be found \href{https://github.com/Saber-mm/HDPFL.git}{here}. 
\end{abstract}

\section{Introduction} \label{sec:related_work}
In the presence of sensitive information in the train data, \FL algorithms must be able to provide rigorous data privacy guarantees against a potentially curious server or any third party \cite{Hitaj2017DeepMU, Rigaki2020ASO, Wang2018BeyondIC, Zhu2019DeepLF, Geiping2020InvertingG}. Differential Privacy \cite{Dwork2006, Dwork2006OurDO,Dwork2011AFF, Dwork2014TheAF} has been used in \DPFL systems to achieve such formal privacy guarantees. When there is a trusted server in the system, \DP is provided by the server adding controlled noise to the aggregation of clients' updates \cite{McMahan2018LearningDP, Geyer2017DPFedAvg}. When there is no trusted server, local perturbations, where each client randomizes its updates locally before sending them to the server, is also a solution \cite{Zhao2020LocalDP}. However, this method is limited in the sense that achieving privacy while preserving model utility is challenging, due to clients' independent local noise additions. Some solutions have been proposed for improving utility of this method, e.g., using a trusted shuffler system \cite{Liu2021FLAMEDP, Girgis2021}, which may be difficult to establish if the server itself is not trusted.

Clients often have heterogeneous privacy preferences coming from their varying privacy policies. Furthermore, dataset size usually varies a lot across clients. Additionally, depending on their computational budgets, some clients may use relatively smaller batch sizes locally for running \algname{DPSGD} algorithm \cite{Abadi2016}. As we will show, a small privacy parameter ($\epsilon$) and/or a small batch size lead to a fast increment of the noise level in a client's model update. Existing heterogeneous \DPFL works mostly either depend on a trusted server \cite{Chathoth2022cohortDP, PDPFLglobecomm2022}, or suffer from suboptimal and vulnerable aggregation strategies on an untrusted server (based on clients' privacy parameters \cite{Liu2021ProjectedFA}). We consider a heterogeneous \DPFL systems with an \emph{untrusted} server and propose an efficient aggregation strategy for the server, which is aware of the noise level in each client's model update: we propose to employ Robust PCA (RPCA) algorithm \cite{Candes2009RobustPC} by the untrusted server to estimate the amount of noise in clients' model updates, which we show depends strongly on multiple factors (e.g., their privacy parameter and their batch size ratio), and assign their aggregation weights accordingly. The use of this efficient strategy on the server, which is independent of clients sending any privacy parameters to the server or not, improves model utility and  convergence speed while being robust to potential falsifying clients. The highlights of our contributions are the followings:

\begin{itemize}
    \item We show the effect of privacy parameter and batch/dataset size on the noise level in clients' updates.
    \item We propose ``\algname{Robust-HDP}'', a noise-aware robust algorithm for heterogeneous \DPFL.
    \item As the first work assuming heterogeneous dataset sizes, heterogeneous batch sizes, non-uniform and varying aggregation weights and partial participation of clients simultaneously, we prove convergence of our proposed algorithm under mild assumptions on loss functions.
    \item In various heterogeneity scenarios across clients, we show that \algname{Robust-HDP} improves utility and convergence speed while respecting clients' privacy.  
\end{itemize}

\section{Related work}
\paragraph{Differential privacy.} In this work, we use the following definition of differential privacy:

\begin{definition}[($\epsilon,\delta$)-\DP \cite{Dwork2006OurDO}]
\label{def:epsilondeltadp}
A randomized mechanism $\mathcal{M}:\mathcal{D}\to \mathcal{R}$ with domain $\mathcal{D}$ and range $\mathcal{R}$ satisfies $(\epsilon,\delta)$-\DP if for any two adjacent inputs $d$, $d'\in \mathcal{D}$, which differ only by a single record, and for any measurable subset of outputs $\mathcal{S} \subseteq \mathcal{R}$ it holds that
\begin{align*}
    \texttt{Pr}[\mathcal{M}(d)\in \mathcal{S}] \leq e^{\epsilon} \texttt{Pr}[\mathcal{M}(d')\in \mathcal{S}]+\delta.
\end{align*}
\end{definition}

Gaussian mechanism, which randomizes the output of a non-private computation $f$ on a dataset $d$ as $\mathbf{G_{\sigma}}f(d) \triangleq f(d)+\mathcal{N}(\mathbf{0},\sigma^2)$, provides ($\epsilon,\delta$)-\DP. 
The variance of the noise, $\sigma^2$, is calibrated to the sensitivity of $f$, i.e., the maximum amount of change in its output (measured in $\ell_2$ norm) on two neighboring datasets $d$ and $d'$. Gaussian mechanism has been used in \algname{DPSGD} algorithm \citep{Abadi2016} for private ML to randomize intermediate data-dependent computations, e.g., gradients. Some prior works \cite{GurAri2018GradientDH} found that stochastic gradients stay in a low-dimensional space during training with Stochastic Gradient Descent (\algname{SGD}). Inspired by this, \citet{projecteddpsgd} proposed projection-based variant of the \algname{DPSGD} \cite{Abadi2016} algorithm (projected \algname{DPSGD}), which improves utility by removing the unnecessary noise from noisy batch gradients by projecting them on a linear subspace obtained from a public dataset. Personalized \DP (\PDP), which specifies a separate privacy parameter $\epsilon$ for each data sample in a dataset, was used for centralized settings in \cite{Alaggan_Gambs_Kermarrec_2017, Jorgensen2015ConservativeOL, Huang2020ImprovingLM, Doudalis2017OnesidedDP, yu2023individual}, followed by some recent works in \cite{Boenisch_NEURIPS2023, Heo2023PersonalizedDU}. Another similar work in \cite{Niu2020UtilityawareEM} proposed ``Utility Aware Exponential Mechanism'' (UPEM) to pursue higher utility while achieving \PDP. In the same direction of improving utility, \citet{Shi2021SelectiveDP} proposed ``Selective \DP'' for improving utility by leveraging the fact that private information in natural language is sparse.

\begin{figure}[t]
    \centering
    \setlength{\columnsep}{0pt}\includegraphics[width=0.8\columnwidth,height=3.5cm]{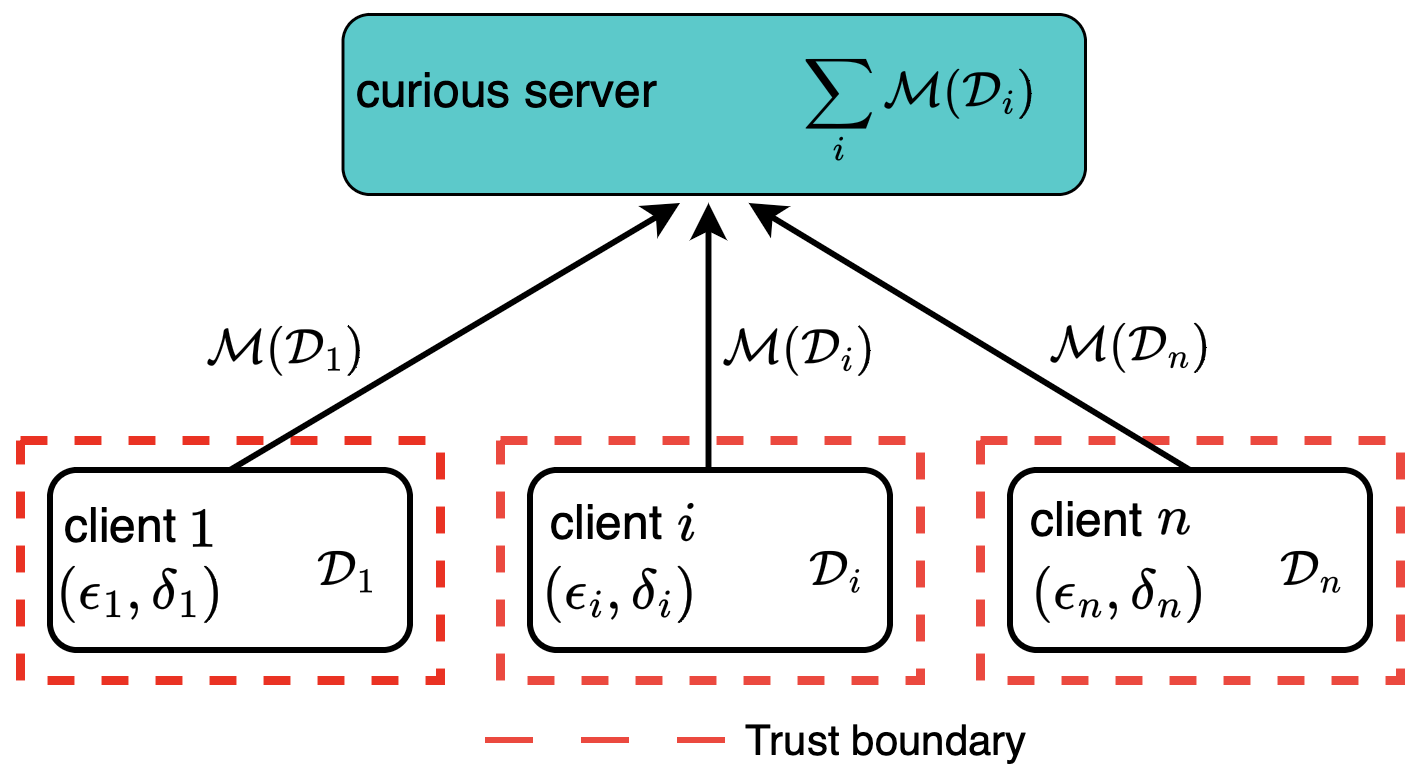}
    \vspace{-1em}
    \caption{Security model in heterogeneous \DPFL, where client $i$ has local train data $\mathcal{D}_i$ and privacy parameters $(\epsilon_i, \delta_i)$, and does not trust any external parties.}
    \label{fig:security_model}
    \vspace{-1.2em}
\end{figure}

\vspace{-1em}
\paragraph{Heterogeneous \DPFL.} Assuming the existence of a \emph{trusted} server, \citet{Chathoth2022cohortDP} proposed cohort-level privacy with privacy and data heterogeneity across cohorts using $\epsilon$-\DP definition (\cref{def:epsilondeltadp} with $\delta=0$). Also, the work in \cite{PDPFLglobecomm2022}, adapted the non-uniform sampling idea of \cite{Jorgensen2015ConservativeOL} to the \FL settings with a \emph{trusted} server to get client-level \DPFL (i.e., $d$ and $d'$ differ by one client's whole data) against membership inference attacks \cite{Rigaki2020ASO, Wang2018BeyondIC}. In contrast, we consider \emph{untrusted} servers. 

The output of an algorithm $\mathcal{M}$, in the sense of \cref{def:epsilondeltadp}, is all the information that the untrusted server, which we want to protect against, observes. We consider heterogeneous \DPFL (\cref{fig:security_model}), where each client $i$ has its own privacy parameters ($\epsilon_i, \delta_i$), and sends data-dependent computation results $\mathcal{M}(\mathcal{D}_i)$ (model updates) to the server. Also, in the context of \cref{def:epsilondeltadp}, the notion of neighboring datasets that we consider in this work, refers to pair of federated datasets $d={\{\mathcal{D}_1, \cdots, \mathcal{D}_n\}}$ and $d'={\{\mathcal{D}_1, \cdots, \mathcal{D}_n\}}$, differing by one data point of one client (i.e., \emph{record-level} \DPFL). 
\begin{table*}[t!]
    \caption{Features of different heterogeneous \DPFL algorithms. $\times$: needed at server, $\checkmark$: not needed.}
    \centering
    \begin{tabular}{lcccc}
    \toprule 
   \bf algorithm & \bf aggregation strategy  & \bf $\{\epsilon_i\}_{i=1}^n$ & \bf clients clustering & \bf PCA on clients updates\\
    \midrule
    \bf \algname{WeiAvg} \citep[Alg. \ref{alg:WeiAvg}]{Liu2021ProjectedFA} & $w_i \propto \epsilon_i$ & $\times$ & $\times$ & $\checkmark$\\
    \bf \algname{PFA} \cite{Liu2021ProjectedFA} & $w_i \propto \epsilon_i$ & $\times$ & $\times$ & $\times$\\
    \bf \algname{DPFedAvg} \cite{DPSCAFFOLD2022} & $w_i \propto N_i$ & $\checkmark$ & $\checkmark$ & $\checkmark$\\
    \bf \algname{minimum} $\epsilon$ \cite{Liu2021ProjectedFA} & $w_i \propto N_i$ & $\times$ & $\checkmark$ & $\checkmark$\\
    \midrule
    \bf \algname{Robust-HDP} (Alg. \ref{alg:Robusthdp}) & $w_i \propto \frac{1}{\sigma_i^2}$ & $\checkmark$ & $\checkmark$ & $\checkmark$\\
    \bottomrule
    \end{tabular}
    \label{tab:hetdpfl_algs}
    \vspace{-1em}
\end{table*}
 \citet{Liu2021ProjectedFA} adapted a projection-based approach, similar to that of projected \algname{DPSGD} \cite{projecteddpsgd}, to the heterogeneous \DPFL setting to propose \algname{PFA} and improve utility. Although assuming an untrusted server, their proposed algorithm relies on the assumption that the server knows the clients' \emph{``true"} privacy parameters $\{(\epsilon_i, \delta_i)\}$ and uses them to cluster clients to ``public'' (those with larger privacy parameters) and ``private''. As such, as we show, \algname{PFA} is extremely vulnerable to when clients share a falsified value of their privacy parameters (often larger than their true values) with server. Also, they used aggregation strategy $w_i \propto \epsilon_i$ on server for \algname{PFA} and another algorithm called \algname{WeiAvg} (see \Cref{tab:hetdpfl_algs}). As we will show, \emph{even if the server knows clients' true privacy parameters, this information is not a perfect indication of the ``true noise level" in their model updates, especially with heterogeneous privacy parameters  and batch/dataset sizes}.


The current state of the art in heterogeneous \DPFL calls for a robust algorithm that takes all the mentioned potential sources of heterogeneity across clients into account and achieves high utility and data privacy simultaneously.

\section{The \algname{Robust-HDP} algorithm for heterogeneous \DPFL}\label{sec: batchsize_analysis} 
In this section, we will devise a new heterogeneous \DPFL algorithm, and we first explain the intuitions behind it. Used notations are explained in \cref{tab:notations} and \cref{app:notations} in details. At the $t$-th gradient update step on a current model $\thetav$, client $i$ computes the following noisy batch gradient:
\begin{align}
    \Tilde{g}_i(\thetav) = \frac{1}{b_i}\bigg[ \Big(\sum_{j \in \mathcal{B}_i^t} \Bar{g}_{ij}(\thetav)\Big) + \mathcal{N}(\mathbf{0}, \sigma_{i, \texttt{\DP}}^2 \mathbb{I}_p)\bigg],
    \label{eq:noisy_sg}
\end{align}
where $\Bar{g}_{ij}(\thetav) = \texttt{clip}(\nabla \ell(h(x_{ij},\thetav), y_{ij}), c)$, and $c$ is a clipping threshold. For a given vector $\mathbf{v}$, $\texttt{clip}(\mathbf{v}, c) =  \min\{\|\mathbf{v}\|, c\} \cdot \frac{\mathbf{v}}{\|\mathbf{v}\|}$. Also,  $\sigma_{i, \texttt{\DP}}=c\cdot z(\epsilon_i, \delta_i, q_i, K_i, E)$: knowing $E$ (global communication rounds), client $i$ can compute $z(\epsilon_i, \delta_i, q_i, K_i, E)$ locally, which is the noise scale that it should use locally for \algname{DPSGD} in order to achieve $(\epsilon_i, \delta_i)-$\DP with respect to $\mathcal{D}_i$ at the end of $E$ global rounds. This can be done by client $i$ using a privacy accountant, e.g., the moments accountant \cite{Abadi2016}. Therefore, \emph{depending on its privacy preference $(\epsilon_i, \delta_i)$}, each client $i$ computes its required noise scale $z$, runs \algname{DPSGD} locally and sends its noisy model updates to the server at the end of each round. Now, an important question is that what is an efficient aggregation strategy for the server to aggregate the clients' noisy model updates? Intuitively, the server has to pay more attention to the less noisy updates. The challenge is that the server knows neither the noise added by each client $i$ nor its amount. To answer the above question, we first analyze the behavior of the noise level in clients' batch gradients in \cref{sec:batch_grad_noise}, which is used in \cref{sec:noisy_updates}, for a similar analysis of clients' uploaded model updates. The result of this analysis is an idea we propose for the server to estimate the noise amount in each model update, which leads to an efficient aggregation strategy in \cref{sec:robust_hdp}.

\begin{figure*}[hbt!]
    \centering
    \setlength{\columnsep}{0pt}
    \includegraphics[width=0.9\columnwidth,height=6cm]{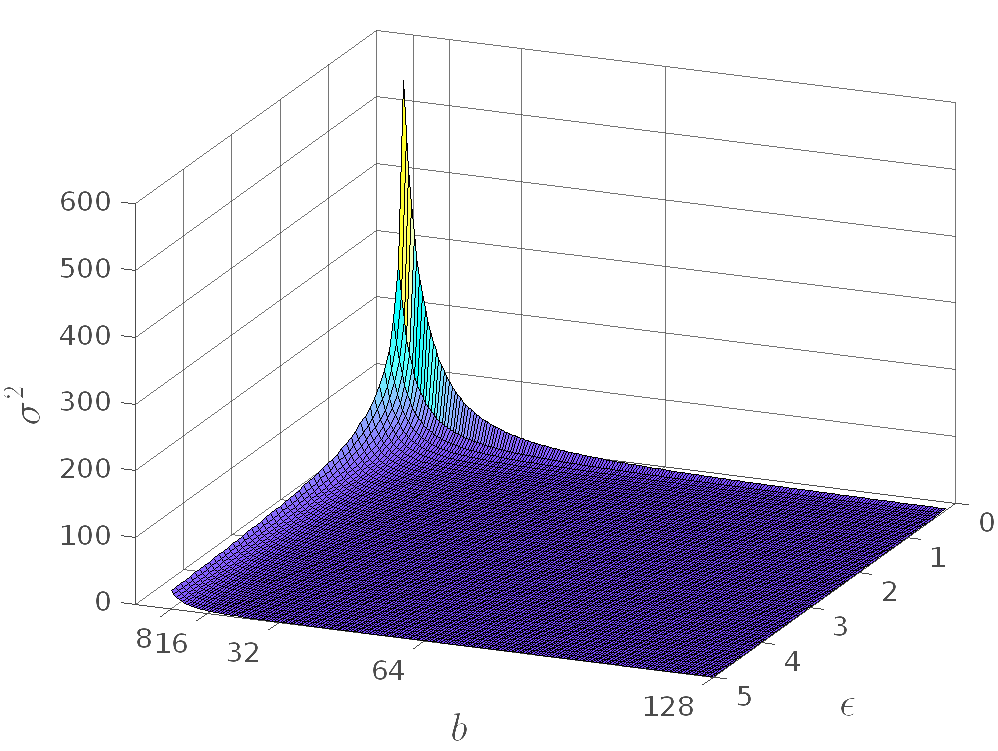}
    \quad \quad  \includegraphics[width=0.9\columnwidth,height=5.1cm]{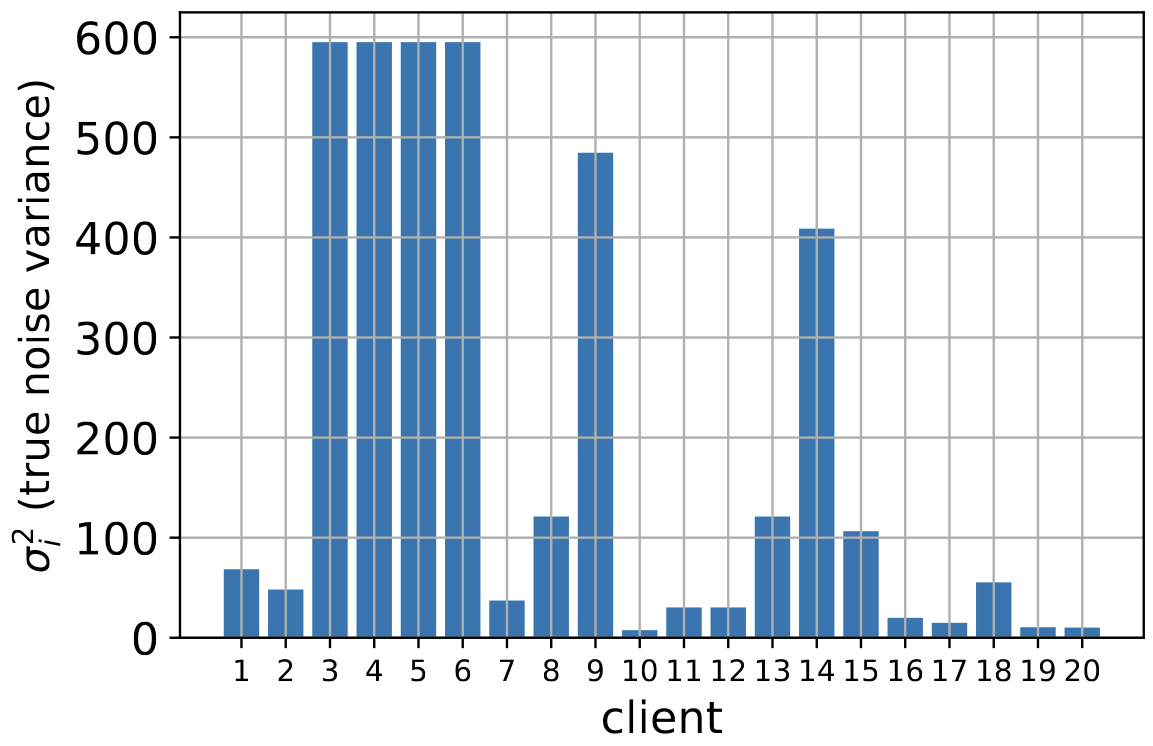}
    \caption{Left: 3D plot of noise variance $\sigma_i^2$ of a client $i$ (\cref{eq:sigma_i^2} with $K_i=1, N_i=2400, \eta_l = 0.01, c=3 , p=28939$) based on $b_i$ and the privacy budget $\epsilon_i$. Right: the noise variances $\{\sigma_i^2\}_{i=1}^n$ in a \DPFL system with $n=20$ clients, where $\{(\epsilon_i, b_i)\}_{i=1}^n$ are randomly selected for each client. It clearly shows an approximately \emph{sparse pattern} (14 of the clients have much smaller noise variance than the other 6). Each bar plot in the right figure corresponds to a point in the left figure. 
    }
    \label{fig:var_epsilon_b}
\end{figure*}

\vspace{-0.5em}

\begin{table}[hbt!]
    \caption{Used notations (also see \cref{app:notations})}
    \vspace{0.2em}
    \begin{tabularx}{\columnwidth}{p{0.15\columnwidth}X}
    \toprule 
      $n$ & number of clients, which  are indexed by $i$\\
      $x_{ij}, y_{ij}$ & $j$-th data point of client $i$ and its label \\ 
      $\mathcal{D}_i, N_i$ & local train set of client $i$ and its size \\
      $\mathcal{B}_i^{t}$& the train data batch used by client $i$ at the $t$-th gradient update \\
      $b_i$ & batch size of client $i$: $|\mathcal{B}_i^{t}|=b_i$\\
      $q_i$ & batch size ratio of client $i$: $q_i = \frac{b_i}{N_i}$\\
      $\epsilon_i, \delta_i$ & client $i$'s desired \DP privacy parameters \\
      $E$ & total number of global communication rounds in the \DPFL system, indexed by $e$\\
      $\mathcal{S}^e$ & set of participating clients in round $e$\\
      $\thetav^e$ & global model parameter, which has size $p$, at the beginning of global round $e$ \\
      $K_i$ & number of local train epochs performed by client $i$ during each global round $e$\\
      $E_i$ & number of batch gradient updates of client $i$ during each global round $e$: $E_i = K_i \cdot \lceil \frac{N_i}{b_i} \rceil$\\
      $h$ & predictor function, e.g., CNN model, with parameter $\thetav$ \\
      $\ell$ & cross entropy loss\\
      $\sigma_{i, \Tilde{g}}^2$ & variance of the noisy stochastic batch gradient $\Tilde{g}_i(\thetav)$ of client $i$ \\
      $\sigma_{i}^2$ & conditional variance of the noisy model update $\Delta \Tilde{\thetav}_i^e$ of client $i$: $\texttt{Var}(\Delta \Tilde{\thetav}_i^e|\thetav^e)$ \\
      
      \bottomrule
     \end{tabularx}
     \label{tab:notations}
\end{table}

\subsection{Noise level in clients' \DP batch gradients}\label{sec:batch_grad_noise}
We consider two cases, which are easier to analyze. Our analysis gives us an understanding of the parameters affecting the noise level in clients' batch gradients. Depending on the value of the used clipping threshold $c$ at the $t$-th gradient update step, we consider two general indicative cases:

\vspace{-0.5em}
\paragraph{1. Effective clipping threshold for all samples:}
in this case, from \cref{eq:noisy_sg}, we have:
\begin{align}
    \mathbb E[\Tilde{g}_i(\thetav)] = \frac{1}{b_i}\sum_{j \in \mathcal{B}_i^t} \mathbb E[\bar{g}_{ij}(\thetav)] = \frac{1}{b_i}\sum_{j \in \mathcal{B}_i^t} G_i(\thetav) = G_i(\thetav),
\label{expectation_gtilde}
\end{align}
where the expectation is with respect to the stochasticity of gradients and we have assumed that $E[\bar{g}_{ij}(\thetav)]$ is the same for all $j$ and is denoted by $G_i(\thetav)$. Now, for an arbitrary random variable $\mathbf{v} = (v_1, \ldots, v_p)^\top \in \mathbb{R}^{p\times 1}$, we define $\texttt{Var}(\mathbf{v}):= \sum_{j=1}^p \mathbb E[(v_j - \mathbb E[v_j])^2]$, i.e., variance of $\mathbf{v}$ is the sum of the variances of its elements. Then, the variance of the noisy stochastic gradient in \Cref{eq:noisy_sg}, which is also random, can be computed as (see \Cref{app:variance_derivation}):
\begin{align}
    \sigma_{i, \Tilde{g}}^2 &:= \texttt{Var}(\Tilde{g}_i(\thetav)) \nonumber \\
    & =  \frac{c^2 - \big\| G_i(\thetav)\big\|^2}{b_i} + \frac{p c^2 z^2(\epsilon_i, \delta_i, q_i, K_i, E)}{b_i^2} \nonumber \\
    & \approx \frac{p c^2 z^2(\epsilon_i, \delta_i, q_i, K_i, E)}{b_i^2},
\label{eq:var_g_effective}
\end{align}
where, the estimation is valid because $p\gg1$. For instance, $p\approx 2\times 10^7$ for ResNet-34 for CIFAR100, and $c=3$.

\paragraph{2. Ineffective clipping threshold for all samples:}
in this case, we have a noisy version of the batch gradient $g_i(\thetav) = \frac{1}{b_i} \sum_{j \in \mathcal{B}_i^t} g_{ij}(\thetav)$, which is unbiased with variance bounded by $\sigma_{i, g}^2$ (see Assumption \ref{assump:lipschitz_smooth_bounded}). Hence:
\begin{align}
    \mathbb E[\Tilde{g}_i(\thetav)] = \mathbb E[g_i(\thetav)] = \nabla f_i(\thetav),
\end{align}
\begin{align}
    \sigma_{i, \Tilde{g}}^2 &= \texttt{Var}(\Tilde{g}_i(\thetav)) = \texttt{Var}(g_i(\thetav)) + \frac{p \sigma_{i, \texttt{\DP}}^2}{b_i^2} \nonumber \\
    &\leq \sigma_{i, g}^2 + \frac{p c^2 z^2(\epsilon_i, \delta_i, q_i, K_i, E)}{b_i^2}.
\label{var_g_ineffective}
\end{align}
$z$ is a sub-linearly increasing function of $q_i$ (and equivalently $b_i$: see \Cref{thm:localdp} and \Cref{fig:zvsq} in the appendix). It is also clear that $z$ is a decreasing function of $\epsilon_i$ and $\delta_i$. Hence, \emph{$\sigma_{i, \Tilde{g}}^2$ is a decreasing function of $b_i$ (batch size), $N_i$ (dataset size) and $\epsilon_i$, and also an increasing function of $q_i$ (batch size ratio)}.

\vspace{-0.5em}
\subsection{Noise level in clients' \DP model updates}\label{sec:noisy_updates}
Having found the parameters affecting $\sigma_{i, \Tilde{g}}^2$, we now investigate the parameters affecting the noise level in clients' model updates. During each global communication round $e$, a participating client $i$ performs $E_i = K_i \cdot \lceil \frac{N_i}{b_i} \rceil = K_i \cdot \lceil \frac{1}{q_i} \rceil$ batch gradient updates locally with step size $\eta_l$:
\begin{align}
    &\Delta \Tilde{\thetav}_i^e =  \thetav_{i, E_i}^e - \thetav_{i, 0}^e,\nonumber \\
    &\thetav_{i, k}^e = \thetav_{i, k-1}^e -\eta_l \Tilde{g}_i(\thetav_{i, k-1}^e), ~ k=1, \ldots, E_i,
\end{align}
where $\thetav_{i, 0}^e = \thetav^e$. In each update, it adds a Gaussian noise from $\mathcal{N}(\mathbf{0}, \frac{c^2 z^2(\epsilon_i, \delta_i, q_i, K_i, E)}{b_i^2}\mathbb{I}_p)$ to its batch gradients independently (see \Cref{eq:noisy_sg}). Hence:
\begin{align}\label{eq:sigma_i^2}
    \sigma_i^2 := \texttt{Var}(\Delta \Tilde{\thetav}_i^e|\thetav^e)
    & = K_i \cdot \lceil \frac{1}{q_i} \rceil \cdot \eta_l^2 \cdot \sigma_{i, \Tilde{g}}^2,
\end{align}
where $\sigma_{i, \Tilde{g}}^2$ was computed in \Cref{eq:var_g_effective} and \Cref{var_g_ineffective} for two general indicative cases. This means that $\sigma_i^2$ heavily depends on $b_i$ (e.g., when clipping is effective, $b_i$ appears with power 3 in denominator. Recall $\frac{1}{q_i} = \frac{N_i}{b_i}$). Hence, $\sigma_i^2$ decreases quickly when $b_i$ increases. Similarly, $\sigma_i^2$ is a non-linearly decreasing function of $\epsilon_i$ (see \Cref{fig:var_epsilon_b}, left). However, note that $N_i$ and $q_i$ appear twice in \cref{eq:sigma_i^2} with opposing effects. This makes the variation of $\sigma_i^2$ with $N_i$ and $q_i$ small (explained in details in \cref{app:uuh}). An important message of these important understandings is that \emph{$\epsilon_i$ is not the only parameter of client $i$ that determines $\sigma_i^2$}.

\subsection{Optimum aggregation strategy} 
Assuming the set of participating clients $\mathcal{S}^e$ in round $e$, we have to solve the following problem to minimize the total noise after the aggregation at the end of this round:

\begin{align}
    \min_{w_i \geq 0} \quad &\texttt{Var}\big(\sum_{i \in \mathcal{S}^e} w_i \Delta \Tilde{\thetav}_i^e\big | \thetav^e \big)  = \sum_{i \in \mathcal{S}^e} {w_i}^2 \sigma_i^2,\nonumber \\
    &\texttt{s.t.} \sum_{i \in \mathcal{S}^e} w_i =1,
    \label{eq:w_opt}
\end{align}
which has a unique solution $w_i^*\propto \frac{1}{\sigma_i^2}$. Hence, the optimum aggregation strategy weights clients directly based on $\{\sigma_i^2\}_{i=1}^n$, which as shown,\emph{ not only depends on $\{\epsilon_i\}_{i=1}^n$ non-linearly, but it also depends on $\{b_i\}_{i=1}^n$ and $\{N_i\}_{i=1}^n$}. This point makes the aggregation strategy $w_i \propto \epsilon_i$ of \algname{PFA} and \algname{WeiAvg} algorithms \cite{Liu2021ProjectedFA} suboptimal, let alone its vulnerability to a client $i$ sharing a falsified $\epsilon'_i>\epsilon_i$ with the server to either attack the system, to get a larger aggregation weight, or to get a larger payment from a  server which incentivizes participation by payment to clients \cite{modelsharinggames, karimreddy_datasharing, fallah2023optimal, fairvalueofdata} (as a larger $\epsilon_i$ means a more exploitable data from client $i$). The same vulnerability discussion applies to the clustering of clients based on their shared privacy parameter $\epsilon$ (used in \algname{PFA}). Having these shortcomings of the existing algorithms as a motivation, how can we implement the optimum aggregation strategy when the untrusted server does not have any idea of the clients noise addition mechanisms and $\{\sigma_i^2\}_{i=1}^n$? We next propose our idea for estimating $\{\sigma_i^2\}_{i=1}^n$ and $\{w_i^*\}_{i=1}^n$.

\vspace{-0.5em}
\subsection{Description of \algname{Robust-HDP} algorithm}\label{sec:robust_hdp}
Assuming a \DPFL system with $n$ clients and full participation of clients for simplicity, at the end of each global round $e$, the server gets the matrix $\mathbf{M} := [\Delta \Tilde{\thetav}_1^e|\ldots|\Delta \Tilde{\thetav}_n^e]$. Assuming an \emph{i.i.d} or moderately heterogeneous data split, and based on the findings in \cite{GurAri2018GradientDH, projecteddpsgd}, we would expect $\mathbf{M}$ to have a low rank if there was no \DP/stochastic noise in $\{\Delta \Tilde{\thetav}_i^e\}_{i=1}^n$. So we can think about writing $\mathbf{M}$ as the summation of an underlying low-rank matrix $\mathbf{L}$ and a noise matrix $\mathbf{S}$:
\begin{align*}
    \vspace{-0.5em}
    \mathbf{M} = [\Delta \Tilde{\thetav}_1^e|\ldots|\Delta \Tilde{\thetav}_n^e] = \mathbf{L} + \mathbf{S}.
    \vspace{-1em}
\end{align*}
If the matrix $\mathbf{S}$ is sparse, not only can such a decomposition problem be solved using RPCA, it can be solved by a very convenient convex optimization program called \emph{Principal Component Pursuit} (Algorithm \ref{alg:RPCA} in the appendix) without imposing much computational overhead to the server \cite{Candes2009RobustPC}. Surprisingly, the entries in $\mathbf{S}$ can have arbitrarily large magnitudes. Theoretically, this is guaranteed to work even if $\textit{rank}(\mathbf{L})\in \mathcal{O}(n/(\log p)^2)$, i.e., the rank of $\mathbf{L}$ grows almost linearly in $n$ (see Theorem 1.1 in \cite{Candes2009RobustPC}). Hence, we expect to be able to do such a decomposition as long as we have a moderately heterogeneous data distribution across a large enough number of clients (also, see \cref{app:future_dirs} for detailed discussion on data heterogeneity, further experiments, and future directions). Hence, $\mathbf{L}$ will be a low rank matrix, estimating the ``true'' values of clients' updates and $\mathbf{S}$ will capture the noises in clients model updates $\{\Delta \Tilde{\thetav}_i^e\}_{i=1}^n$ induced by two sources: \DP additive Gaussian noise and batch gradients stochastic noise. Therefore, we can use $\hat{\sigma}_i^2:=\|\mathbf{S}_{:,i}\|_2^2$ ($\mathbf{S}_{:,i}$ is the $i$-th column of $\mathbf{S}$, corresponding to client $i$) as an estimate of $\sigma_i^2$ (\cref{eq:sigma_i^2}). Indeed, we observed such approximately sparse pattern for $\mathbf{S}$ in \cref{fig:var_epsilon_b} (right), where each barplot corresponds to the $\ell_2$ norm of one column of $\mathbf{S}$. Thus, according to \cref{eq:w_opt}, we assign the aggregation weights as $w_i^e = \frac{{1}/{\hat{\sigma}_i^2}}{\sum_{j\in \mathcal{S}^e} {1}/{\hat{\sigma}_j^2}}$, where $\hat{\sigma}_i^2=\|\mathbf{S}_{:,i}\|^2$ (see \cref{alg:Robusthdp}). \emph{Interestingly, this estimation is independent of clients' shared $\epsilon$ parameter values, which makes our \algname{Robust-HDP} optimal, robust and vastly applicable.}

\begin{algorithm}[t]
\caption{\algname{Robust-HDP}}
\label{alg:Robusthdp}
\KwIn{Initial parameter $\thetav^0$, batch sizes $\{b_1, \ldots, b_n\}$, dataset sizes $\{N_1, \ldots, N_n\}$, noise scales $\{z_1, \ldots, z_n\}$, gradient norm bound $c$, local epochs $\{K_1, \ldots, K_n\}$, global round $E$, number of model parameters $p$, privacy accountant \textbf{\algname{PA}}.}

\KwOut{$\thetav^E, \{\epsilon_1^E, \ldots, \epsilon_n^E\}$}


\textbf{Initialize} $\thetav_0$ randomly

\For{$e\in [E]$}
{
sample a set of clients $\mathcal{S}^e \subseteq \{1, \ldots, n\} $

\For{each client $\userind \in \mathcal{S}^e$ \textbf{in parallel}}{
  $\Delta \Tilde{\thetav}_i^e \gets$\textbf{\algname{DPSGD}($\thetav^e, b_i, N_i, K_i, z_i, c$)}
  
  $\epsilon_i^e \gets \textbf{\algname{PA}}(\frac{b_i}{N_i}, z_i, K_i, e)$
  }
  $\mathbf{M} = [\Delta \Tilde{\thetav}_1^e|\ldots|\Delta \Tilde{\thetav}_{|\mathcal{S}^e|}^e] \in \mathbb R^{p\times|\mathcal{S}^e|}$
  
  $\mathbf{L}, \mathbf{S}$ = \textbf{\algname{RPCA}}($\mathbf{M}$)
  
\For{$i \in \mathcal{S}^e$}
{
    $w_i^e \gets \frac{{1}/{\|\mathbf{S}_{:,i}\|_2^2}}{\sum_{j \in \mathcal{S}^e} {1}/{\|\mathbf{S}_{:,j}\|_2^2}}$
}
{$\thetav^{e+1} \gets \thetav^e + \sum_{i \in \mathcal{S}^e} w_i^e \Delta \Tilde{\thetav}_i^e$}
}
\end{algorithm}
\setlength{\textfloatsep}{10pt}

\subsection{Reliability of \algname{Robust-HDP}}
\label{sec:reliability_rpdp} In order for \algname{Robust-HDP} to assign the optimum aggregation weights $\{w_i^*\}$, \emph{it suffices to estimate the set $\{\sigma_i^2\}$ up to a multiplicative factor}. Assuming participants $\mathcal{S}^e$ in round $e$, let $s_{i,j}$ in matrix $\bf S$ represent the true value of noise in the $i$-th element of $\Delta \Tilde{\thetav}_j^e$ ($j \in \mathcal{S}^e$). Then, assume that $\bf S'$ is the matrix computed by \algname{Robust-HDP} at the server with bounded elements $s'^2_{i,j} \leq U$, where $\mathbb E[s'_{i,j}] = r s_{i,j}$, for some constant $r>0$, and $\mathbb E[|s'_{i,j} - r s_{i,j}|^2] \leq \alpha_j^2$ (i.e., on average, \algname{Robust-HDP} is able to estimate the true noise values $s_{i,j}$ up to a multiplicative factor $r$ by using RPCA). Then, from Hoeffding's inequality, we have:
\begin{align}
    \texttt{Pr}(|\hat{\sigma}_j^2 - (r^2 \sigma_j^2 + \alpha_j^2)|>\epsilon) \leq 2e^{\frac{-2p\epsilon^2}{U^2}},
\end{align}

meaning that estimating the entries of $\bf S$ up to a multiplicative factor $r$ with a small variance is enough for \algname{Robust-HDP} to estimate $\{\sigma_i^2\}$ up to a multiplicative factor $r^2$ with high probability. This probability increases with the number of model parameters $p$ exponentially: the $p$ noise elements of $\mathbf{S}_{:,i}$ are \emph{i.i.d}, and larger $p$ means having more samples from the same distribution to estimate its variance (see also Theorem 1.1 in \cite{Candes2009RobustPC}). Also, $w_j \propto \frac{1}{\hat{\sigma}_j^2} \approx \frac{1}{r^2\sigma_j^2 + \alpha_j^2}$. Hence, as $\sigma_j^2 \gg 1$ (it is the noise variance in the whole model update $\Delta \Tilde{\thetav}_j^e$. See the values in \cref{fig:var_epsilon_b}, right), a small deviation $\alpha_j^2$ from $r^2 \sigma_j^2$ still results in aggregation weights close to the optimum weights $\{w_i^*\}$.

\subsection{Scalability of \algname{Robust-HDP} with the number of model parameters \texorpdfstring{$p$}{Lg}}\label{sec:scalability}
The computation time (precision) of RPCA algorithm increases (decreases) when the number of model parameters $p$ grows. As such, in order to make the \algname{Robust-HDP} scalable for large models, we perform the noise estimation of \texorpdfstring{\algname{Robust-HDP}}{Lg} on sub-matrices of $\mathbf{M}$ with smaller rows:
\begin{align}
    &\mathbf{M}_1 = \mathbf{M}[0:p'-1,:]=\mathbf{L}_1 + \mathbf{S}_1 \nonumber\\
    &\mathbf{M}_2=\mathbf{M}[p':2p'-1,:]=\mathbf{L}_2 + \mathbf{S}_2 \nonumber\\
    &\dots \nonumber\\
    &\mathbf{M}_Q=\mathbf{M}[p-p':p-1,:]=\mathbf{L}_Q + \mathbf{S}_Q, \nonumber
\end{align}
where $Q=\floor{\frac{p}{p'}}$. Then, we get a set of noise variance estimates $\{Q \cdot \hat{\sigma}_i^2\}_{i=1}^n$ from each $\mathbf{S}_j, j \in \{1, \ldots, Q\}$. Finally, we use the sets' element-wise average for weight assignment. For instance, for CIFAR10 and CIFAR100, we perform RPCA on sub-matrcies of $\mathbf{M}$ with $p'=200,000$ rows, and average their noise variance estimates. Our experimental results show that this approach, even with $Q=1$ (i.e., using just $\textbf{M}_1$), still results in assigning aggregation weights close to the optimum weights $\{w_i^*\}$. This idea makes \algname{Robust-HDP} scalable to large models with large $p$.

\subsection{Privacy analysis of \algname{Robust-HDP}}
We have the following theorem about \DP guarantees of our proposed \algname{Robust-HDP} algorithm.

\begin{restatable}{theorem}{localdp}
For each client $i$ , there exist constants $c_1$ and $c_2$ such that given its number of steps $E \cdot E_i$, for any $\epsilon < c_1 q_i^2 E \cdot E_i$, the output model of \algname{Robust-HDP} satisfies  $(\epsilon_i, \delta_i)-$\DP with respect to $\mathcal{D}_i$ for any $\delta_i>0$ if $z_i > c_2 \frac{q_i \sqrt{E \cdot E_i \cdot \log\frac{1}{\delta_i}}}{\epsilon_i}$, where $z_i$ is the noise scale used by the client $i$ for \algname{DPSGD}. The algorithm also satisfies $(\epsilon_{\texttt{max}}, \delta_{\texttt{max}})$-\DP, where $(\epsilon_{\texttt{max}}, \delta_{\texttt{max}}) = \big(\max(\{\epsilon_i\}_{i=1}^n), \max(\{\delta_i\}_{i=1}^n)\big)$. 
\label{thm:localdp}
\end{restatable}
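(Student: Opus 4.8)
The plan is to reduce the first claim directly to the moments accountant analysis of \algname{DPSGD} (Theorem~1 of \cite{Abadi2016}), and the second claim to the first one together with the post-processing property and the record-level notion of adjacency. The key observation is that, from the point of view of the untrusted server (the only adversary, per \cref{def:epsilondeltadp} and \cref{fig:security_model}), everything it ever sees that depends on $\mathcal{D}_i$ is a function of the sequence of noisy model updates $\{\Delta\Tilde{\thetav}_i^e : e\in[E],\, i\in\mathcal{S}^e\}$ produced by client $i$: the matrices $\mathbf{M}$, the \algname{RPCA} outputs $\mathbf{L},\mathbf{S}$, the weights $w_i^e$, the global iterates $\thetav^e$, and the reported $\epsilon_i^e$ are all obtained by post-processing these messages together with the messages of the other clients (which depend only on $\{\mathcal{D}_j\}_{j\neq i}$, disjoint from $\mathcal{D}_i$ under record-level adjacency) and independent randomness. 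Hence it suffices to bound the privacy loss of the map $\mathcal{D}_i \mapsto \{\Delta\Tilde{\thetav}_i^e\}$.

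First I would unpack that map. Within a participating round $e$, client $i$ performs $E_i = K_i\lceil 1/q_i\rceil$ gradient steps, and $\Delta\Tilde{\thetav}_i^e$ is a deterministic function of $\thetav^e$ and the $E_i$ noisy batch gradients $\Tilde{g}_i(\cdot)$ of \cref{eq:noisy_sg}; across the (at most) $E$ rounds in which $i$ participates this amounts to at most $E\cdot E_i$ noisy gradient evaluations in total, and partial participation only decreases this count. Each such evaluation draws a batch with sampling ratio $q_i = b_i/N_i$, clips per-example gradients to $\ell_2$-norm $c$ so that the sum $\sum_{j\in\mathcal{B}_i^t}\bar g_{ij}$ has $\ell_2$-sensitivity $c$ with respect to one record of $\mathcal{D}_i$, and adds independent noise $\mathcal{N}(\mathbf{0},\sigma_{i,\texttt{DP}}^2\mathbb{I}_p)$ with $\sigma_{i,\texttt{DP}} = c\,z_i$, i.e.\ with noise-to-sensitivity ratio exactly $z_i$. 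This is precisely the iterated subsampled Gaussian mechanism analyzed by the moments accountant; its composition is adaptive, so the fact that the batch at step $t$ and the starting point $\thetav^e$ depend on earlier (post-processed) outputs is harmless.

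Then I would invoke Theorem~1 of \cite{Abadi2016}: there are absolute constants $c_1,c_2$ such that, for the subsampled Gaussian mechanism run for $T$ steps with ratio $q$, for every $\epsilon < c_1 q^2 T$ the composed mechanism is $(\epsilon,\delta)$-\DP for any $\delta>0$ provided $z \ge c_2\, q\sqrt{T\log(1/\delta)}/\epsilon$. Specializing $T = E\cdot E_i$, $q = q_i$, and $(\epsilon,\delta) = (\epsilon_i,\delta_i)$ gives that $\mathcal{D}_i\mapsto\{\Delta\Tilde{\thetav}_i^e\}$ is $(\epsilon_i,\delta_i)$-\DP whenever $\epsilon_i < c_1 q_i^2 E\cdot E_i$ and $z_i > c_2\, q_i\sqrt{E\cdot E_i\,\log(1/\delta_i)}/\epsilon_i$; post-processing upgrades this to the whole output of \algname{Robust-HDP}, including the released $\epsilon_i^e$, which are computed only from the public quantities $q_i,z_i,K_i,e$ and hence leak nothing about $\mathcal{D}_i$. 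For the final sentence, note that any two record-level adjacent federated datasets $d,d'$ differ in the data of a single client, say $i_0$; applying the per-client guarantee to $i_0$ shows the output is $(\epsilon_{i_0},\delta_{i_0})$-\DP, and since $(\epsilon,\delta)$-\DP implies $(\epsilon',\delta')$-\DP for all $\epsilon'\ge\epsilon$, $\delta'\ge\delta$, it is in particular $(\epsilon_{\texttt{max}},\delta_{\texttt{max}})$-\DP; taking the worst case over which client plays the role of $i_0$ yields the uniform bound.

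The only real subtlety — and the step I would be most careful about — is the reduction in the first paragraph: arguing cleanly that the server's entire transcript is a post-processing of client $i$'s messages plus information independent of $\mathcal{D}_i$, so that aggregation, \algname{RPCA}, and the adaptive choice of iterates cost nothing extra in privacy. Everything after that is a black-box application of the moments accountant together with elementary monotonicity of $(\epsilon,\delta)$-\DP.
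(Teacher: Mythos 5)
Your proposal is correct and follows essentially the same route as the paper's proof: the per-client guarantee is obtained by treating client $i$'s local computation as the subsampled Gaussian mechanism of \algname{DPSGD} and invoking the moments accountant of \cite{Abadi2016}, with the server-side aggregation and \algname{RPCA} handled as post-processing. Your derivation of the $(\epsilon_{\texttt{max}},\delta_{\texttt{max}})$ claim from record-level adjacency plus monotonicity of $(\epsilon,\delta)$-\DP is just an inlined version of the parallel composition property (for disjoint datasets) that the paper cites, so the two arguments coincide in substance, with yours spelling out the post-processing reduction more explicitly.
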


Therefore, the model returned by \algname{Robust-HDP} is $(\epsilon_i, \delta_i)$-\DP
with respect to $\mathcal{D}_i$, satisfying heterogeneous \DPFL.

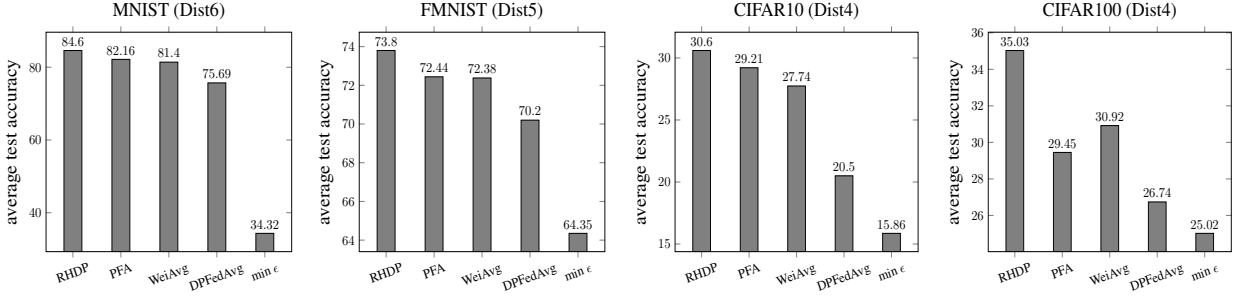
\begin{figure*}[t]
\subfigure{
\resizebox{3.9cm}{4cm}{%
\begin{tikzpicture}
\begin{axis}
[
title=\Large MNIST (Dist6),
ylabel={\Large average test accuracy},  
ybar,
nodes near coords, bar width=0.5cm,
symbolic x coords={\rotatebox{20}{RHDP}, \rotatebox{20}{PFA}, \rotatebox{20}{WeiAvg}, \rotatebox{20}{DPFedAvg}, \rotatebox{20}{min $\epsilon$}},
enlarge x limits=.14]
\addplot [style={black,fill=gray}]coordinates{ (\rotatebox{20}{RHDP},84.60) (\rotatebox{20}{PFA},82.16) (\rotatebox{20}{WeiAvg}, 81.40) (\rotatebox{20}{DPFedAvg},75.69) (\rotatebox{20}{min $\epsilon$},34.32)};
\end{axis}
\end{tikzpicture}
}
}
\subfigure{
\resizebox{3.9cm}{4cm}{%
\begin{tikzpicture}
\begin{axis}
[
title=\Large FMNIST (Dist5),
ylabel={\Large average test accuracy},  
ybar,
nodes near coords, bar width=0.5cm,
symbolic x coords={\rotatebox{20}{RHDP}, \rotatebox{20}{PFA}, \rotatebox{20}{WeiAvg}, \rotatebox{20}{DPFedAvg}, \rotatebox{20}{min $\epsilon$}},
enlarge x limits=.14]
\addplot [style={black,fill=gray}] coordinates{ (\rotatebox{20}{RHDP},73.80) (\rotatebox{20}{PFA},72.44) (\rotatebox{20}{WeiAvg},72.38) (\rotatebox{20}{DPFedAvg},70.20) (\rotatebox{20}{min $\epsilon$},64.35)};
\end{axis}
\end{tikzpicture}
}
}
\subfigure{
\resizebox{3.9cm}{4cm}{
\begin{tikzpicture}
\begin{axis}
[
title=\Large CIFAR10 (Dist4),
ylabel={\Large average test accuracy},  
ybar,
nodes near coords, bar width=0.5cm,
symbolic x coords={\rotatebox{20}{RHDP}, \rotatebox{20}{PFA}, \rotatebox{20}{WeiAvg}, \rotatebox{20}{DPFedAvg}, \rotatebox{20}{min $\epsilon$}},
enlarge x limits=.14]
\addplot [style={black,fill=gray}] coordinates{ (\rotatebox{20}{RHDP},30.60) (\rotatebox{20}{PFA}, 29.21) (\rotatebox{20}{WeiAvg},27.74) (\rotatebox{20}{DPFedAvg},20.50) (\rotatebox{20}{min $\epsilon$},15.86)};
\end{axis}
\end{tikzpicture}
}
}
\subfigure{
\resizebox{3.9cm}{4cm}{
\begin{tikzpicture}
\begin{axis}
[
title=\Large CIFAR100 (Dist4),
ylabel={\Large average test accuracy},  
ybar,
nodes near coords, bar width=0.5cm,
symbolic x coords={\rotatebox{20}{RHDP}, \rotatebox{20}{PFA}, \rotatebox{20}{WeiAvg}, \rotatebox{20}{DPFedAvg}, \rotatebox{20}{min $\epsilon$}},
enlarge x limits=.14]
\addplot [style={black,fill=gray}] coordinates{ (\rotatebox{20}{RHDP},35.03) (\rotatebox{20}{PFA}, 29.45) (\rotatebox{20}{WeiAvg},30.92) (\rotatebox{20}{DPFedAvg},26.74) (\rotatebox{20}{min $\epsilon$},25.02)};
\end{axis}
\end{tikzpicture}
}
}

\vspace{-1.5em}
\caption{Comparison of average test accuracy between studied algorithms. See Tables \ref{table:mnist} to \ref{table:cifar100} in the appendix for detailed results.}
\vspace{-1em}
\label{fig:comparison}
\end{figure*}

\begin{figure*}[t]
\centering
\includegraphics[width=0.95\columnwidth,height=4.5cm]{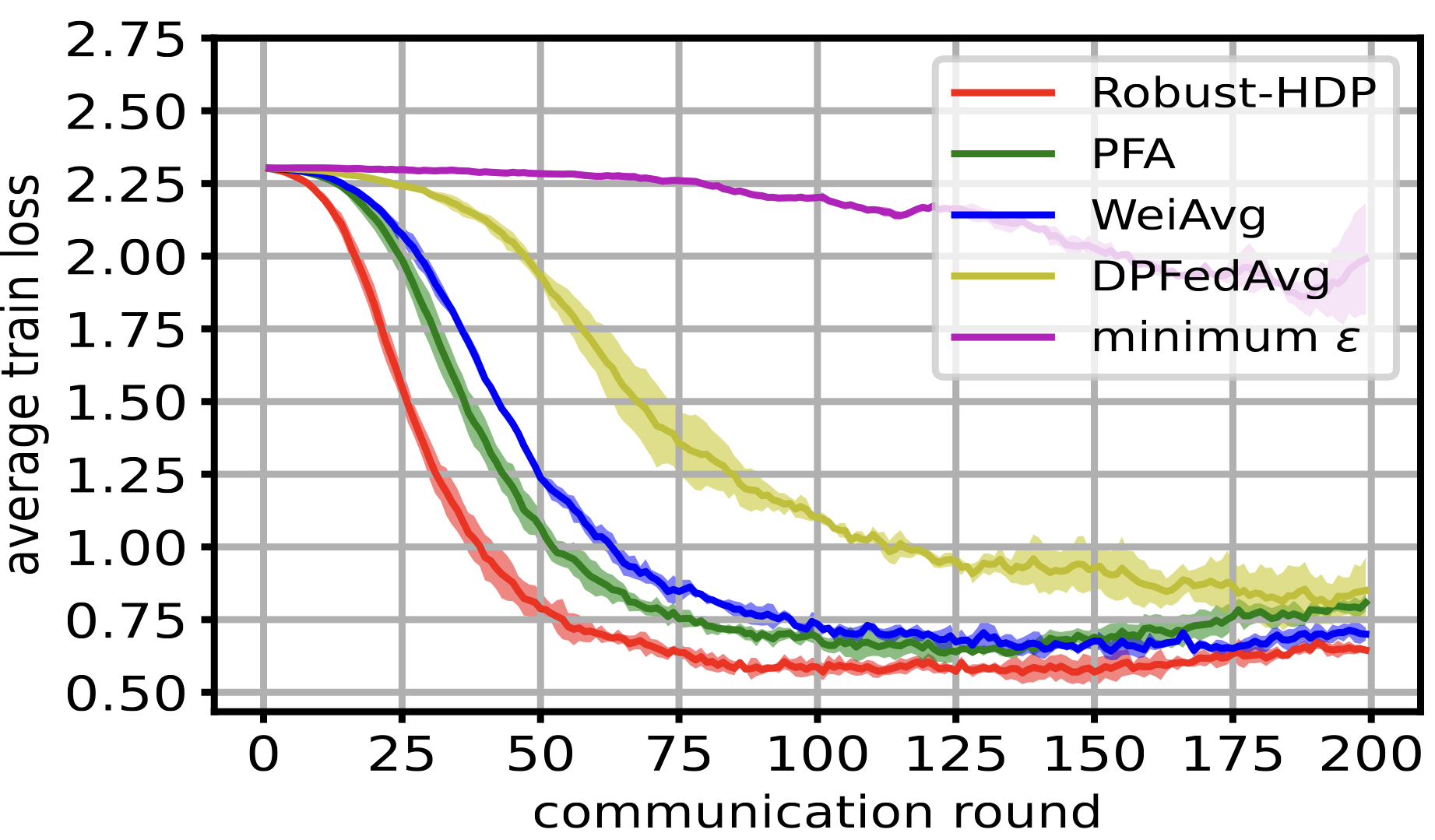}
\includegraphics[width=0.95\columnwidth, height=4.5cm]{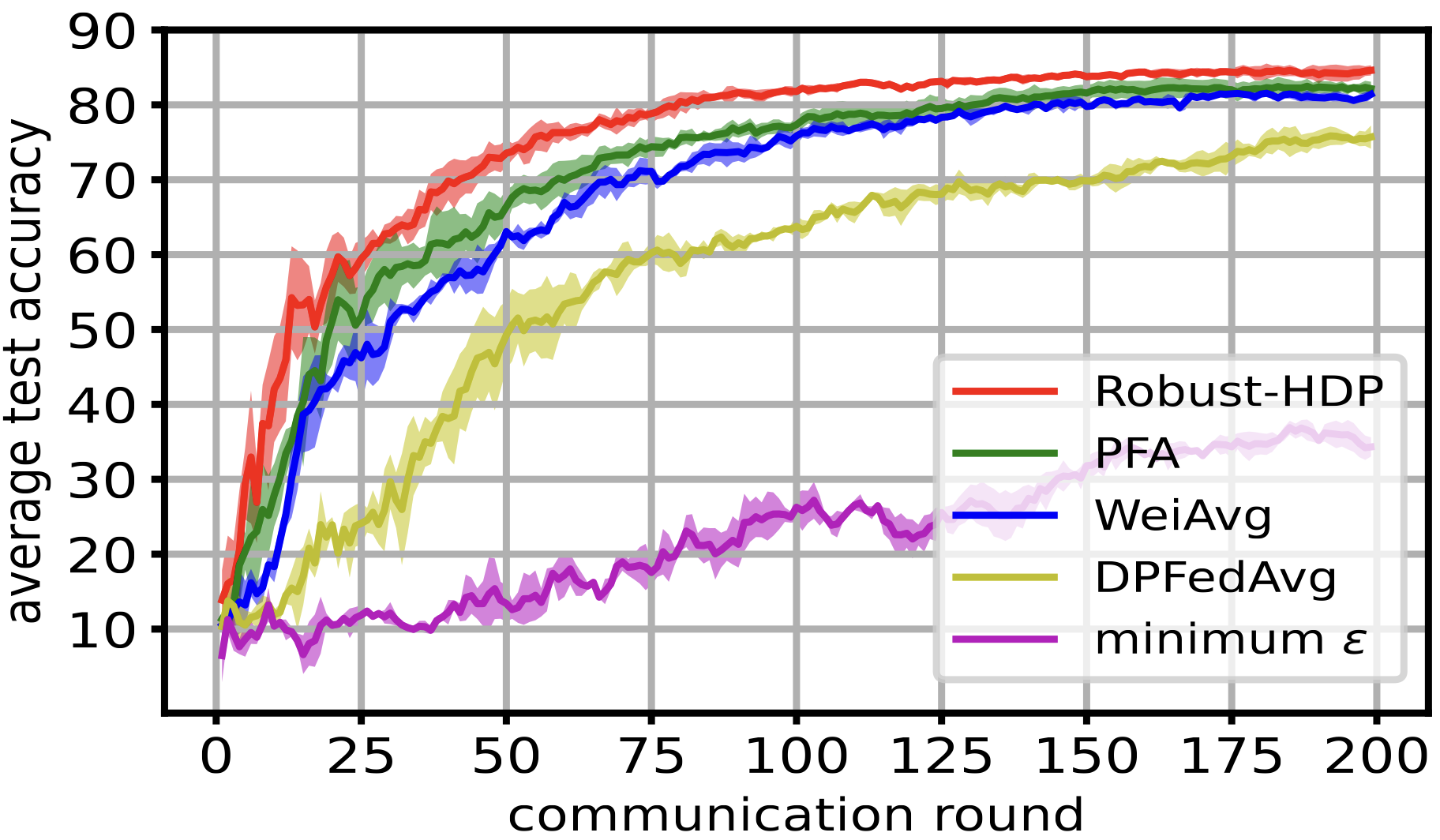}
\vspace{-1.2em}
\caption{Convergence speed comparison on MNIST and Dist6. Minimum $\epsilon$ algorithm diverged in 1 out of 3 trials.}
\vspace{-1em}
\label{fig:conv_speed_mnist_dist6}
\end{figure*}

\subsection{The optimization side of \algname{Robust-HDP}}

We assume that $f(\thetav)= \sum_{i \in [n]} \lambda_i f_i(\thetav)$, where $\lambda_i = \frac{N_i}{\sum_i N_i}$, has minimum value $f^*$ and minimizer $\thetav^*$. We also make some mild assumptions about the loss functions $f_i$ (see Assumptions \ref{assump:lipschitz_smooth_bounded} and \ref{assump:bounded_sample_grad} in the Appendix). We now analyze the convergence of the \algname{Robust-HDP} algorithm.

\begin{restatable}[\textbf{\algname{Robust-HDP}}]{theorem}{Robusthdp}\label{thm:Robusthdp}
Assume that Assumptions \ref{assump:lipschitz_smooth_bounded} and \ref{assump:bounded_sample_grad} hold, and for every $i$, learning rate $\eta_l$ satisfies: $\eta_l \leq \frac{1}{6 \beta E_i}$ and $\eta_l \leq \frac{1}{12 \beta \sqrt{(1+\sum_{i=1}^n E_i)\big(\sum_{i=1}^n E_i^4\big)}}$. Then, we have:
\begin{align}
    & \min_{0\leq e \leq E-1} \mathbb E[\|\nabla f(\thetav^e)\|^2] \nonumber \\
    &\leq \frac{12}{11E_l^{\texttt{min}} - 7} \bigg( \frac{f(\thetav^0)-f^*}{E \eta_l} +  \Psi_{\sigma} + \Psi_{\textit{p}} \bigg),
\end{align}
where $E_l^{\texttt{min}} = \min_{i} E_i$, i.e., the minimum number of local SGD steps across clients. Also, $\Psi_p$ and $\Psi_{\sigma}$ are two constants controlling the quality of the final model parameter returned by \algname{Robust-HDP}, which are explained in the following. 
\end{restatable}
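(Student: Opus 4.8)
The plan is to run a standard non-convex ``descent lemma plus telescoping'' argument on the sequence of global iterates $\{\thetav^e\}_{e=0}^{E}$, while carefully accounting for the four features that distinguish this setting: heterogeneous numbers of local steps $E_i$, partial participation through $\mathcal{S}^e$, the additive \DP noise inside every local step, and the random aggregation weights $w_i^e$. First I would apply $\beta$-smoothness of $f$ to consecutive iterates and substitute the update $\thetav^{e+1}-\thetav^e = \sum_{i\in\mathcal{S}^e} w_i^e\,\Delta\tilde\thetav_i^e$ with $\Delta\tilde\thetav_i^e = -\eta_l\sum_{k=0}^{E_i-1}\tilde g_i(\thetav_{i,k}^e)$:
\begin{align*}
f(\thetav^{e+1}) \le f(\thetav^e) + \bigl\langle \nabla f(\thetav^e),\,\thetav^{e+1}-\thetav^e\bigr\rangle + \tfrac{\beta}{2}\bigl\|\thetav^{e+1}-\thetav^e\bigr\|^2 .
\end{align*}
I would then split each noisy gradient as $\tilde g_i(\thetav_{i,k}^e) = \hat g_i(\thetav_{i,k}^e) + \xi_{i,k}^e$, where $\hat g_i$ is the clipped stochastic batch gradient and $\xi_{i,k}^e\sim\mathcal N(\mathbf 0,\tfrac{c^2 z_i^2}{b_i^2}\mathbb I_p)$ is the \DP noise, which is zero-mean and independent across $(i,k,e)$. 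Working in the regime where the clipping threshold is inactive (\cref{assump:bounded_sample_grad}), so $\hat g_i$ is unbiased for $\nabla f_i$ with variance bounded as in \cref{var_g_ineffective}, the \DP noise drops out of the inner-product term in conditional expectation and contributes to the quadratic term a variance $\propto \eta_l^2\sum_{i\in\mathcal{S}^e}(w_i^e)^2 E_i\,\tfrac{p c^2 z_i^2}{b_i^2}$ — the dimension-dependent piece that becomes $\Psi_p$ — while the stochastic-gradient variance $\sigma_{i,g}^2$ feeds $\Psi_\sigma$.

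The second step is to control the client drift $\thetav_{i,k}^e-\thetav^e$: using smoothness, the bounded-gradient assumption, and the constraint $\eta_l\le\frac{1}{6\beta E_i}$, one obtains a recursion for $\sum_k\mathbb E\|\thetav_{i,k}^e-\thetav^e\|^2$ that closes and yields an $O(\eta_l^2 E_i^2)$ bound, letting me replace $\nabla f_i(\thetav_{i,k}^e)$ by $\nabla f_i(\thetav^e)$ and hence $\mathbb E[\Delta\tilde\thetav_i^e\mid\thetav^e]$ by $-\eta_l E_i\nabla f_i(\thetav^e)$ up to a quadratically small error. The second, fourth-moment-looking step-size condition $\eta_l\le\frac{1}{12\beta\sqrt{(1+\sum_i E_i)(\sum_i E_i^4)}}$ is exactly what makes the drift accumulated across clients with different $E_i$ negligible relative to the main $\|\nabla f(\thetav^e)\|^2$ term.

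Third, taking expectations over client sampling and gradient stochasticity, I would write $\sum_{i\in\mathcal{S}^e}w_i^e E_i\nabla f_i(\thetav^e)$ as a strictly positive multiple of $\nabla f(\thetav^e)=\sum_i\lambda_i\nabla f_i(\thetav^e)$ plus a bias term from the mismatch between the (renormalised) weights $\{w_i^e\}$ and the population weights $\{\lambda_i\}$; Young's inequality converts the resulting cross term into a negative multiple of $\eta_l E_l^{\texttt{min}}\|\nabla f(\thetav^e)\|^2$ (the origin of the $11E_l^{\texttt{min}}-7$ factor) plus error terms absorbed into $\Psi_\sigma$ and $\Psi_p$. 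Summing over $e=0,\dots,E-1$, telescoping $f(\thetav^0)-f(\thetav^E)\le f(\thetav^0)-f^*$, dividing by the appropriate multiple of $E\eta_l E_l^{\texttt{min}}$, and lower-bounding the average gradient norm by the minimum gives the stated inequality, with the constant $12/(11E_l^{\texttt{min}}-7)$ emerging from the bookkeeping of the step-size constraints.

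The main obstacle is the Step-3 handling of the aggregation weights $w_i^e$: they are produced by RPCA \emph{from the very noisy updates whose noise also enters the variance term}, so they are random and statistically correlated with $\xi_{i,k}^e$ and $\hat g_i$, and they are renormalised over the random set $\mathcal{S}^e$. The cleanest route is to carry out the whole argument conditionally on the weight sequence (so the expectations above are over noise and sampling given $\{w_i^e\}$), bound $\sum_i(w_i^e)^2\sigma_i^2$ crudely via $w_i^e\in[0,1]$, and invoke the concentration estimate of \cref{sec:reliability_rpdp} — that $w_i^e$ is close to the optimal $w_i^*\propto1/\sigma_i^2$ — together with the bounded-gradient assumption to keep the weight/population bias term finite and lumped into $\Psi_p$. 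The rest is routine.
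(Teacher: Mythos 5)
Your plan follows essentially the same route as the paper's proof: a $\beta$-smoothness descent step on $\thetav^{e+1}-\thetav^e=-\eta_l\sum_i w_i^e\sum_{k}\Tilde{g}_i(\thetav_{i,k}^e)$, unbiased-noise-plus-bounded-variance treatment of the \DP-perturbed gradients (clipping inactive via Assumption \ref{assump:bounded_sample_grad}), a drift recursion closed by $\eta_l\le\frac{1}{6\beta E_i}$, a decomposition of the weight mismatch through $\Delta_i^e=w_i^e-\lambda_i$, absorption of the drift-times-gradient term via the second step-size condition, and telescoping with $\Bar{E}_l^e$ replaced by $E_l^{\texttt{min}}$. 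Two remarks on where you diverge from the paper's bookkeeping. First, you assign the dimension-dependent \DP variance $\frac{pc^2z_i^2}{b_i^2}$ to $\Psi_{\textit{p}}$; in the paper that term lives inside $\sigma_{i,\Tilde{g}}^2$ and hence in $\Psi_{\sigma}$, while $\Psi_{\textit{p}}$ collects the participation/heterogeneity terms $n\sum_i E_i^2\,\Eb[(w_i^e-\lambda_i)^2]+\|\lambdav\|^2\sum_i\Eb[(E_i-\Bar{E}_l^e)^2]$ coming from your Step-3 mismatch analysis — a labeling difference only, but it matters for matching the theorem's discussion of what each constant controls. Second, on the correlation between the RPCA-derived weights and the noise: the paper does not condition on the weight sequence or invoke the concentration argument of \cref{sec:reliability_rpdp}; it simply keeps $w_i^e$ inside the expectations, uses $\sum_i (w_i^e)^2\le 1$ (your ``crude'' bound) for the variance term $\mathcal{A}$, and leaves $\Eb[(w_i^e-\lambda_i)^2]$ unevaluated inside $\Psi_{\textit{p}}$, so your fallback is exactly what the paper does; your proposed conditioning on $\{w_i^e\}$ would actually be delicate, since the weights are functions of the same round's noise and the \DP noise is no longer zero-mean given them, so it is better to follow the paper and not condition.
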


\vspace{-1em}
\paragraph{Discussion.} Our convergence guarantees are quite general: we allow for partial participation, heterogeneous number of local steps $\{E_i\}$, non-uniform batch sizes $\{b_i\}$, varying and nonuniform aggregation weights $\{w_i^e\}$. When $\{f_i\}$ are convex, \algname{Robsut-HDP} solution converges to a neighborhood of the optimal solution. The term $\Psi_{\sigma}$ decreases when data split across clients is more \emph{i.i.d}, and variance of mini-batch gradients $\{\sigma_{i, \Tilde{g}}^2\}$ decrease (e.g., when clients are less privacy sensitive). Similarly, $\Psi_{p}$ decreases when clients participate more often, and the set of local steps $\{E_i\}$ is more uniform (e.g., clients have similar dataset sizes and batch sizes). Also, smaller local steps $\{E_i\}$, which can be achieved by having smaller local epochs $\{K_i\}$ and larger batch sizes $\{b_i\}$, result in reduction of both $\Psi_p$ and $\Psi_{\sigma}$, and higher quality solutions \cite{malekmohammadi2021operator}. Compared to the results in previous \DPFL works, we have the most general results with more realistic assumptions. For instance, \cite{Liu2021ProjectedFA} (\algname{WeiAvg} and \algname{PFA}) assumes uniform number of local \algname{SGD} updates for all clients, or \cite{DPSCAFFOLD2022} (\algname{DPFedAvg}) assumes uniform aggregation weights and uniform number of local updates. These assumptions may not be practical in real systems. In a more general view, when we have no \DP guarantees, we recover the results for the simple FedAvg algorithm \cite{zhang2023proportional}. When we additionally have $\sigma = 0$ (i.e., FedAvg on \emph{i.i.d} data), our results are the same as those of \algname{SGD} \cite{ghadimi2013stochastic}:
\begin{align}
    \min_{e} \mathbb E[\|\nabla f(\thetav^e)\|^2] &\leq \frac{12}{11E_l^{\texttt{min}} - 7} \frac{f(\thetav^0)-f^*}{E \eta_l} + \mathcal{O}(\eta_l)\nonumber,
\end{align}
which shows convergence rate $\frac{1}{\sqrt{E}}$ with $\eta_l = \mathcal{O}(\frac{1}{\sqrt{E}})$.

\section{Experiments}\label{sec:exps}
See \cref{app:exp_setup} for details of experimental setup and hyperparameter tuning used for evaluation of algorithms.

\subsection{Experimental Setup}
\paragraph{Datasets, models and baseline algorithms:}
We evaluate our proposed method on four benchamrk datasets: MNIST \cite{mnist}, FMNIST \cite{fmnist} and CIFAR10/100 \cite{cifar10} using CNN-based models. Also, we compare four baseline algorithms: 1. \algname{WeiAvg}\cite{Liu2021ProjectedFA} 2. \algname{PFA}\cite{Liu2021ProjectedFA} 3. \algname{DPFedAvg} \cite{DPSCAFFOLD2022} 4. \algname{minimum}  $\epsilon$.


\vspace{-1em}
\paragraph{Privacy preference and batch size heterogeneity:}
We consider an \FL setting with $20$ clients as explained in \cref{appendix:datasets}, which results in homogeneous $\{N_i\}_{i=1}^n$. We also assume full participation and one local epoch for each client ($K_i=1$ for all $i$). Batch size heterogeneity leads to heterogeneity in the number of local steps $\{E_i\}_{i=1}^n$. We sample $\{\epsilon_i\}_{i=1}^n$ from a set of distributions, as shown in \Cref{table:mixture_dists} in the Appendix. We also sample batch sizes $\{b_i\}_{i=1}^n$ uniformly from $\{\texttt{16}, \texttt{32}, \texttt{64}, \texttt{128}\}$. Therefore, we consider heterogeneous $\{\epsilon_i\}_{i=1}^n$, heterogeneous $\{b_i\}_{i=1}^n$ and uniform $\{N_i\}_{i=1}^n$ in this section. We have also considered various other heterogeneity scenarios for clients and more experimental results are reported in \cref{app:additional_exps} and \ref{app:future_dirs}.

\begin{table*}[t]
\centering
\caption{The average \emph{per parameter} noise variance (\cref{eq:sigma_i^2} and \cref{eq:var_g_effective}) normalized by used learning rate ($\frac{\sum_{i=1}^n {w_i^e}^2 \sigma_i^2}{p \eta_l^2}$) in the aggregated model update ($\sum_{i=1}^n w_i^e\Delta \Tilde{\thetav}_i^e$) at the end of first round $(e=1)$ on FMNIST with $E=200$. Due to the projection used in \algname{PFA}, computation of its noise variance was not possible. Results for \algname{Robust-HDP} are shown with std variation across three experiments.}
\begin{tabular}{l|r*{7}{r}r}\toprule
\diagbox{alg}{dist}
&\makebox[2em]{Dist1}
&\makebox[2em]{Dist2}
&\makebox[2em]{Dist3}
&\makebox[2em]{Dist4}
&\makebox[2em]{Dist5}
&\makebox[2em]{Dist6}
&\makebox[2em]{Dist7}
&\makebox[2em]{Dist8}
&\makebox[2em]{Dist9}\\
\midrule \midrule
\algname{WeiAvg} & \footnotesize \tt 1.02 & \footnotesize \tt 1.89 & \footnotesize\tt 0.92& \footnotesize\tt 3.22& \footnotesize\tt 4.58& \footnotesize\tt 28.29& \footnotesize\tt 9.85& \footnotesize\tt 48.15& \footnotesize\tt 34.91\\\midrule

\algname{DPFedAvg} & \footnotesize\tt 1.27 & \footnotesize\tt 16.94& \footnotesize\tt 16.28& \footnotesize\tt 26.87& \footnotesize\tt 25.64& \footnotesize\tt 70.71& \footnotesize\tt 18.50& \footnotesize\tt 85.70&\footnotesize\tt 43.20\\\midrule

\algname{minimum} $\epsilon$ & \footnotesize\tt 4.68 & \footnotesize\tt 103.91& \footnotesize\tt 103.91& \footnotesize\tt 127.18& \footnotesize\tt 103.91& \footnotesize\tt 1868.45& \footnotesize\tt 74.41 & \footnotesize\tt 241.37& 
\footnotesize\tt 87.15\\\midrule
\algname{Robust-HDP} & 
\begin{tabular}{@{}c@{}}
\bf \footnotesize\tt \textbf{0.27} \\ 
{\scriptsize $\pm$ \tt 1.9e-5} 
\end{tabular} & 
\begin{tabular}{@{}c@{}}
\bf \footnotesize\tt \textbf{0.47} \\ 
{\scriptsize $\pm$ \tt 5.7e-5} 
\end{tabular} & 
\begin{tabular}{@{}c@{}}
\bf \footnotesize\tt \textbf{0.07} \\ 
{\scriptsize $\pm$ \tt 2.9e-6} 
\end{tabular} & 
\begin{tabular}{@{}c@{}}
\bf \footnotesize\tt \textbf{0.64} \\ 
{\scriptsize $\pm$ \tt 1.0e-4} 
\end{tabular} & 
\begin{tabular}{@{}c@{}}
\bf \footnotesize\tt \textbf{0.39} \\ 
{\scriptsize $\pm$ \tt 9.3e-6} 
\end{tabular} & 
\begin{tabular}{@{}c@{}}
\bf \footnotesize\tt \textbf{7.62} \\ 
{\scriptsize $\pm$ \tt 1.3e-3} 
\end{tabular} & 
\begin{tabular}{@{}c@{}}
\bf \footnotesize\tt \textbf{2.25} \\ 
{\scriptsize $\pm$ \tt 5.5e-5} 
\end{tabular} & 
\begin{tabular}{@{}c@{}}
\bf \footnotesize\tt \textbf{13.86} \\ 
{\scriptsize $\pm$ \tt 9.9e-4} 
\end{tabular} & 
\begin{tabular}{@{}c@{}}
\bf \footnotesize\tt \textbf{5.95} \\ 
{\scriptsize $\pm$ \tt 2.7e-4} 
\end{tabular}\\\midrule\midrule

Oracle (Eq. \ref{eq:w_opt}) & \footnotesize\tt 0.27 & \footnotesize\tt 0.47& \footnotesize\tt 0.07& \footnotesize\tt 0.64& \footnotesize\tt 0.39 & \footnotesize\tt 7.60& \footnotesize\tt 2.25& \footnotesize\tt 13.81 & \footnotesize\tt 5.93\\\midrule
\end{tabular}
\vspace{-1em}
\label{table:fmnist_noise}
\end{table*}

\begin{figure*}[ht!]
\centering
\includegraphics[width=0.665\columnwidth]{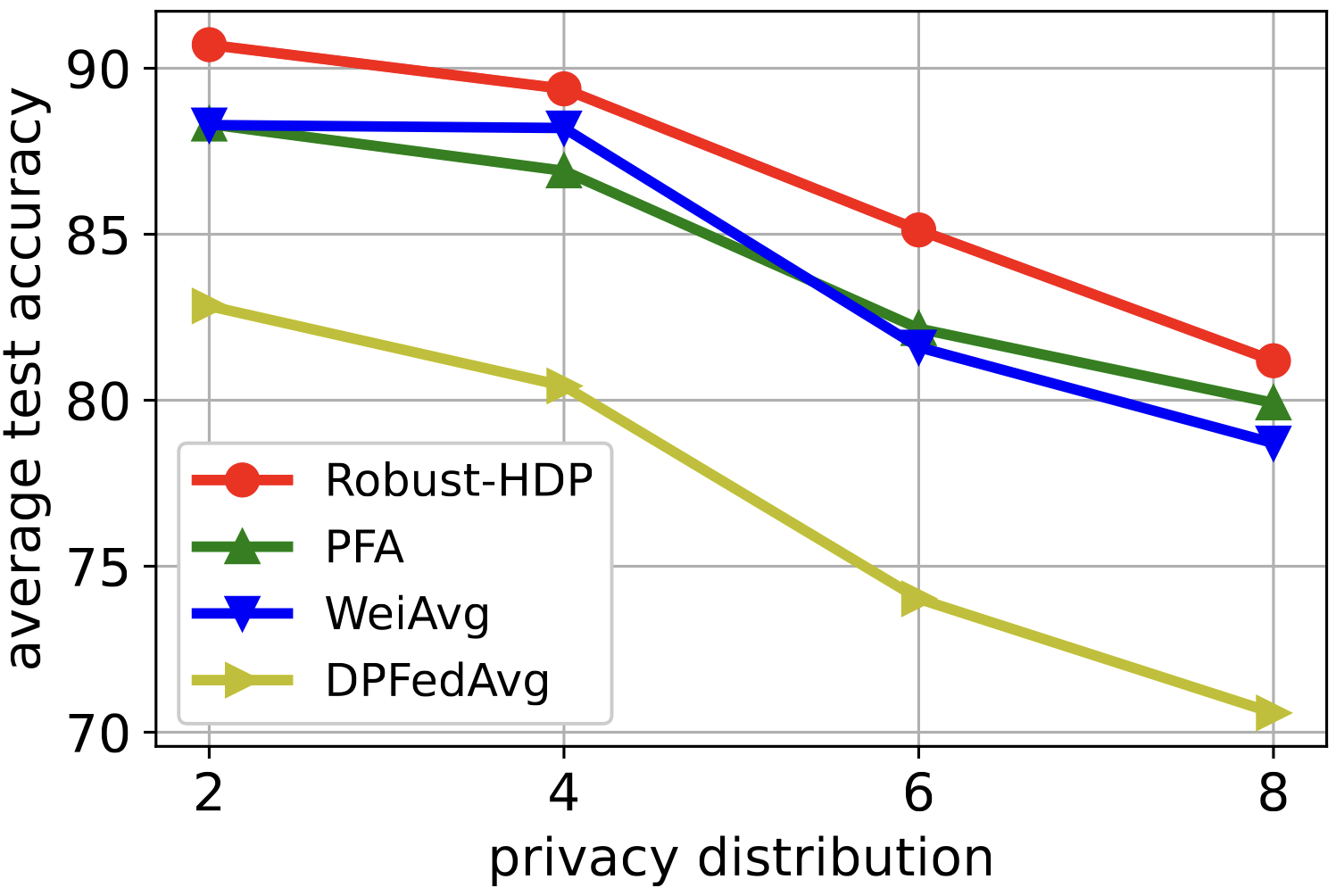}
\includegraphics[width=0.665\columnwidth]{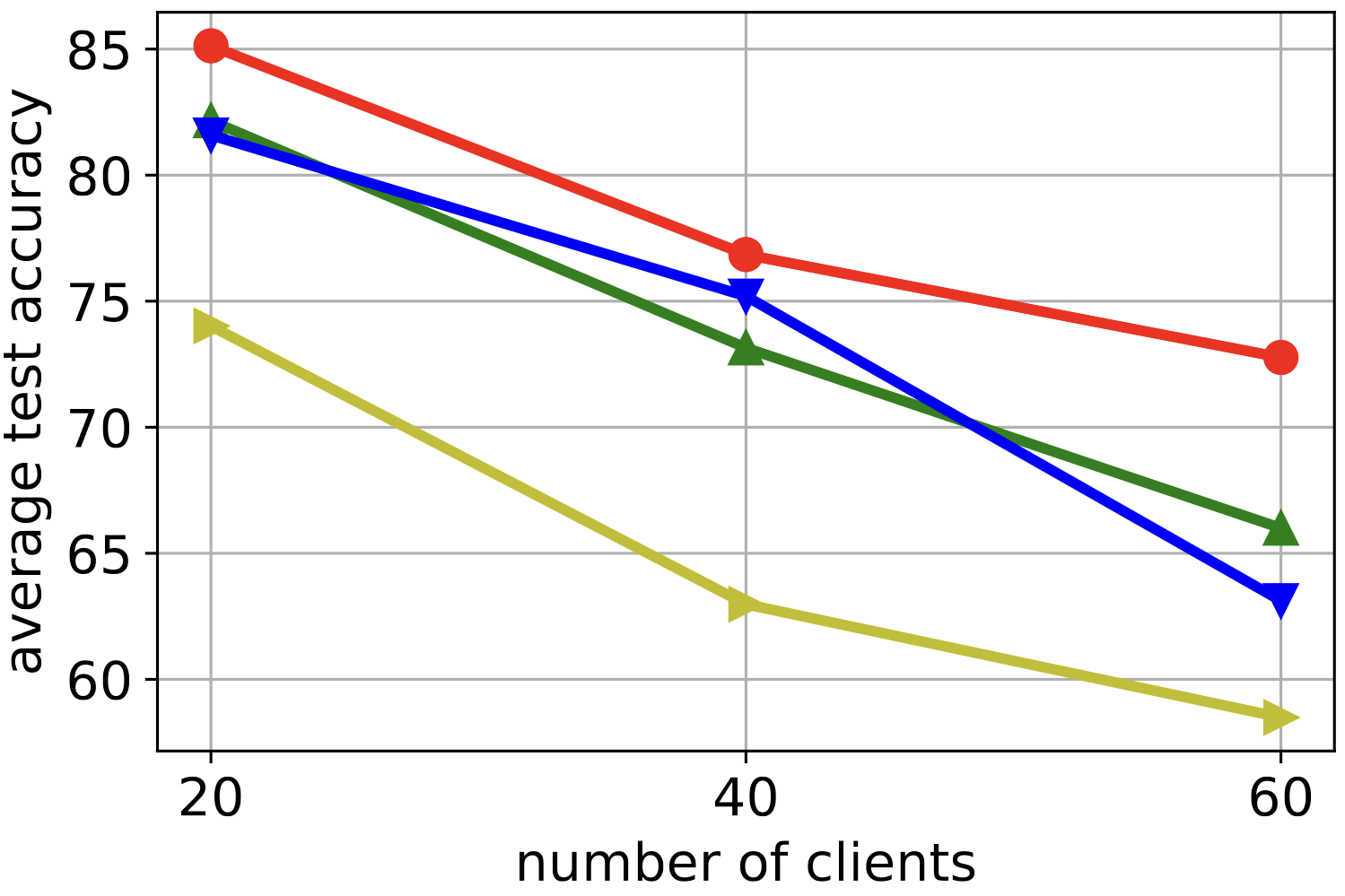}
\includegraphics[width=0.72\columnwidth, height=3.7cm]{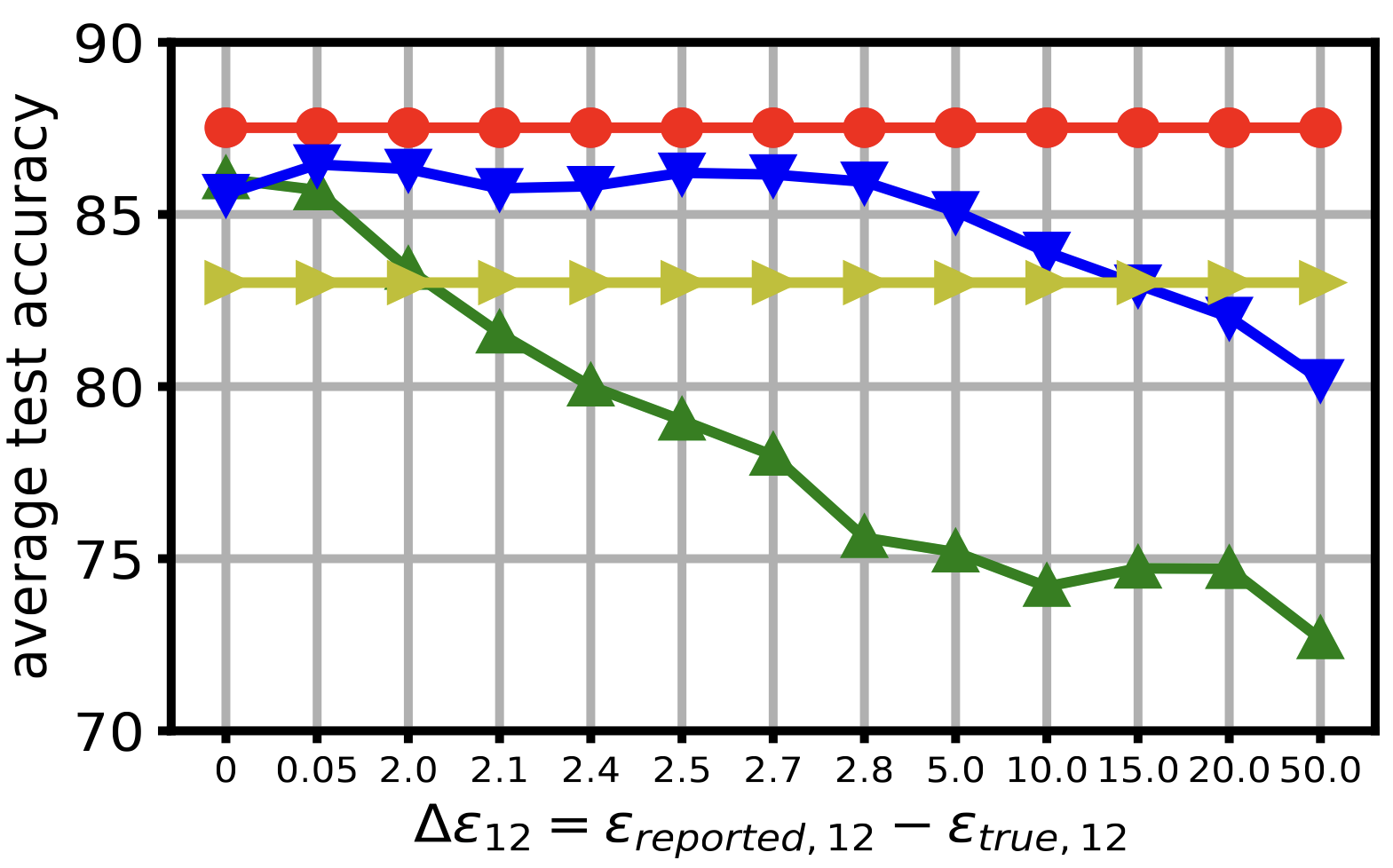}
\vspace{-2em}
\caption{Performance comparison on MNIST. \textbf{Left:} effect of clients desired privacy on utility (detailed results in \cref{table:ablation_privacy_preference}) \textbf{Middle:} effect of number of existing clients (privacy parameters of clients are sampled from Dist6) on utility (detailed results in \cref{table:ablation_num_clients}) \textbf{Right:} Robustness of \algname{}{Robust-HDP} when a random client (client 12 with a moderate $\epsilon$ value of $0.95$) sends falsified version of its $\epsilon$ to the server for \emph{ aggregation} (privacy parameters of other clients are sampled from Dist5). \algname{WeiAvg} and \algname{PFA} are much vulnerable to this falsification.}
\vspace{-1em}
\label{fig:ablation_privacy_preference}
\end{figure*}


\subsection{Experimental Results}
In this section, we investigate five main research questions about \algname{Robust-HDP}, as follows.

\noindent \textbf{RQ1: How do various heterogeneous \DPFL algorithms affect the system utility?} In Fig. \ref{fig:comparison}, we have done a comparison in terms of the average test accuracy across clients. We observe that \algname{Robust-HDP} outperforms the baselines (see tables \ref{table:mnist} to \ref{table:cifar100} in the appendix for detailed results). It achieves higher system utility by using an efficient aggregation strategy, where it assigns smaller weights to the model updates that are indeed more noisy and minimizes the noise level in the aggregation of clients' model updates. The aggregation strategy of \algname{PFA} and \algname{WeiAvg} is sub-optimal, as it can not take the batch size heterogeneity and privacy parameter heterogeneity into account simultaneously.

\noindent \textbf{RQ2: How does \algname{Robust-HDP} improve convergence speed during training?} 
We have also compared different algorithms based on their convergence speed in \Cref{fig:conv_speed_mnist_dist6}. While the baseline algorithms suffer from high levels of noise in the aggregated model update $\sum_{i \in \mathcal{S}^e} w_i^e \Delta \Tilde{\thetav}_i^e$ (see \Cref{table:fmnist_noise}), \algname{Robust-HDP} enjoys its efficient noise minimization, which performs very close to the optimum aggregation strategy, and not only results in faster convergence but also improves utility. In contrast, based on our experiments, the baseline algorithms have to use smaller learning rates to avoid divergence of their training optimization. Note that fast convergence of \DPFL algorithms is indeed important, as the privacy budgets of participating clients does not let the server to run the federated training for more rounds.

\noindent \textbf{RQ3: Is \algname{Robust-HDP} indeed Robust?} In Fig. \ref{fig:ablation_privacy_preference}, we compare \algname{Robust-HDP} with others based on clients' desired privacy level and number of clients. As clients become more privacy sensitive, they send more noisy updates to the server, making convergence to better solutions harder. \algname{Robust-HDP} shows the highest robustness to the larger noise in clients' updates and achieves the highest utility, especially in more privacy sensitive scenarios, e.g., Dist8. Also, we observe that it achieves the highest system utility when the number of clients in the system increases. Furthermore, it is completely safe in scenarios that some clients report a falsified privacy parameter to the server (\Cref{fig:ablation_privacy_preference}, right).

\begin{figure}[hbt!]
\centering
\includegraphics[width=0.95\columnwidth]{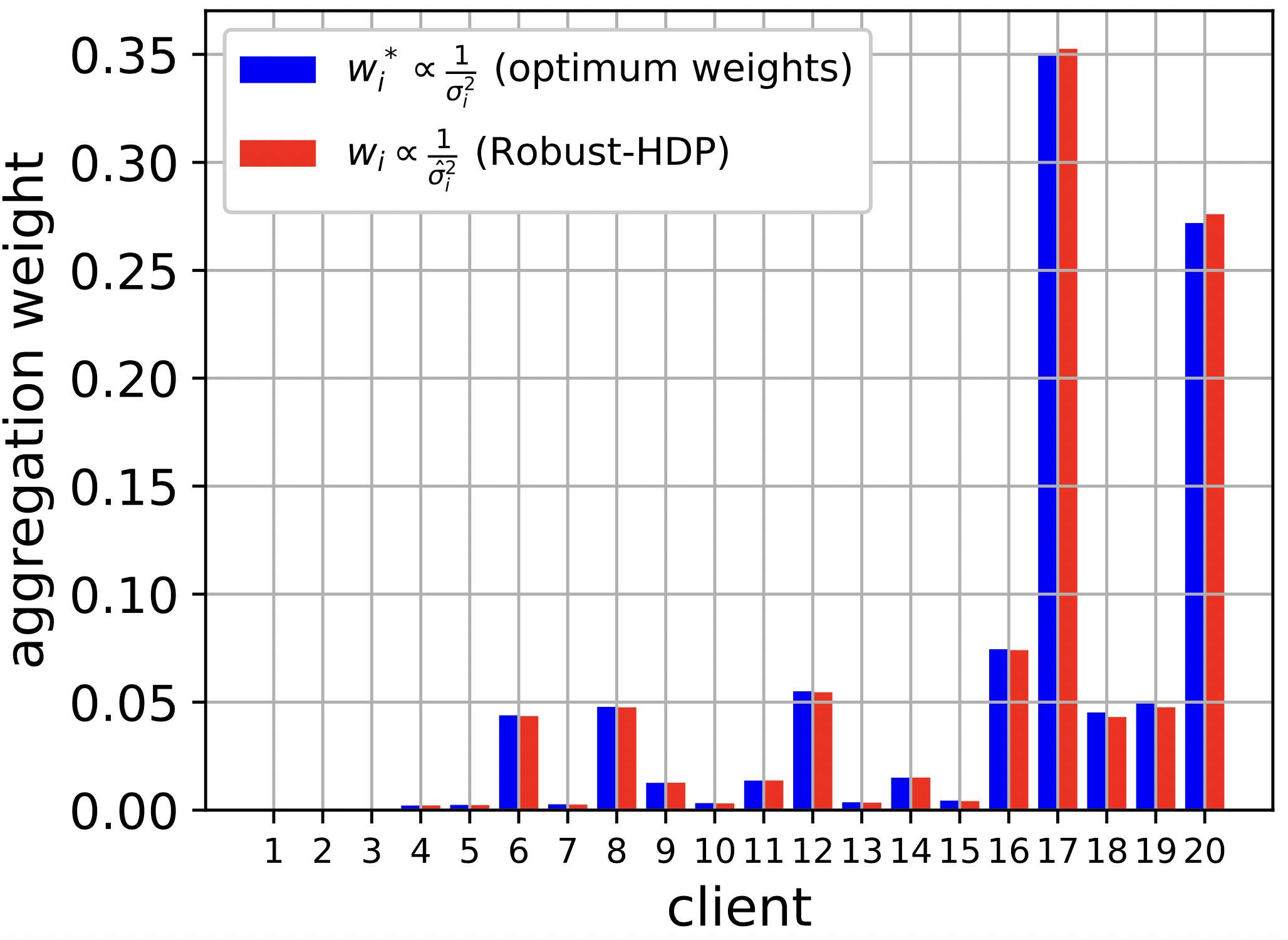}
    \vspace{-1em}
    \caption{Precision of \algname{Robust-HDP} (red) compared to oracle optimum strategy (blue) for CIFAR10 and Dist2, when using the approximation method in \Cref{sec:scalability} with $Q=1$ and $p'=2\times10^5$.}\label{fig:weights_comparison_cifar10}
\end{figure}

\noindent \textbf{RQ4: How accurate \algname{Robust-HDP} is in estimating $\{w_i^*\}$?} \Cref{fig:weights_comparison_cifar10} compares the weight assignment of \algname{Robust-HDP} with the optimum assignment (computed from Equations \ref{eq:w_opt}) for CIFAR10 dataset and Dist2. As the model used for CIFAR10 is relatively large (with $p\approx 11\times 10^6$), we have used the approximation method in \cref{sec:scalability} (with $Q=1$ and $p'=2\times10^5$). \Cref{fig:weights_comparison_cifar10} has sorted clients based on their privacy parameter $\epsilon$ in ascending order. \algname{WeiAvg} and \algname{PFA} assign smaller weights to more privacy sensitive clients, while \algname{Robust-HDP} assigns smaller weights to the clients with less noisy model updates.

\begin{table*}[t]
\centering
\caption{Comparison of different algorithms (on MNIST, $E=200$) with \textbf{heterogeneous} data split (maximum 8 labels per client) and \textbf{60 clients} in the system all using \textbf{uniform batch size} $\mathbf{128}$. }
\label{table:mnist_8labels_60clients_uniformbatch}
\begin{tabular}{l|*{8}{c}c}\toprule
\diagbox{alg}{distr}
&\makebox[2.5em]{\footnotesize Dist1}
&\makebox[2.5em]{\footnotesize Dist2}
&\makebox[2.5em]{\footnotesize Dist3}
&\makebox[2.5em]{\footnotesize Dist4}
&\makebox[2.5em]{\footnotesize Dist5}
&\makebox[2.5em]{\footnotesize Dist6}
&\makebox[2.5em]{\footnotesize Dist7}
&\makebox[2.5em]{\footnotesize Dist8}
&\makebox[2.5em]{\footnotesize Dist9}\\
\midrule\midrule
\algname{WeiAvg} \cite{Liu2021ProjectedFA} & 81.14 & 81.21& \bf 84.44 &  71.64 & 81.45 & 72.27 & 80.55& 71.28 & 72.07\\\midrule
\footnotesize \algname{PFA}\cite{Liu2021ProjectedFA} & 81.29  & 77.40 & 80.65 & 74.45 & 81.89 & 64.48 & \bf 81.40 & \bf 73.39 & 72.97 \\\midrule
\algname{DPFedAvg} \cite{DPSCAFFOLD2022} & 82.61 & 74.32 &83.12 & 70.8& 81.34& 66.51 & 76.10 & 65.16& 73.03\\\midrule
\algname{Robust-HDP} & \bf 84.84 & \bf 82.78& 80.78& \bf 78.91& \bf 81.66& \bf 72.30& 79.32& 70.68 & \bf 73.82\\
\bottomrule
\end{tabular}
\end{table*}

We have also studied the effect of parameter $p'$, on the precision of the aggregation weights returned by \algname{Robust-HDP}. In \Cref{fig:ablation_p_prime} and for CIFAR10, we have shown the increasing precision of the weights returned by \algname{Robust-HDP} when $p'$ grows. The larger $p'$ gets, the more samples we have for estimating the noise variance in clients' model updates, hence more precise weight assignments. As explained in \Cref{sec:scalability}, when $p$ is already large, we also avoid using too large values for $p'$, as the main point of \Cref{sec:scalability} was to feed a matrix with smaller number of rows to RPCA to avoid its low precision and high computation time when the number of rows ($p$) in the original input matrix $\mathbf{M}$ is large.

\begin{figure}[t]
\centering
\includegraphics[width=0.95\columnwidth]{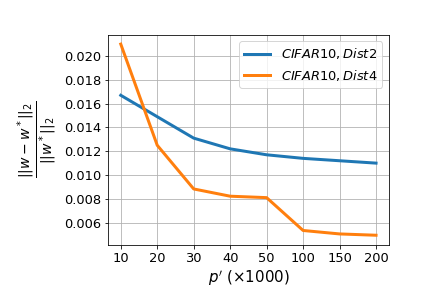}
\caption{Effect of the parameter $p'$, used in the approximation method explained in \cref{sec:scalability}, on the precision of the weights returned by \algname{Robust-HDP} for CIFAR10 (with $p\approx 11\times 10^6$).}
\label{fig:ablation_p_prime}
\end{figure}

\noindent \textbf{RQ5: What is the effect of data heterogeneity across clients on the performance of \algname{Robust-HDP}?} 

So far we assumed an \emph{i.i.d} data distribution across clients. What if the data distribution is moderately/highly heterogeneous? Assuming full participation of clients in round $e$, in order to have a useful RPCA decomposition $\mathbf{M} = [\Delta \Tilde{\thetav}_1^e|\ldots|\Delta \Tilde{\thetav}_n^e] = \mathbf{L} + \mathbf{S}$ at the end of the round, two conditions should be met \cite{Candes2009RobustPC}: 1. There should be an underlying low-rank matrix $\mathbf{L}$ in $\mathbf{M}$ 2. The difference between the matrix $\mathbf{L}$ and $\mathbf{M}$, i.e., the noise matrix $\mathbf{S}$, should be (approximately) sparse. 

Whether the first condition is met or not mainly depends on how much heterogeneous the data split across clients is. Note that $\textit{rank}(\mathbf{L})$ should be low, and not necessarily close to 1. If we assume that the second condition is met, it was shown in Theorem 1.1 in \cite{Candes2009RobustPC} that the decomposition is guaranteed to work even if $\textit{rank}(\mathbf{L})\in \mathcal{O}(n/(\log p)^2)$, i.e., the rank of $\mathbf{L}$ grows almost linearly in $n$. \emph{Therefore, even if the data split across clients is moderately heterogeneous, we expect \algname{Robust-HDP} to be successful in at least the decomposition task and the following noise estimation, given that the noise matrix $\mathbf{S}$ is sparse, and there are large enough number of clients}.

Whether the second condition is met or not, mainly depends on how much variation exists in the amount of noise in clients' model updates, i.e., how (approximately) sparse the set $\{\sigma_1^2, \cdots, \sigma_n^2\}$ is. As shown in Equations \ref{eq:var_g_effective}, \ref{var_g_ineffective} and \ref{eq:sigma_i^2}, this mainly depends on clients' privacy parameters ($\{(\epsilon_i, \delta_i)\}_{i=1}^n$), and batch sizes ($\{b_i\}_{i=1}^n$), \emph{and is independent of whether the data split is i.i.d or not.} The more the variation in clients' privacy parameters/batch sizes (similar to what we saw in \cref{fig:var_epsilon_b}), the better we can consider $\mathbf{S}$ as an approximately sparse matrix, which validates our RPCA decomposition.

So far, we assumed an \emph{i.i.d} data distribution across clients, which ensures that the underlying matrix $\mathbf{L}$ is indeed low-rank.  
Also, we assumed heterogeneity in batch size and privacy parameters of clients, which led to a sparse pattern in the noise matrix $\mathbf{S}$ (as shown in \cref{fig:var_epsilon_b}, right). In order to evaluate \algname{Robust-HDP} when the data split is moderately heterogeneous, we run experiments on MNIST with 60 clients in total (compared to the $20$ clients before) and uniform batch size $b=128$, and we split data such that each client holds data samples of at maximum 8 classes. The results obtained are reported in \cref{table:mnist_8labels_60clients_uniformbatch}. As observed, \algname{Robust-HDP} still outperforms the baselines in most of the cases. However, compared to the detailed results in \cref{table:mnist}, which were obtained for \emph{i.i.d} data split, its superiority to the baseline algorithms has decreased. Detailed discussion of these results along with scenarios with highly heterogeneous data splits are reported in \Cref{app:future_dirs}.

\vspace{-1em}
\section{Conclusion}
In heterogeneous \DPFL systems, heterogeneity in privacy preference, batch/dataset size results in large variations across the noise levels in clients' model updates, which existing algorithms can not fully take into account. To address this heterogeneity, we proposed a robust heterogeneous  \DPFL algorithm that performs  noise-aware aggregation on an untrusted server, and is independent of clients' privacy parameter values shared with the server. The proposed algorithm is optimal, robust, vastly applicable, scalable, and improves utility and convergence speed.

\section*{Impact Statement}
This paper presents work whose goal is to advance the field of Machine Learning. There are many potential societal consequences of our work, none which we feel must be specifically highlighted here.

\section*{Acknowledgements}
We thank the reviewers and the area chair for the critical comments that have largely improved the final version of this paper.
YY gratefully acknowledges funding support from NSERC, the Ontario early researcher program and the Canada CIFAR AI Chairs program. 
Resources used in preparing this research were provided, in part, by the Province of Ontario, the Government of Canada through CIFAR, and companies sponsoring the Vector Institute. Also, YC acknowledges the support by JSPS KAKENHI JP22H03595, JST PRESTO JPMJPR23P5, JST CREST JPMJCR21M2.


\bibliography{example_paper}
\bibliographystyle{icml2024}



\appendix
\onecolumn
\newpage
\begin{center}
\Large
\bf
Appendix for \emph{Noise-Aware Algorithm for Heterogeneous differentially private federated learning}
\end{center}

\section{Notations}\label{app:notations}
We consider an \FL setting with $n$ clients. Let $x\in \mathcal{X}\subseteq\Rb^d$ and $y \in \mathcal{Y}=\left\{1, \ldots, C \right\}$ denote an input data point and its target label. Client $i$ holds dataset $\mathcal{D}_i = \{x_{ij}\}_{j=1}^{N_i}$ with $N_i$ samples from distribution $P_i(x,y)$. Let $h: \mathcal{X}\times\thetav\to\Rb^C$ be the predictor function, which is parameterized by $\thetav\in \Rb^p$ ($p$ is the number of model parameters) shared among all clients. Also, let $\ell:\Rb^C\times\mathcal{Y}\to \Rb_+$ be the loss function used (cross entropy loss). Following \cite{McMahanMRHA17}, many existing \FL algorithms fall into the natural formulation that minimizes the (arithmetic) average loss $f(\thetav) := \sum_{{\userind}=1}^{n} \lambda_{\userind} f_{\userind}(\thetav)$, where $f_i(\thetav)=\frac{1}{N_i}\sum_{(x,y)\in \mathcal{D}_i}[\ell(h(x,\thetav), y)]$, with minimum value $f_i^*$. The weights $\lambdav = (\lambda_1, \ldots, \lambda_n)$ are nonnegative and sum to 1. At gradient update $t$, client $i$ uses a data batch $\mathcal{B}_i^t$ with size $b_i = |\mathcal{B}_i^t|$. Let $q_i = \frac{b_i}{N_i}$ be batch size ratio of client $i$. There are $E$ global communication rounds indexed by $e$, and in each of them, client $i$ runs $K_i$ local epochs. We use boldface letters to denote vectors.

\section{Experimental setup}\label{app:exp_setup}
In this section, we provide more experimental details that were deferred to the appendix in the main paper.

\subsection{Datasets and models}\label{appendix:datasets}
\paragraph{MNIST and FMNIST datasets:}
\label{sec:mnist_exp_setup}
 We consider a distributed setting with $20$ clients. In order to create a heterogeneous dataset, we follow a similar procedure as in \cite{McMahanMRHA17}: first we split the data from each class into several shards. Then, each user is randomly assigned a number of shards of data.  For example, in some experiments, in order to guarantee that no user receives data from more than $8$ classes, we split each class of MNIST/FMNIST into $16$ shards (i.e., a total of $160$ shards for the whole dataset), and each user is randomly assigned $8$ shards of data. By considering $20$ clients, this procedure guarantees that no user receives data from more than $8$ classes and the data distribution of each user is different from each other. The local datasets are balanced--all clients have the same amount of training samples. In this way, each user has $2400$ data points for training,  and $600$ for testing. We use a simple 2-layer CNN model with ReLU activation, the details of which can be found in \Cref{table:mnist_fmnist_model}. To update the local models at each user using its local data, unless otherwise is stated, we apply stochastic gradient descent (\algname{SGD}).

\begin{table}[th]
\footnotesize	
\centering
\caption{CNN model for classification on MNIST/FMNIST datasets \label{table:mnist_fmnist_model}}
\begin{tabular}{lcccc} \toprule
          Layer &  Output Shape &  $\#$ of Trainable Parameters & Activation & Hyper-parameters  \\\midrule
           Input & $(1, 28, 28)$ & $0$ &  &  \\
           Conv2d & $(16, 28, 28)$ & $416$ & ReLU & kernel size =$5$; strides=$(1, 1)$ \\
           MaxPool2d & $(16, 14, 14)$ & $0$ &  & pool size=$(2, 2)$ \\
           Conv2d & $(32, 14, 14)$ & $12,\!832$ & ReLU & kernel size =$5$; strides=$(1, 1)$ \\
           MaxPool2d & $(32, 7, 7)$ & $0$ &  & pool size=$(2, 2)$ \\
           Flatten & $1568$ & $0$ & & \\
            Dense &  $10$ & $15,\!690$ & ReLU & \\
            \midrule
          Total & & $28,\!938$  & & \\ \bottomrule
\end{tabular}
\end{table}

\paragraph{CIFAR10/100 datasets:}
\label{sec:cifar_exp_setup}
We consider a distributed setting with $20$ clients, and split the 50,000 training samples and the 10,000 test samples in the datasets among them. In order to create a dataset, we follow a similar procedure as in \cite{McMahanMRHA17}: For instance for CIFAR10, first we sort all data points according to their classes. Then, each class is split into $20$ shards, and each user is randomly assigned $1$ shard of each class. We use the residual neural network (ResNet-18) defined in \cite{resnet}, which is a large model with $p=11,181,642$ parameters for CIFAR10. We also use ResNet-34 \cite{resnet}, which is a larger model with $p=21,272,778$ parameters for CIFAR100. To update the local models at each user using its local data, we apply stochastic gradient descent (\algname{SGD}). In the reported experimental results, all clients participate in each communication round.

\begin{table*}[ht]
\centering
\caption{Details of the experiments and the used datasets in the main body of the paper. ResNet-18/34 are the residual neural networks defined in \cite{resnet}. CNN: Convolutional Neural Network defined in \Cref{table:mnist_fmnist_model}. }
\label{tab:datasets}
\small
\setlength\tabcolsep{2pt}
\begin{tabular}{ccccccc}
\toprule
\bf{Datasets} & \bf{Train set size} & \bf{Test set size} & \bf{Data Partition method} & \bf{\# of clients} & \bf{Model} & \bf{\# of parameters} 
\\ 
\midrule
MNIST & 48000 & 12000 & sharding \cite{McMahanMRHA17} & 20/40/60 & CNN (\Cref{table:mnist_fmnist_model}) & 28,938\\

FMNIST & 50000 & 10000 & sharding \cite{McMahanMRHA17} & 20 & CNN (\Cref{table:mnist_fmnist_model}) & 28,938
\\

CIFAR10 & 50000 & 10000 & sharding \cite{McMahanMRHA17} &  20 & ResNet-18 \cite{resnet} & 11,181,642
\\

CIFAR100 & 50000 &  10000 & sharding \cite{McMahanMRHA17} & 20 & ResNet-34 \cite{resnet} & 21,272,778
\\
\bottomrule
\end{tabular}
\label{table:split_uniform}
\end{table*}

\begin{table}[t]
\centering
\begin{tabular}{l|*{1}{c}}\hline
\toprule
Distribution
&\makebox[9em]{Parameter setting}  \\
\hline

Dist1 &  Gaussian distribution $\mathcal{N}(2.0, 1.0)$\\\hline

Dist2 &  mixture of $\mathcal{N}(0.2, 0.01)$, $\mathcal{N}(1.0, 0.1)$ and $\mathcal{N}(5.0, 1.0)$ with weights $(0.2, 0.6, 0.2)$ \\\hline

Dist3 &  Uniform distribution $U[0.2, 5]$ \\\hline

Dist4 &  mixture of $\mathcal{N}(0.2, 0.01)$, $\mathcal{N}(0.5, 0.1)$ and $\mathcal{N}(2.0, 1.0)$ with weights $(0.2, 0.6, 0.2)$ \\\hline

Dist5 &  Uniform distribution $U[0.2, 2]$ \\\hline

Dist6 &  mixture of $\mathcal{N}(0.2, 0.01)$, $\mathcal{N}(0.5, 0.1)$ and $\mathcal{N}(1.0, 0.1)$ with weights $(0.3, 0.5, 0.2)$ \\\hline

Dist7 &  Uniform distribution $U[0.2, 1]$ \\\hline

Dist8 &  mixture of $\mathcal{N}(0.2, 0.01)$ and  $\mathcal{N}(0.5, 0.1)$ with weights $(0.6, 0.4)$ \\\hline

Dist9 &  Uniform distribution $U[0.2, 0.5]$\\
\bottomrule
\end{tabular}
\caption{Distributions of privacy parameters $(\epsilon)$, from which we sample clients' privacy parameters.}

\label{table:mixture_dists}
\end{table}

\subsection{\DP training parameters}
For each dataset, we sample the privacy parameter $\epsilon$ of clients from different distributions, as shown in \Cref{table:mixture_dists}. In order to get reasonable accuracy results for CIFAR100, which is a harder dataset compared to the other three datasets, we scale the values of $\epsilon$ sampled for clients from the distributions above by a factor 10. For instance, we have $\mathcal{N}(20.0, 10.0)$ as "Dist1" for CIFAR100. This is only for getting meaningful accuracy values for CIFAR100, otherwise the test accuracy values will be too low. We fix $\delta$ for all clients to $10^{-4}$. We also set the clipping threshold $c$ equal to $3$, as it results in better test accuracy, as reported in \cite{Abadi2016}.

\subsection{Algorithms to compare and tuning hyperparameters}

We compare our \algname{Robust-HDP}, which benefits from \algname{RPCA} (\Cref{alg:RPCA}), with four baseline algorithms, including WeiAvg \cite{Liu2021ProjectedFA} (\Cref{alg:WeiAvg}), PFA \cite{Liu2021ProjectedFA}, DPFedAvg \cite{DPSCAFFOLD2022} and minimum $\epsilon$ \cite{Liu2021ProjectedFA}. For PFA, we always use projection dimension 1, as in \cite{Liu2021ProjectedFA}. For each algorithm and each dataset, we find the best learning rate from a grid: \emph{the one which is small enough to avoid divergence of the federated optimization, and results in the lowest average train loss (across clients) at the end of \FL training}. Here are the grids we use for each dataset:

\begin{itemize}
\item MNIST: \texttt{\{1e-4, 2e-4, 5e-4, 1e-3, 2e-3, 5e-3, 1e-2\}};
\item FMNIST: \texttt{\{1e-4, 2e-4, 5e-4, 1e-3, 2e-3, 5e-3, 1e-2\}};
\item CIFAR10:  \texttt{\{1e-4, 2e-4, 5e-4, 1e-3, 2e-3, 5e-3, 1e-2\}};
\item CIFAR100: \texttt{\{1e-5, 2e-5, 5e-5, 1e-4, 2e-4, 5e-4, 1e-3\}}.
\end{itemize}

The best learning rates used for each dataset are reported in \Cref{table:mnistlr} to \Cref{table:cifar100lr}.

{~~~~~~\centering
\begin{minipage}{.9\linewidth}
\begin{algorithm}[H]
\caption{\algname{WeiAvg} \cite{Liu2021ProjectedFA}}
\label{alg:WeiAvg}
\KwIn{Initial parameter $\thetav^0$, Clients batch sizes $\{b_1, \ldots, b_n\}$, Clients dataset sizes $\{N_1, \ldots, N_n\}$, \\ Clients noise scales $\{z_1, \ldots, z_n\}$, gradient norm bound $c$, local epochs $\{K_1, \ldots, K_n\}$, global round $E$, \\ 
privacy parameter $\delta$, number of model parameters $p$, privacy accountant \textbf{PA}.}

\KwOut{$\thetav_E, \{\epsilon_1, \ldots, \epsilon_n\}$}

\textbf{Initialize} $\thetav_0$ randomly.

\For{$e\in [E]$}
{
sample a set of clients $\mathcal{S}^e \subseteq \{1, \ldots, n\} $

\For{each client $\userind \in \mathcal{S}^e$ \textbf{in parallel}}{
  $\Delta \Tilde{\thetav}_i^e \gets$\textbf{\algname{DPSGD}($\thetav^e, b_i, N_i, K_i, z_i, c$)}
  
  $\epsilon_i^e \gets \textbf{PA}(\frac{b_i}{N_i}, z_i, K_i, e)$
  }

\For{$i \in \mathcal{S}^e$}
{
    $w_i^e \gets \frac{\epsilon_i}{\sum_{j \in \mathcal{S}_e} \epsilon_j}$
}

{$\thetav^{e+1} \gets \thetav^e + \sum_{i \in \mathcal{S}_e} w_i^e \Delta \Tilde{\thetav}_i^e$}
}
\KwOut{$\thetav^E, \{\epsilon_1^E, \ldots, \epsilon_n^E\}$}
\end{algorithm}
\end{minipage}
}

{~~~~~~\centering
\begin{minipage}{.9\linewidth}
\begin{algorithm}[H]
\caption{Principal Component Pursuit by Alternating Directions \cite{Candes2009RobustPC}}
\label{alg:RPCA}

\KwIn{matrix $M$, shrinkage operator $\mathcal{S}_{\tau}[x] =$ sgn$(x) \max(|x|-\tau,0)$, singular value thresholding operator \\
$\mathcal{D}_{\tau}(U\Sigma V^*) = U\mathcal{S}_{\tau}(\Sigma)V^*$}

\textbf{Initialize} $S_0 = Y_0 = 0, \mu>0$. \\
\While{not converged}{
~~compute $L_{k+1} = \mathcal{D}_{\mu^{-1}}(M-S_k-\mu^{-1}Y_k)$\\
compute $S_{k+1} = \mathcal{S}_{\lambda \mu^{-1}}(M-L_{k+1}+\mu^{-1}Y_k)$\\
compute $Y_{k+1} = Y_k + \mu(M-L_{k+1}-S_{k+1})$\\
}
\KwOut{$L, S$}
\end{algorithm}
\end{minipage}
}

\vspace{-1em}
\begin{table}[hbt!]
\centering
\caption{The learning rates used for training with each algorithm on MNIST dataset}
\begin{tabular}{l|*{8}{c}c}\toprule
\diagbox{alg}{dist}
&\makebox[2.5em]{\footnotesize Dist1}
&\makebox[2.5em]{\footnotesize Dist2}
&\makebox[2.5em]{\footnotesize Dist3}
&\makebox[2.5em]{\footnotesize Dist4}
&\makebox[2.5em]{\footnotesize Dist5}
&\makebox[2.5em]{\footnotesize Dist6}
&\makebox[2.5em]{\footnotesize Dist7}
&\makebox[2.5em]{\footnotesize Dist8}
&\makebox[2.5em]{\footnotesize Dist9}\\
\midrule \midrule
\footnotesize \algname{WeiAvg} \cite{Liu2021ProjectedFA} & \tt 1e-2& \tt 5e-3& \tt 1e-2& \tt 5e-3 &\tt 5e-3 & \tt 1e-3 & \tt 1e-3& \tt 1e-3& \tt 1e-3\\\midrule
\footnotesize \algname{PFA}\cite{Liu2021ProjectedFA} & \tt 5e-3& \tt 5e-3& \tt 5e-3& \tt 5e-3& \tt 5e-3 & \tt 1e-3& \tt 1e-3& \tt 1e-3& \tt 5e-4\\\midrule
\footnotesize \algname{DPFedAvg} \cite{DPSCAFFOLD2022} & \tt 5e-3& \tt 1e-3& \tt 1e-3& \tt 1e-3& \tt 1e-3& \tt 5e-4& \tt 1e-3& \tt 1e-3& \tt 1e-3\\ \midrule
\footnotesize \algname{minimum} $\epsilon$ \cite{Liu2021ProjectedFA}& \tt 5e-4 & \tt 5e-4& \tt 5e-4 & \tt 5e-4 & \tt 1e-3& \tt 1e-4& \tt 1e-3& \tt 5e-4& \tt 1e-3\\\midrule
\footnotesize Robust-HDP & \tt 1e-2& \tt 1e-2& \tt 1e-2& \tt 1e-2 & \tt 5e-3 & \tt 2e-3& \tt 2e-3& \tt 2e-3& \tt 2e-3\\\midrule

\end{tabular}
\label{table:mnistlr}
\end{table}

\vspace{-1em}
\begin{table}[hbt!]
\centering
\caption{The learning rates used for training with each algorithm on FMNIST dataset}
\begin{tabular}{l|*{8}{c}c}\toprule
\diagbox{alg}{dist}
&\makebox[2.5em]{\footnotesize Dist1}
&\makebox[2.5em]{\footnotesize Dist2}
&\makebox[2.5em]{\footnotesize Dist3}
&\makebox[2.5em]{\footnotesize Dist4}
&\makebox[2.5em]{\footnotesize Dist5}
&\makebox[2.5em]{\footnotesize Dist6}
&\makebox[2.5em]{\footnotesize Dist7}
&\makebox[2.5em]{\footnotesize Dist8}
&\makebox[2.5em]{\footnotesize Dist9}\\
\midrule \midrule
\footnotesize \algname{WeiAvg} \cite{Liu2021ProjectedFA} & \tt 5e-3 & \tt 5e-3 & \tt 5e-3 & \tt 5e-3 & \tt 2e-3 & \tt 5e-4 & \tt 5e-4& \tt 5e-4& \tt 5e-4\\\midrule
\footnotesize \algname{PFA}\cite{Liu2021ProjectedFA} & \tt 2e-3 & \tt 2e-3& \tt 5e-3& \tt 5e-3& \tt 5e-3& \tt 5e-3& \tt 2e-3& \tt 1e-3& \tt 1e-3\\\midrule
\footnotesize \algname{DPFedAvg} \cite{DPSCAFFOLD2022} & \tt 2e-3 & \tt 1e-3& \tt 1e-3& \tt 1e-3& \tt 1e-3& \tt 5e-4& \tt 5e-4& \tt 5e-4 & \tt 5e-4\\\midrule
\footnotesize \algname{minimum} $\epsilon$ \cite{Liu2021ProjectedFA}& \tt 1e-3 & \tt 5e-4 & \tt 5e-4& \tt 5e-4 & \tt 5e-4 & \tt 1e-4& \tt 5e-4& \tt 5e-4& \tt 5e-4\\\midrule
\footnotesize Robust-HDP & \tt 5e-3 & \tt 5e-3& \tt 5e-3& \tt 5e-3 & \tt 5e-3& \tt 1e-3& \tt 1e-3& \tt 1e-3& \tt 1e-3\\\midrule

\end{tabular}
\label{table:fmnistlr}
\end{table}

\begin{table}[hbt!]
\centering
\caption{The learning rates used for training with each algorithm on CIFAR10 dataset}
\begin{tabular}{l|*{8}{c}c}\toprule
\diagbox{alg}{dist}
&\makebox[2.5em]{\footnotesize Dist1}
&\makebox[2.5em]{\footnotesize Dist2}
&\makebox[2.5em]{\footnotesize Dist3}
&\makebox[2.5em]{\footnotesize Dist4}
&\makebox[2.5em]{\footnotesize Dist5}
&\makebox[2.5em]{\footnotesize Dist6}
&\makebox[2.5em]{\footnotesize Dist7}
&\makebox[2.5em]{\footnotesize Dist8}
&\makebox[2.5em]{\footnotesize Dist9}\\
\hline\hline
\footnotesize \algname{WeiAvg} \cite{Liu2021ProjectedFA} & \tt 2e-3 & \tt 1e-3 & \tt 1e-3 & \tt 5e-4 & \tt 5e-4 & \tt 2e-4 & \tt 2e-4  & \tt 2e-4 & \tt 2e-4\\\midrule
\footnotesize PFA \cite{Liu2021ProjectedFA} & \tt 2e-3 & \tt 2e-3  & \tt 2e-3  & \tt 2e-3 & \tt 2e-3 & \tt 1e-3  & \tt 5e-4 & \tt 5e-4 & \tt 2e-4\\\midrule
\footnotesize DPFedAvg \cite{DPSCAFFOLD2022} & \tt 1e-3 & \tt 5e-4 & \tt 2e-4 & \tt 2e-4 & \tt 2e-4 & \tt 1e-4 & \tt 1e-4 & \tt 5e-5 & \tt 1e-4\\\midrule
\footnotesize \algname{minimum} $\epsilon$ \cite{Liu2021ProjectedFA}& \tt 2e-3 & \tt 1e-3 & \tt 1e-3 & \tt 1e-3 & \tt 1e-3 & \tt 1e-4 & \tt 5e-4& \tt 2e-4 & \tt 2e-4\\\midrule
\footnotesize Robust-HDP & \tt 2e-3 & \tt 2e-3 & \tt 2e-3 & \tt 2e-3 & \tt 2e-3  & \tt 5e-4 & \tt 1e-3 & \tt 2e-4 & \tt 2e-4\\\midrule

\end{tabular}
\label{table:cifar10lr}
\end{table}

\begin{table}[hbt!]
\centering
\caption{The learning rates used for training with each algorithm on CIFAR100 dataset.}
\begin{tabular}{l|*{8}{c}c}\toprule
\diagbox{alg}{dist}
&\makebox[2.5em]{\footnotesize Dist1}
&\makebox[2.5em]{\footnotesize Dist2}
&\makebox[2.5em]{\footnotesize Dist3}
&\makebox[2.5em]{\footnotesize Dist4}
&\makebox[2.5em]{\footnotesize Dist5}
&\makebox[2.5em]{\footnotesize Dist6}
&\makebox[2.5em]{\footnotesize Dist7}
&\makebox[2.5em]{\footnotesize Dist8}
&\makebox[2.5em]{\footnotesize Dist9}\\
\hline\hline
\footnotesize \algname{WeiAvg} \cite{Liu2021ProjectedFA} & \tt 1e-3 & \tt 1e-3 & \tt 1e-3 & \tt 5e-4 & \tt 5e-4 & \tt 2e-4 & \tt 2e-4  & \tt 2e-4 & \tt 2e-4\\\midrule
\footnotesize PFA \cite{Liu2021ProjectedFA} & \tt 2e-3 & \tt 2e-3 & \tt 2e-3 & \tt 1e-3 & \tt 1e-3  & \tt 5e-4 & \tt 2e-4 & \tt 2e-4 & \tt 1e-4 \\\midrule
\footnotesize DPFedAvg \cite{DPSCAFFOLD2022} & \tt 5e-4 & \tt 5e-4 & \tt 1e-4 & \tt 1e-4 & \tt 1e-4 & \tt 5e-5 & \tt 5e-5 & \tt 2e-5 & \tt 2e-5\\\midrule
\footnotesize \algname{minimum} $\epsilon$ \cite{Liu2021ProjectedFA}& \tt 2e-4 & \tt 2e-4 & \tt 1e-4 & \tt 1e-4 & \tt 1e-4 & \tt 5e-5 & \tt 5e-5& \tt 2e-5 & \tt 2e-5\\\midrule
\footnotesize Robust-HDP & \tt 2e-3 & \tt 2e-3 & \tt 2e-3 & \tt 2e-3 & \tt 2e-3  & \tt 1e-3 & \tt 1e-3 & \tt 1e-3 & \tt 1e-3\\\midrule

\end{tabular}
\label{table:cifar100lr}
\end{table}

\begin{figure*}[t]
\centering
    \includegraphics[width=0.5\columnwidth]{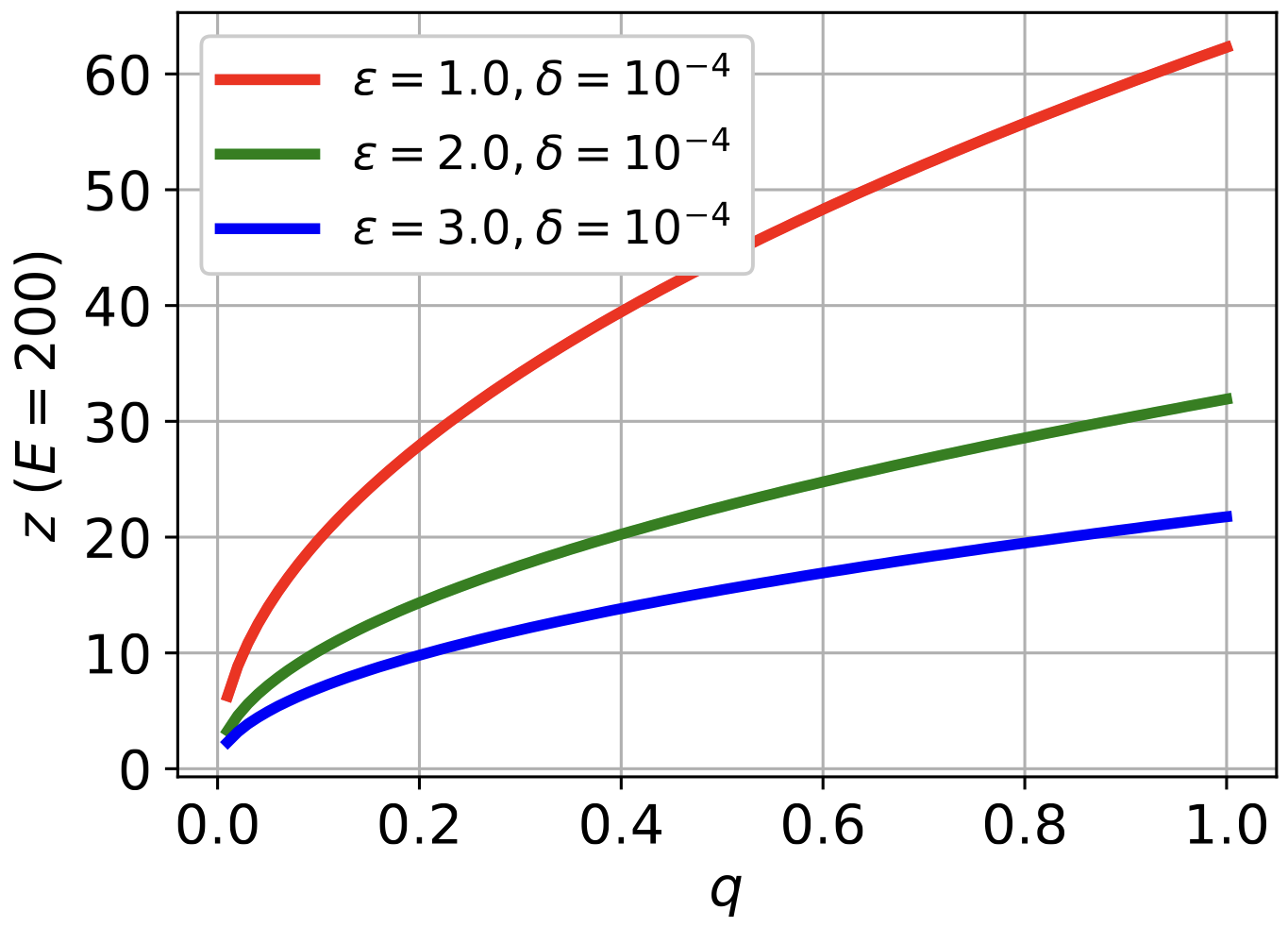}
    \caption{Plot of $z$ v.s. $q$ obtained from Moments Accountant \cite{Abadi2016} in a centralized setting with $E=200$. As observed, $z$ increases sub-linearly with $q$ (or equivalently with $b$) and decreases with dataset size.}
    \label{fig:zvsq}
\end{figure*}

    

\clearpage
\newpage

\section{Derivations}
\par{\textbf{Computation of \texorpdfstring{$\sigma_{i, \Tilde{g}}^2$}{Lg}, when gradient clipping is effective for all samples:}}\label{app:variance_derivation}
We know that the two sources of randomness (i.e., minibatch sampling and Gaussian noise) are independent, thus the variance is additive. Assuming that $E[\bar{g}_{ij}(\thetav)]$ is the same for all $j$ and is $G_i(\thetav)$, we have:

\begin{align} 
\label{var_g}
    \sigma_{i, \Tilde{g}}^2 := \texttt{Var}(\Tilde{g}_i(\thetav)) &= \texttt{Var}\bigg(\frac{1}{b_i}\sum_{j \in \mathcal{B}_i^t} \Bar{g}_{ij}(\thetav)\bigg) + \frac{p \sigma_{i, \texttt{\DP}}^2}{b_i^2} \nonumber\\
    & = \frac{1}{b_i^2}\bigg(\mathbb E \bigg[\bigg\|\sum_{j \in \mathcal{B}_i^t}\Bar{g}_{ij}(\thetav)\bigg\|^2\bigg] - \bigg\|\mathbb E \bigg[\sum_{j \in \mathcal{B}_i^t}\Bar{g}_{ij}(\thetav)\bigg]\bigg\|^2\bigg) + \frac{pc^2 z^2(\epsilon_i, \delta_i, q_i, K_i, E)}{b_i^2} \nonumber \\ 
    & = \frac{1}{b_i^2}\bigg(\mathbb E \bigg[\bigg\|\sum_{j \in \mathcal{B}_i^t}\Bar{g}_{ij}(\thetav)\bigg\|^2\bigg] - \bigg\|\sum_{j \in \mathcal{B}_i^t}G_i(\thetav) \bigg\|^2\bigg) + \frac{pc^2 z^2(\epsilon_i, \delta_i, q_i, K_i, E)}{b_i^2} \nonumber \\
    & = \frac{1}{b_i^2}\bigg(\mathbb E \bigg[\bigg\|\sum_{j \in \mathcal{B}_i^t}\Bar{g}_{ij}(\thetav)\bigg\|^2\bigg] - b_i^2\big\| G_i(\thetav) \big\|^2\bigg) + \frac{p c^2 z^2(\epsilon_i, \delta_i, q_i, K_i, E)}{b_i^2}
\end{align}

We also have:

\begin{align}
\label{eq:mean_norm2}
\mathbb E \bigg[\bigg\|\sum_{j \in \mathcal{B}_i^t}\Bar{g}_{ij}(\thetav)\bigg\|^2\bigg] &= \sum_{j \in \mathcal{B}_i^t} \mathbb E \bigg[\big\|\Bar{g}_{ij}(\thetav)\big\|^2\bigg] + \sum_{m \neq n \in \mathcal{B}_i^t} 2 \mathbb E \bigg[[\Bar{g}_{im}(\thetav)]^\top [\Bar{g}_{in}(\thetav)]\bigg] \nonumber \\
&= \sum_{j \in \mathcal{B}_i^t} \mathbb E \bigg[\big\|\Bar{g}_{ij}(\thetav)\big\|^2\bigg] + \sum_{m \neq n \in \mathcal{B}_i^t} 2 \mathbb E \bigg[\Bar{g}_{im}(\thetav)\bigg ]^\top \mathbb E \bigg[\Bar{g}_{in}(\thetav)\bigg] \nonumber \\
&= b_i c^2 + 2 \binom{b_i}{2} \big\| G_i(\thetav)\big\|^2,
\end{align}
where the last equation has used \Cref{expectation_gtilde} and that we clip the norm of sample gradients $\Bar{g}_{ij}(\thetav)$ with an ``effective" clipping threshold $c$. We can now plug eq. \ref{eq:mean_norm2} into the parenthesis in eq. \ref{var_g} and rewrite it as:

\begin{align}
    \sigma_{i, \Tilde{g}}^2 := \texttt{Var}(\Tilde{g}_i(\thetav)) & = \frac{1}{b_i^2}\bigg(\mathbb E \bigg[\bigg\|\sum_{j \in \mathcal{B}_i^t}\Bar{g}_{ij}(\thetav)\bigg\|^2\bigg] - b_i^2\big\| G_i(\thetav) \big\|^2\bigg) + \frac{p c^2 z^2(\epsilon_i, \delta_i, q_i, K_i, E)}{b_i^2} \nonumber\\
    &=  \frac{1}{b_i^2}\bigg( b_ic^2 + \bigg(2 \binom{b_i}{2} -b_i^2\bigg) \big\| G_i(\thetav)\big\|^2 \bigg) + \frac{p c^2 z^2(\epsilon_i, \delta_i, q_i, K_i, E)}{b_i^2} \nonumber\\
    &=  \frac{1}{b_i^2}\bigg( b_i c^2 -b_i \big\| G_i(\thetav)\big\|^2 \bigg) + \frac{p c^2 z^2(\epsilon_i, \delta_i, q_i, K_i, E)}{b_i^2} \nonumber\\
    &=  \frac{c^2 - \big\| G_i(\thetav)\big\|^2}{b_i} + \frac{p c^2 z^2(\epsilon_i, \delta_i, q_i, K_i, E)}{b_i^2}  \nonumber\\
    &\approx \frac{p c^2 z^2(\epsilon_i, \delta_i, q_i, K_i, E)}{b_i^2}
\end{align}

\newpage
\section{Assumptions and lemmas}
In this section, we formalize our assumptions and some lemmas, which we will use in our proofs.




\begin{assumption}[\textbf{Lipschitz continuity, $\beta$-smoothness and bounded gradient variance}]\label{assump:lipschitz_smooth_bounded}
$\{f_i\}_{i=1}^n$ are $L_0$-Lipschitz continuous and $\beta$-smooth: $\forall ~ \thetav,\thetav'\in \mathbb R^p,i: \|f_i(\thetav) - f_i(\thetav')\| \leq L_{0}\|\thetav - \thetav'\|$ and $\|\nabla f_i(\thetav) - \nabla f_i(\thetav')\| \leq \beta\|\thetav-\thetav'\|$. Also, the stochastic gradient $g_i(\thetav)$ is an unbiased estimate of $\nabla f_i(\thetav)$ with bounded variance: $\forall \thetav \in \mathbb R^p: \mathbb{E}_{\mathcal{B}_i^t} [g_i(\thetav)] = \nabla f_i(\thetav), ~~ \mathop{\mathbb{E}_{\mathcal{B}_i^t}}\big[\|g_i(\thetav) - \nabla f_i(\thetav)\|^2\big] \leq \sigma_{i, g}^2$. We also assume that for every $i, j \in [n], f_i-f_j$ is $\sigma$-Lipschitz continuous: $\|\nabla f_i(\thetav)-\nabla f_j(\thetav)\|\leq\sigma.$ 
\end{assumption}

\begin{assumption}[\textbf{\bf bounded sample gradients}]\label{assump:bounded_sample_grad} There exists a clipping threshold $\mathcal{C}$ such that for all $i, j$:
\begin{align}
    \|g_{ij}(\thetav)\|_2 := \|\nabla \ell(h(x_{ij},\thetav), y_{ij})\|_2 \leq \mathcal{C}
\end{align} 
Note that this condition always holds if $\ell$ is Lipschitz continuous or if $h$ is bounded. 
\end{assumption}

\begin{lemma}[Relaxed triangle inequality]\label{lemma:relaxed_triangle}
Let $\{v_1, \ldots, v_n\}$ be $n$ vectors in $\mathbb R^d$. Then, the followings is true:

\begin{itemize}
    \item $\|v_i + v_j\|^2 \leq (1+a)\|v_i\|^2 + (1+\frac{1}{a})\|v_j\|^2 ~(\textit{for any~} a>0)$
    \item $\|\sum_i v_i\|^2 \leq n \sum_i \|v_i\|^2$
\end{itemize}

\begin{proof}
The proof for the first inequality is obtained from identity:
\begin{align}
    \|v_i + v_j\|^2 = (1+a)\|v_i\|^2 + (1+\frac{1}{a})\|v_j\|^2 - \|\sqrt{a}v_i + \frac{1}{\sqrt{a}}v_j\|^2
\end{align}
The proof for the second inequality is achieved by using the fact that $h(x)=\|x\|^2$ is convex:
\begin{align}
    \|\frac{1}{n}\sum_i v_i\|^2 \leq \frac{1}{n} \sum_i \|v_i\|^2
\end{align}
\end{proof}
\end{lemma}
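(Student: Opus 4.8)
The plan is to prove each of the two inequalities independently; neither requires anything beyond the definition $\|v\|^2 = \langle v, v\rangle$, bilinearity of the inner product, and elementary convexity, so the whole argument is short.

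For the first inequality I would begin by expanding the left-hand side via bilinearity,
\begin{align}
    \|v_i + v_j\|^2 = \|v_i\|^2 + 2\langle v_i, v_j\rangle + \|v_j\|^2,
\end{align}
so that the only term needing control is the cross term $2\langle v_i, v_j\rangle$. I would bound it by a weighted Young (AM--GM) inequality: for any $a>0$, expanding the nonnegative quantity $\|\sqrt{a}\,v_i - \tfrac{1}{\sqrt{a}}\,v_j\|^2 \geq 0$ gives $2\langle v_i, v_j\rangle \leq a\|v_i\|^2 + \tfrac{1}{a}\|v_j\|^2$. Substituting this into the expansion and collecting coefficients yields $(1+a)\|v_i\|^2 + (1+\tfrac{1}{a})\|v_j\|^2$, which is precisely the claim. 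Equivalently, one can state the exact identity $\|v_i+v_j\|^2 = (1+a)\|v_i\|^2 + (1+\tfrac{1}{a})\|v_j\|^2 - \|\sqrt{a}\,v_i - \tfrac{1}{\sqrt{a}}\,v_j\|^2$ and simply discard the last (nonnegative) term.

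For the second inequality I would invoke convexity of the map $x \mapsto \|x\|^2$ and apply Jensen's inequality to the uniform average of the $v_i$,
\begin{align}
    \Big\|\tfrac{1}{n}\sum_{i} v_i\Big\|^2 \leq \tfrac{1}{n}\sum_{i}\|v_i\|^2.
\end{align}
Multiplying through by $n^2$ and using positive homogeneity of the norm on the left gives $\|\sum_i v_i\|^2 \leq n\sum_i \|v_i\|^2$, as required. As an alternative I could apply Cauchy--Schwarz to the pairing of the all-ones vector with $(\|v_1\|,\dots,\|v_n\|)$ together with the ordinary triangle inequality, obtaining the same estimate.

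There is no genuine obstacle here, since both parts are one-line consequences of standard inequalities. The only point requiring care is the sign inside the perfect square used for the first part: it must be $\sqrt{a}\,v_i - \tfrac{1}{\sqrt{a}}\,v_j$ so that completing the square reproduces the cross term $+2\langle v_i, v_j\rangle$ with the correct sign. With that detail fixed, both estimates follow immediately.
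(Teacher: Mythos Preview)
Your proposal is correct and follows essentially the same approach as the paper: an exact perfect-square identity for the first inequality (drop the nonnegative square) and Jensen/convexity of $\|\cdot\|^2$ for the second. In fact your insistence on the minus sign in $\|\sqrt{a}\,v_i-\tfrac{1}{\sqrt{a}}\,v_j\|^2$ is the right choice; the paper writes the identity with a plus sign, which as stated actually expands to $\|v_i-v_j\|^2$ rather than $\|v_i+v_j\|^2$, so your version is the accurate one.
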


\begin{lemma}\label{lemma:meanvar}
    Let $\{v_1, \ldots, v_n\}$ be $n$ random variables in $\mathbb R^d$, with $\mathbb E[v_i] = \mathcal{E}_i$ and $\mathbb E [\|v_i - \mathcal{E}_i\|^2] = \sigma_i^2$. Then, we have the following inequality:
    \begin{align}
        \mathbb E[\|\sum_{i=1}^n v_i\|^2] \leq \mathbb \|\sum_{i=1}^n \mathcal{E}_i\|^2 + n \sum_{i=1}^n \sigma_i^2.
    \end{align}

 \begin{proof}
     From the definition of variance, we have:
     \begin{align}
         \mathbb E[\|\sum_{i=1}^n v_i\|^2] &= \mathbb \|\sum_{i=1}^n \mathcal{E}_i\|^2 + \mathbb E[\|\sum_{i=1}^n (v_i - \mathcal{E}_i)\|^2]\\
         &\leq \mathbb \|\sum_{i=1}^n \mathcal{E}_i\|^2 + n \sum_{i=1}^n \mathbb E[\| v_i - \mathcal{E}_i\|^2]\\
         &= \mathbb \|\sum_{i=1}^n \mathcal{E}_i\|^2 + n \sum_{i=1}^n \sigma_i^2,\\
    \end{align}
    where the inequality is based on the \Cref{lemma:relaxed_triangle}.
 \end{proof}   
\end{lemma}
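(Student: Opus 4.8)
The plan is to prove the inequality by a bias--variance decomposition of the sum, followed by a single application of the second inequality in \Cref{lemma:relaxed_triangle} to control the centered fluctuations. First I would center each variable, writing $v_i = \mathcal{E}_i + (v_i - \mathcal{E}_i)$, so that $\sum_{i=1}^n v_i = \sum_{i=1}^n \mathcal{E}_i + \sum_{i=1}^n (v_i - \mathcal{E}_i)$. Expanding the squared norm of the right-hand side produces the deterministic term $\|\sum_{i=1}^n \mathcal{E}_i\|^2$, a cross term $2\langle \sum_{i=1}^n \mathcal{E}_i, \sum_{i=1}^n (v_i - \mathcal{E}_i)\rangle$, and the centered term $\|\sum_{i=1}^n (v_i - \mathcal{E}_i)\|^2$.

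Next I would take expectations and observe that the cross term vanishes: the deterministic vector $\sum_{i=1}^n \mathcal{E}_i$ factors out of the expectation, and $\mathbb{E}\big[\sum_{i=1}^n (v_i - \mathcal{E}_i)\big] = \sum_{i=1}^n (\mathbb{E}[v_i] - \mathcal{E}_i) = 0$ by the hypothesis $\mathcal{E}_i = \mathbb{E}[v_i]$. This yields the exact identity $\mathbb{E}[\|\sum_{i=1}^n v_i\|^2] = \|\sum_{i=1}^n \mathcal{E}_i\|^2 + \mathbb{E}[\|\sum_{i=1}^n (v_i - \mathcal{E}_i)\|^2]$, reducing the claim to bounding the expected squared norm of the centered sum. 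This is precisely the step that guarantees the bias contribution $\|\sum_{i=1}^n \mathcal{E}_i\|^2$ enters with coefficient one rather than being inflated by a triangle inequality.

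Finally, applying the second bullet of \Cref{lemma:relaxed_triangle} to the $n$ vectors $v_i - \mathcal{E}_i$ gives the pointwise bound $\|\sum_{i=1}^n (v_i - \mathcal{E}_i)\|^2 \leq n \sum_{i=1}^n \|v_i - \mathcal{E}_i\|^2$; taking expectations and substituting $\mathbb{E}[\|v_i - \mathcal{E}_i\|^2] = \sigma_i^2$ completes the bound. Since this is the sole inequality invoked, there is no genuine obstacle, and the whole argument is a short chain of equalities terminated by one convexity estimate.

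The only point worth flagging is that \emph{no independence} (or even uncorrelatedness) of the $v_i$ is assumed, which is exactly why the factor $n$, rather than $1$, appears: the relaxed triangle inequality is a worst-case convexity bound valid for arbitrary dependence, and replacing $n\sum_{i=1}^n \sigma_i^2$ by the tighter $\sum_{i=1}^n \sigma_i^2$ would require the centered variables to be pairwise uncorrelated so that the cross terms in $\mathbb{E}[\|\sum_{i=1}^n (v_i - \mathcal{E}_i)\|^2]$ drop out. Thus the lemma is deliberately stated in a dependence-agnostic form suited to the aggregated model updates, where such independence cannot be presumed.
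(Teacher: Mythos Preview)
Your proof is correct and follows exactly the same approach as the paper: the bias--variance decomposition (which the paper compresses into ``from the definition of variance'') followed by a single application of the second part of \Cref{lemma:relaxed_triangle} to the centered sum. Your additional remarks on why the cross term vanishes and on the dependence-agnostic factor $n$ are accurate elaborations, but the argument itself is identical.
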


\begin{property}[\textbf{Parallel Composition \cite{Yu2019DifferentiallyPM}}]\label{prop:parallel_composition}
Assume each of the randomized mechanisms $M_i: \mathcal{D}_i \to \mathbb R$ for $i \in [n]$ satisfies $(\epsilon_i, \delta_i)$-\DP and their domains $\mathcal{D}_i$ are disjoint subsets. Any function $g$ of the form $g(M_1, \ldots, M_n)$ satisfies $(\max_i \epsilon_i, \max_i \delta_i)$-\DP. 

\end{property}






\newpage
\section{Proofs}\label{sec:appendix_proofs}


\localdp*
\begin{proof}
The proof for the first part follows the proof of \algname{DPSGD} algorithm \cite{Abadi2016}.
Also, in \algname{Robust-HDP}, each client $i$ runs \algname{DPSGD} locally to achieve $(\epsilon_i, \delta_i)$-\DP independently. Hence, it satisfies heterogeneous \DP with the set of preferences $\{(\epsilon_i, \delta_i)\}_{i=1}^n$. Also, the clients datasets $\{\mathcal{D}_i\}_{i=1}^n$ are
disjoint. Hence, as \algname{Robust-HDP} runs RPCA on the clients models updates, it satisfies $\big(\max(\{\epsilon_i\}_{i=1}^n), \max(\{\delta_i\}_{i=1}^n)\big)$-\DP, according to parallel composition property above.
\end{proof}

\Robusthdp*
\begin{proof}

From our assumption \ref{assump:lipschitz_smooth_bounded} and that we use cross-entropy loss, we can conclude that Assumption \ref{assump:bounded_sample_grad} also holds for some $\mathcal{C}$. In that case, we have:

\begin{align}
    \Tilde{g}_i(\thetav) & = \frac{\sum_{j \in \mathcal{B}_i^t} g_{ij}(\thetav)}{b_i} + \mathcal{N}(\mathbf{0}, \frac{\sigma_{i, \texttt{\DP}}^2}{b_i^2} \mathbb{I}_p) = g_i(\thetav) + \mathcal{N}(\mathbf{0}, \frac{\sigma_{i, \texttt{\DP}}^2}{b_i^2} \mathbb{I}_p)
\end{align}

Therefore:
\begin{align}\label{eq:newvariance}
    & \mathbb E[\Tilde{g}_i(\thetav)] = \mathbb E[g_i(\thetav)] = \nabla f_i(\thetav) \nonumber
    \\
    & \texttt{Var}(\Tilde{g}_i(\thetav)) = \texttt{Var}(g_i(\thetav)) + \frac{p \sigma_{i, \texttt{\DP}}^2}{b_i^2} \leq \sigma_{i, \Tilde{g}}^2 := \sigma_{i, g}^2 + \frac{p \sigma_{i, \texttt{\DP}}^2}{b_i^2}.
\end{align}
i.e., the assumption of having unbiased gradient with bounded variance still holds (with a larger bound $\sigma_{i, \Tilde{g}}^2$, due to adding \DP noise). Consistent with the previous notations, we assume that the set of participating clients in round $e$ are $\mathcal{S}^e$, and for every client $i\notin \mathcal{S}^e$, we set $w_i^e=0$. Using this, we can write the model parameter at the end of round $e$ as:

\begin{align}
    \thetav^{e+1} = \sum_{i=1}^n w_i^e \thetav_{i, E_i}^e,
\end{align}
where $\{E_i\}_{i=1}^n$ is the heterogeneous number of gradient steps of clients (depending on their dataset size and batch size). From $\thetav_{i,k}^e = \thetav_{i,k-1}^e - \eta_l \Tilde{g}_i(\thetav_{i,k-1}^e)$, we can rewrite the equation above as:

\begin{align}
    \thetav^{e+1} = \thetav^e - \eta_l \sum_{i\in\mathcal{S}^e} w_i^e \sum_{k=1}^{E_i} \Tilde{g}_i(\thetav_{i,k-1}^e) = \thetav^e - \eta_l \sum_{i=1}^n w_i^e \sum_{k=1}^{E_i} \Tilde{g}_i(\thetav_{i,k-1}^e) = \thetav^e - \eta_l \sum_{i=1}^n w_i^e \sum_{k=0}^{E_i-1} \Tilde{g}_i(\thetav_{i,k}^e)
\end{align}

Note that the second equality holds because we assumed above that if client $i$ is not participating in round $e$ (i.e., $i \notin \mathcal{S}^e$), we set $w_i^e=0$. From $\beta$-smoothness of $\{f_i\}_{i=1}^n$, and consequently $\beta$-smoothness of $f$, we have:

\begin{align}
     f(\thetav^{e+1}) &\leq f(\thetav^{e}) + \langle \nabla f(\thetav^{e}), \thetav^{e+1}-\thetav^{e}\rangle + \frac{\beta}{2} \| \thetav^{e+1}- \thetav^{e}\|^2 \nonumber\\
     & = f(\thetav^{e}) - \eta_l \big\langle \nabla f(\thetav^{e}), \sum_{i=1}^n w_i^e \sum_{k=0}^{E_i-1} \Tilde{g}_i(\thetav_{i,k}^e)\big\rangle + \frac{\beta \eta_l^2}{2} \big\|\sum_{i=1}^n w_i^e \sum_{k=0}^{E_i-1} \Tilde{g}_i(\thetav_{i,k}^e)\big\|^2
\end{align} 

Now, we use identity $\Tilde{g}_i(\thetav_{i,k}^e) = \nabla f(\thetav^e) + \Tilde{g}_i(\thetav_{i,k}^e) - \nabla f(\thetav^e)$ to rewrite the equation above as:

\begin{align}
     f(\thetav^{e+1}) &\leq f(\thetav^{e}) - \eta_l \big\langle \nabla f(\thetav^{e}), \sum_{i=1}^n w_i^e \sum_{k=0}^{E_i-1} \nabla f(\thetav^e)\big\rangle - \eta_l \big\langle \nabla f(\thetav^{e}), \sum_{i=1}^n w_i^e \sum_{k=0}^{E_i-1} \big(\Tilde{g}_i(\thetav_{i,k}^e)-\nabla f(\thetav^e)\big)\big\rangle \nonumber \\
     & + \frac{\beta \eta_l^2}{2} \big\|\sum_{i=1}^n w_i^e \sum_{k=0}^{E_i-1} \big(\Tilde{g}_i(\thetav_{i,k}^e) - \nabla f(\thetav^{e})\big) + \sum_{i=1}^n w_i^e E_i \nabla f(\thetav^{e})\big\|^2 \nonumber \\
\end{align}

Hence,
\begin{align}
     f(\thetav^{e+1}) &\leq f(\thetav^{e}) - \eta_l \big\langle \nabla f(\thetav^{e}), \sum_{i=1}^n w_i^e \sum_{k=0}^{E_i-1} \nabla f(\thetav^e)\big\rangle - \eta_l \big\langle \nabla f(\thetav^{e}), \sum_{i=1}^n w_i^e \sum_{k=0}^{E_i-1} \big(\Tilde{g}_i(\thetav_{i,k}^e)-\nabla f(\thetav^e)\big)\big\rangle \nonumber \\
     & + \frac{\beta \eta_l^2}{2} \big\|\sum_{i=1}^n w_i^e \sum_{k=0}^{E_i-1} \big(\Tilde{g}_i(\thetav_{i,k}^e) - \nabla f(\thetav^{e})\big)\|^2 + \frac{\beta \eta_l^2}{2} \underbrace{(\sum_{i=1}^n w_i^e E_i)^2}_{\Bar{E}_l^{e^2}}\big\|\nabla f(\thetav^{e})\big\|^2 \nonumber \\
     &+ \beta \eta_l^2 \big\langle \sum_{i=1}^n w_i^e E_i \nabla f(\thetav^{e}), \sum_{i=1}^n w_i^e \sum_{k=0}^{E_i-1} \big(\Tilde{g}_i(\thetav_{i,k}^e)-\nabla f(\thetav^e)\big)\big\rangle.
\end{align} 

\textbf{Note that we denote $\sum_{i=1}^n w_i^e E_i$ with $\Bar{E}_l^e$ from now on}. With doing some algebra we get to:

\begin{align}
     f(\thetav^{e+1}) &\leq f(\thetav^{e}) - \eta_l \Bar{E}_l^e (1 - \frac{\beta}{2}\eta_l \Bar{E}_l^e) \|\nabla f(\thetav^e)\|^2 \nonumber \\
     & -\eta_l (1-\beta \eta_l \Bar{E}_l^e) \big\langle \nabla f(\thetav^{e}), \sum_{i=1}^n w_i^e \sum_{k=0}^{E_i-1} \big(\Tilde{g}_i(\thetav_{i,k}^e)-\nabla f(\thetav^e)\big)\big\rangle \nonumber \\
     & + \frac{\beta \eta_l^2}{2} \bigg\|\sum_{i=1}^n w_i^e \sum_{k=0}^{E_i-1} \big(\Tilde{g}_i(\thetav_{i,k}^e) - \nabla f(\thetav^{e})\big)\bigg\|^2. 
\end{align} 

By taking expectation from both side (expectation is conditioned on $\thetav^e$) and using Cauchy-Schwarz inequality, we have:

\begin{align}
     \mathbb E \big[f(\thetav^{e+1})\big] &\leq \mathbb E \big[f(\thetav^{e})\big] - \eta_l \Bar{E}_l^e (1 - \frac{\beta \eta_l}{2} \Bar{E}_l^e) \mathbb E \big[\|\nabla f(\thetav^e)\|^2\big] \nonumber \\
     & +\eta_l (1-\beta \eta_l \Bar{E}_l^e) \mathbb E\bigg[ \|\nabla f(\thetav^{e})\|\times \bigg\| \sum_{i=1}^n w_i^e \sum_{k=0}^{E_i-1} \big(\nabla f_i(\thetav_{i,k}^e)-\nabla f(\thetav^e)\big)\bigg\|\bigg] \nonumber \\
     & + \frac{\beta \eta_l^2}{2} \mathbb E \bigg[\bigg\|\sum_{i=1}^n w_i^e \sum_{k=0}^{E_i-1} \big(\Tilde{g}_i(\thetav_{i,k}^e) - \nabla f(\thetav^{e})\big)\bigg\|^2\bigg]. 
\end{align} 

Now, we use the inequality $ab \leq \frac{1}{2}(a^2 + b^2)$ for the second line to get:

\begin{align}
     \mathbb E \big[f(\thetav^{e+1})\big] &\leq \mathbb E \big[f(\thetav^{e})\big] + \underbrace{\big(\frac{1}{2}\eta_l (1-\beta \eta_l \Bar{E}_l^e) - \eta_l \Bar{E}_l^e (1 - \frac{\beta \eta_l}{2} \Bar{E}_l^e)\big)}_{\leq -\eta_l \frac{11 \Bar{E}_l^e-6}{12}} \mathbb E \big[\|\nabla f(\thetav^e)\|^2\big] \nonumber \\
     & +\frac{1}{2}\eta_l (1-\beta \eta_l \Bar{E}_l^e) \mathbb E\bigg[\bigg\| \sum_{i=1}^n w_i^e \sum_{k=0}^{E_i-1} \big(\nabla f_i(\thetav_{i,k}^e)-\nabla f(\thetav^e)\big)\bigg\|^2\bigg] \nonumber \\
     & + \frac{\beta \eta_l^2}{2} \mathbb E \bigg[\bigg\|\sum_{i=1}^n w_i^e \sum_{k=0}^{E_i-1} \big(\Tilde{g}_i(\thetav_{i,k}^e) - \nabla f(\thetav^{e})\big)\bigg\|^2\bigg],
\end{align} 

where the constant inequality in the first line is achieved from our assumption that $\eta_l \leq \frac{1}{6 \beta E_i}$ (and consequently: $\eta_l \leq \frac{1}{6 \beta \Bar{E}_l^e}$):

\begin{align}\label{eq:linear_term_grad_norm}
\frac{1}{2}\eta_l (1 - {\beta} \eta_l \Bar{E}_l^e) - \eta_l \Bar{E}_l^e (1 - \frac{{\beta}\eta_l}{2}\Bar{E}_l^e ) &= -\eta_l \left(\Bar{E}_l^e - \frac{1}{2} - \frac{{\beta} \eta_l}{2}\Bar{E}_l^{e^2} + \frac{\beta \eta_l \Bar{E}_l^e}{2} \right) \nonumber \\
&\leq -\eta_l \left( \frac{11\Bar{E}_l^e - 6}{12} + \frac{{\beta} \eta_l \Bar{E}_l^e}{2}\right) \nonumber \\
&\leq -\eta_l \frac{11\Bar{E}_l^e - 6}{12}.
\end{align}

Therefore,

\begin{align}
     \mathbb E \big[f(\thetav^{e+1})\big] &\leq \mathbb E \big[f(\thetav^{e})\big] -\eta_l \frac{11 \Bar{E}_l^e-6}{12} \mathbb E \big[\|\nabla f(\thetav^e)\|^2\big] \nonumber \\
     & +\frac{1}{2}\eta_l (1-\beta \eta_l \Bar{E}_l^e) \mathbb E\bigg[\bigg\| \sum_{i=1}^n w_i^e \sum_{k=0}^{E_i-1} \big(\nabla f_i(\thetav_{i,k}^e)-\nabla f(\thetav^e)\big)\bigg\|^2\bigg] \nonumber \\
     & + \frac{\beta \eta_l^2}{2} \mathbb E \bigg[\bigg\|\sum_{i=1}^n w_i^e \sum_{k=0}^{E_i-1} \big(\Tilde{g}_i(\thetav_{i,k}^e) - \nabla f(\thetav^{e})\big)\bigg\|^2\bigg].
\end{align} 

Now, we use the relaxed triangle inequality $\|a+b\|^2 \leq 2(\|a\|^2+\|b\|^2)$
for the last line above:

\begin{align}\label{eq:mathcalAandB}
     \mathbb E \big[f(\thetav^{e+1})\big] &\leq \mathbb E \big[f(\thetav^{e})\big] -\eta_l \frac{11 \Bar{E}_l^e-6}{12} \mathbb E \big[\|\nabla f(\thetav^e)\|^2\big] \nonumber \\
     & +\frac{1}{2}\eta_l (1-\beta \eta_l \Bar{E}_l^e) \underbrace{\mathbb E\bigg[\bigg\| \sum_{i=1}^n w_i^e \sum_{k=0}^{E_i-1} \big(\nabla f_i(\thetav_{i,k}^e)-\nabla f(\thetav^e)\big)\bigg\|^2\bigg]}_{\mathcal{B}} \nonumber \\
     & + \beta \eta_l^2 \underbrace{\mathbb E \bigg[\bigg\|\sum_{i=1}^n w_i^e \sum_{k=0}^{E_i-1} \big(\Tilde{g}_i(\thetav_{i,k}^e) - \nabla f_i(\thetav_{i,k}^{e})\big)\bigg\|^2\bigg]}_{\mathcal{A}}  + \beta \eta_l^2 \underbrace{\mathbb E \bigg[\bigg\|\sum_{i=1}^n w_i^e \sum_{k=0}^{E_i-1} \big(\nabla f_i(\thetav_{i,k}^e) - \nabla f(\thetav^{e})\big)\bigg\|^2\bigg]}_{\mathcal{B}}
\end{align}

Now, we bound each of the terms $\mathcal{A}$ and $\mathcal{B}$ separately:

\begin{align}
    \mathcal{A} &\leq \mathbb E \bigg[\bigg(\sum_{i=1}^n w_i^e \sum_{k=0}^{E_i-1} \big\|\Tilde{g}_i(\thetav_{i,k}^e) - \nabla f_i(\thetav_{i,k}^{e})\big\|\bigg)^2\bigg] \leq \mathbb E \bigg[\sum_{i=1}^n (w_i^e)^2 \times \sum_{i=1}^n\bigg(\sum_{k=0}^{E_i-1} \big\|\Tilde{g}_i(\thetav_{i,k}^e) - \nabla f_i(\thetav_{i,k}^{e})\big\|\bigg)^2\bigg]\nonumber \\
    &= \mathbb E \bigg[\|\wv^e\|^2 \sum_{i=1}^n\bigg(\sum_{k=0}^{E_i-1} \big\|\Tilde{g}_i(\thetav_{i,k}^e) - \nabla f_i(\thetav_{i,k}^{e})\big\|\bigg)^2\bigg]= \mathbb E \bigg[\sum_{i=1}^n\bigg(\sum_{k=0}^{E_i-1} \big\|\Tilde{g}_i(\thetav_{i,k}^e) - \nabla f_i(\thetav_{i,k}^{e})\big\|\bigg)^2\bigg]\nonumber \\
    &\leq \sum_{i=1}^n E_i\sum_{k=0}^{E_i-1} \mathbb E \bigg[\bigg\|\Tilde{g}_i(\thetav_{i,k}^e) - \nabla f_i(\thetav_{i,k}^{e})\bigg\|^2\bigg] \leq \sum_{i=1}^n E_i^2 \sigma_{i, \Tilde{g}}^2,
\end{align}

where in the first and second inequalities, we used Cauchy-Schwarz inequality. In the last inequality, we used \Cref{eq:newvariance}. Similarly, we can bound $\mathcal{B}$:

\begin{align}
    \mathcal{B} & = \mathbb E \bigg[\bigg\|\sum_{i=1}^n w_i^e \sum_{k=0}^{E_i-1} \big(\nabla f_i(\thetav_{i,k}^e) - \nabla f(\thetav^{e})\big)\bigg\|^2\bigg] = \mathbb E \bigg[\bigg\|\sum_{i=1}^n w_i^e \sum_{k=0}^{E_i-1} \nabla f_i(\thetav_{i,k}^e) - \sum_{i=1}^n w_i^e \sum_{k=0}^{E_i-1} \nabla f(\thetav^e)\bigg\|^2\bigg] \nonumber \\
    &  = \mathbb E \bigg[\bigg\|\sum_{i=1}^n w_i^e \sum_{k=0}^{E_i-1} \nabla f_i(\thetav_{i,k}^e) - \underbrace{\big(\sum_{i=1}^n w_i^e E_i \big)}_{\Bar{E}_l^e} \nabla f(\thetav^e)\bigg\|^2\bigg] = \mathbb E \bigg[\bigg\|\bigg(\sum_{i=1}^n w_i^e \sum_{k=0}^{E_i-1} \nabla f_i(\thetav_{i,k}^e)\bigg) - \Bar{E}_l^e \nabla f(\thetav^e)\bigg\|^2\bigg].
\end{align}
\textbf{Let us also define $\Delta_i^e := w_i^e - \lambda_i$ for client $i$} to be the difference between the aggregation weight of client $i$ in round $e$ ($w_i^e$) and its corresponding aggregation weights in the global objective function $f(\thetav)$ ($\lambda_i$). With this definition and that $\nabla f(\thetav^e) = \sum_{i=1}^n \lambda_i \nabla f_i(\thetav^e)$, we have:

\begin{align}\label{eq:boundonB}
    \mathcal{B} & = \mathbb E \bigg[\bigg\|\bigg(\sum_{i=1}^n \Delta_i^e \sum_{k=0}^{E_i-1} \nabla f_i(\thetav_{i,k}^e)\bigg) + \bigg(\sum_{i=1}^n \lambda_i \sum_{k=0}^{E_i-1} \nabla f_i(\thetav_{i,k}^e)\bigg) - \bigg(\sum_{i=1}^n \lambda_i \Bar{E}_l^e \nabla f_i(\thetav^e)\bigg) \bigg\|^2\bigg] \nonumber \\ 
    & \leq \underbrace{2 \mathbb E \bigg[\bigg\|\sum_{i=1}^n \Delta_i^e \sum_{k=0}^{E_i-1} \nabla f_i(\thetav_{i,k}^e)\bigg \|^2 \bigg]}_{\mathcal{C}} + \underbrace{2\mathbb E \bigg[\bigg \|\bigg(\sum_{i=1}^n \lambda_i \sum_{k=0}^{E_i-1} \nabla f_i(\thetav_{i,k}^e)\bigg) - \bigg(\sum_{i=1}^n \lambda_i \Bar{E}_l^e \nabla f_i(\thetav^e)\bigg) \bigg\|^2\bigg]}_{\mathcal{D}}.
\end{align}

Now, we bound each of the terms $\mathcal{C}$ and $\mathcal{D}$, separately:
\begin{align}
    \mathcal{C} &= 2 \mathbb E \bigg[\bigg\|\sum_{i=1}^n \Delta_i^e \sum_{k=0}^{E_i-1} \nabla f_i(\thetav_{i,k}^e)\bigg \|^2 \bigg] \nonumber \\
    &\leq 4 \mathbb E \bigg[\bigg\|\sum_{i=1}^n \Delta_i^e \sum_{k=0}^{E_i-1} \big(\nabla f_i(\thetav_{i,k}^e)-\nabla f_i(\thetav^e)\big)\bigg \|^2 \bigg] + 4 \mathbb E \bigg[\bigg\|\sum_{i=1}^n E_i \Delta_i^e \nabla f_i(\thetav^e)\bigg \|^2 \bigg]\nonumber \\
    & 
    \leq  4 \mathbb E\bigg[(\sum_{i=1}^n E_i) \sum_{i=1}^n \sum_{k=0}^{E_i-1} |\Delta_i^e|^2 \|\nabla f_i(\thetav_{i,k}^e) -\nabla f_i(\thetav^e)\|^2\bigg] + 4 \mathbb E \bigg[ n \sum_{i=1}^n \bigg\| E_i \Delta_i^e \nabla f_i(\thetav^e)\bigg \|^2 \bigg]\nonumber \\
    & 
    \leq  4(\sum_{i=1}^n E_i) \beta^2 \sum_{i=1}^n \sum_{k=0}^{E_i-1} |\Delta_i^e|^2 \mathbb E[\|\thetav_{i,k}^e -\thetav^e\|^2] + 4nL_0^2\sum_{i=1}^n E_i^2 \mathbb E[|\Delta_i^e|^2] \nonumber \\
    &\leq  4 \beta^2(\sum_{i=1}^n E_i) \sum_{i=1}^n \sum_{k=0}^{E_i-1} \mathbb E[\|\thetav_{i,k}^e -\thetav^e\|^2] + 4nL_0^2\sum_{i=1}^n E_i^2 \Eb[|\Delta_i^e|^2],
\end{align}
where in the third line, we have used relaxed triangle inequality, and in the fourth line, we have used $\beta$-smoothness and $L_0$-Lipschitz continuity of $f_i$. Also, in the last line we used $|\Delta_i^e| \leq 1$. Similarly:
\begin{align}
    \mathcal{D} &= 2\mathbb E \bigg[\bigg \|\sum_{i=1}^n \lambda_i\bigg( \sum_{k=0}^{E_i-1} \nabla f_i(\thetav_{i,k}^e) - \Bar{E}_l^e \nabla f_i(\thetav^e)\bigg) \bigg\|^2\bigg] \nonumber \\
    & \leq 2 \|\lambdav\|^2\sum_{i=1}^n \mathbb E \bigg[\bigg \| \sum_{k=0}^{E_i-1} \nabla f_i(\thetav_{i,k}^e) - \Bar{E}_l^e \nabla f_i(\thetav^e) \bigg\|^2\bigg]\nonumber \\
    & \leq 2 \|\lambdav\|^2\sum_{i=1}^n \mathbb E \bigg[\bigg \| \sum_{k=0}^{E_i-1} \big(\nabla f_i(\thetav_{i,k}^e)- \nabla f_i(\thetav^e)\big) + \big(E_i - \Bar{E}_l^e\big) \nabla f_i(\thetav^e) \bigg\|^2\bigg] \nonumber \\
    & \leq 4 \|\lambdav\|^2\sum_{i=1}^n \mathbb E \bigg[\bigg \| \sum_{k=0}^{E_i-1} \nabla f_i(\thetav_{i,k}^e)- \nabla f_i(\thetav^e)\bigg\|^2 + \big(E_i - \Bar{E}_l^e\big)^2 \underbrace{\big\| \nabla f_i(\thetav^e) \big\|^2}_{\leq L_0^2}\bigg] \nonumber \\
    & \leq 4 \beta^2 \|\lambdav\|^2\sum_{i=1}^n E_i \sum_{k=0}^{E_i-1} \mathbb E \big[\big \|\thetav_{i,k}^e- \thetav^e\big\|^2 \big]+ 4L_0^2 \|\lambdav\|^2 \sum_{i=1}^n \Eb[(E_i - \Bar{E}_l^e)^2].
\end{align}

In the first inequality, we used convexity of the norm function, and Cauchy-Schwarz inequality. Hence, by plugging the bounds above on $\mathcal{C}$ and $\mathcal{D}$ into 
\Cref{eq:boundonB}, we get:

\begin{align}
    \mathcal{B} & \leq 4 \beta^2 (1+\sum_{i=1}^n E_i) \bigg(\sum_{i=1}^n E_i \sum_{k=0}^{E_i-1} \mathbb E \big[\big\|\thetav_{i,k}^e - \thetav^{e}\big\|^2\big]\bigg) + 4L_0^2 \bigg( n\sum_{i=1}^n E_i^2 \Eb [|\Delta_i^e|^2] + \|\lambdav\|^2 \sum_{i=1}^n \Eb[(E_i-\Bar{E}_l^e)^2]\bigg)
\end{align}

In the following, we first simplify \Cref{eq:mathcalAandB}, and then, we plugg the bounds above on $\mathcal{A}$ and $\mathcal{B}$ in it. We have:
\begin{align}
     \mathbb E \big[f(\thetav^{e+1})\big] &\leq \mathbb E \big[f(\thetav^{e})\big] -\eta_l \frac{11 \Bar{E}_l^e-6}{12} \mathbb E \big[\|\nabla f(\thetav^e)\|^2\big] \nonumber \\
     & + \beta \eta_l^2 \underbrace{\mathbb E \bigg[\bigg\|\sum_{i=1}^n w_i^e \sum_{k=0}^{E_i-1} \big(\Tilde{g}_i(\thetav_{i,k}^e) - \nabla f_i(\thetav_{i,k}^{e})\big)\bigg\|^2\bigg]}_{\mathcal{A}}  \nonumber \\
     & + \underbrace{(\beta \eta_l^2 + \frac{1}{2}\eta_l (1-\beta\eta_l\Bar{E}_l^e))}_{< \frac{2}{3}\eta_l} \underbrace{\mathbb E \bigg[\bigg\|\sum_{i=1}^n w_i^e \sum_{k=0}^{E_i-1} \big(\nabla f_i(\thetav_{i,k}^e) - \nabla f(\thetav^{e})\big)\bigg\|^2\bigg]}_{\mathcal{B}},
\end{align} 
where from the assumption $\eta_l \leq \frac{1}{6 \beta E_i}$, we get to $\frac{\beta \eta_l^2}{2} \leq \frac{\eta_l}{12}$. Hence:
\begin{align}
    \beta \eta_l^2 + \frac{1}{2}\eta_l (1-\beta\eta_l\Bar{E}_l^e) = \beta \eta_l^2(1-\frac{\Bar{E}_l^e}{2}) + \frac{\eta_l}{2} \leq \frac{\beta \eta_l^2}{2} + \frac{\eta_l}{2} \leq \frac{\eta_l}{12} + \frac{\eta_l}{2} < \frac{2\eta_l}{3}.
\end{align}

Therefore, by plugging in the bounds on $\mathcal{A}$ and $\mathcal{B}$, we have:
\begin{align}\label{eq:inequalitywithdrift}
     \mathbb E \big[f(\thetav^{e+1})\big] & \leq \mathbb E \big[f(\thetav^{e})\big] -\eta_l \frac{11 \Bar{E}_l^e-6}{12} \mathbb E \big[\|\nabla f(\thetav^e)\|^2\big] + \beta \eta_l^2  \sum_{i=1}^n E_i^2 \sigma_{i, \Tilde{g}}^2  \nonumber \\
     & + \bigg( \frac{8}{3} \beta^2 \eta_l (1+\sum_{i=1}^n E_i) \bigg(\sum_{i=1}^n E_i \sum_{k=0}^{E_i-1} \mathbb E \big[\big\|\thetav_{i,k}^e - \thetav^{e}\big\|^2\big]\bigg)\bigg)\nonumber \\
     & + \bigg(\frac{8}{3}L_0^2 \eta_l \bigg( n\sum_{i=1}^n E_i^2 \Eb[|\Delta_i^e|^2] + \|\lambdav\|^2 \sum_{i=1}^n \Eb[(E_i-\Bar{E}_l^e)^2]\bigg)\bigg).
\end{align}

We now have the following lemma to bound local drift of clients during each communication round $e$:

\begin{restatable}[\textbf{Bounded local drifts}]{lemma}{localdrift}\label{lemma:localdrift}
Suppose Assumption \ref{assump:lipschitz_smooth_bounded} holds. The local drift happening at client $i$ during communication round $e$ is bounded:
\begin{align}
    \xi_i^e:= \sum_{k=0}^{E_i-1} \mathbb E \big[\big\|\thetav_{i,k}^e - \thetav^{e}\big\|^2\big] \leq (\texttt{cte} - 2) E_i^2 \eta_l^2 \left(\sigma_{i, \Tilde{g}^2} + 6 E_i \sigma^2 +  6E_i \Eb [\|\nabla f(\thetav^e)\|^2] \right),
\end{align}
where $\texttt{cte}$ is the mathematical constant $e$.
\begin{proof}

From $\thetav_{i,0}^e = \thetav^e$, we only need to focus on $E_i \geq 2$. We have:
\begin{align}\label{eq:local_updates}
\mathbb E\|\thetav_{i, k}^e - \thetav^e \|^2 &= \mathbb E[\|\thetav_{i,k-1}^e - \eta_l \Tilde{g}_i(\thetav_{i,k-1}^e)- \thetav^e\|^2] \nonumber \\
& \leq \mathbb E[\|\thetav_{i,k-1}^e - \eta_l \nabla f_i(\thetav_{i,k-1}^e)- \thetav^e\|^2] + \eta_l^2 \sigma_{i, \Tilde{g}}^2
\end{align}
where the inequality comes from \Cref{lemma:meanvar}. The first term on the right side of the above inequality can be bounded as: 
\begin{align}\label{eq:local_diff}
\mathbb E[\|\thetav_{i,k-1}^e - \eta_l \nabla f_i(\thetav_{i,k-1}^e)- \thetav^e\|^2] \leq \left( 1 + \frac{1}{2E_i - 1}\right) \mathbb E[\|\thetav_{i,k-1}^e - \thetav^e\|^2] + 2E_i \eta_l^2 \mathbb E\| [\nabla f_i(\thetav_{i, k-1}^e)\|^2],
\end{align}
where we have used \Cref{lemma:relaxed_triangle}. Now, we bound the last term  in the above inequality. We have:
\begin{align}
\nabla f_i(\thetav_{i, k-1}^e) = (\nabla f_i(\thetav_{i, k-1}^e) - \nabla f_i(\thetav^e)) + (\nabla f_i(\thetav^e) - \nabla f(\thetav^e)) + \nabla f(\thetav^e),
\end{align}
By using relaxed triangle inequality (\Cref{lemma:relaxed_triangle}) and Assumption \ref{assump:lipschitz_smooth_bounded}, we get:
\begin{align}\label{eq:local_norm_square}
\|\nabla f_i(\thetav_{i, k-1}^e)\|^2 &= 3\|\nabla f_i(\thetav_{i, k-1}^e) - \nabla f_i(\thetav^e)\|^2 + 3\|\nabla f_i(\thetav^e) - \nabla f(\thetav^e)\|^2 + 3\|\nabla f(\thetav^e)\|^2\nonumber \\
& \leq 3\beta^2 \|\thetav_{i, k-1}^e - \thetav^e\|^2 + 3\sigma^2 + 3\|\nabla f(\thetav)\|^2.
\end{align}

Now, we can rewrite \Cref{eq:local_diff} and then \Cref{eq:local_updates}:
\begin{align}
\mathbb E\|\thetav_{i, k}^e - \thetav^e \|^2 &\leq \underbrace{\left(1 + \frac{1}{2E_i - 1} + 6E_i \beta^2 \eta_l^2\right)}_{\leq 1+\frac{1}{E_i}} \mathbb E[\|\thetav_{i,k-1}^e - \thetav^e\|^2] \nonumber \\
& + \eta_l^2(6 E_i \sigma^2+\sigma_{i, \Tilde{g}}^2) + 6 E_i \eta_l^2 \mathbb E\| \nabla f(\thetav^e)\|^2 \nonumber \\
& \leq  (1+\frac{1}{E_i}) \mathbb E[\|\thetav_{i,k-1}^e - \thetav^e\|^2] + \eta_l^2(6 E_i \sigma^2+\sigma_{i, \Tilde{g}}^2) + 6 E_i \eta_l^2 \mathbb E [\| \nabla f(\thetav^e)\|^2]
\end{align}

From the inequality above and that $\mathbb E\|\thetav_{i, 0}^e - \thetav^e \|^2 = 0$, we have:
\begin{align}
    &\mathbb E\|\thetav_{i, 1}^e - \thetav^e \|^2 \leq \gamma \nonumber\\
    &\mathbb E\|\thetav_{i, 2}^e - \thetav^e \|^2 \leq (1+\frac{1}{E_i})\gamma + \gamma \nonumber\\
    &\mathbb E\|\thetav_{i, 3}^e - \thetav^e \|^2 \leq (1+\frac{1}{E_i})^2\gamma + (1+\frac{1}{E_i})\gamma + \gamma \nonumber\\
    &\ldots \nonumber\\
    &\mathbb E\|\thetav_{i, k}^e - \thetav^e \|^2 \leq (1+\frac{1}{E_i})^{(k-1)}\gamma + \ldots + (1+\frac{1}{E_i})^2\gamma + (1+\frac{1}{E_i})\gamma + \gamma,
\end{align}
where $\gamma = \eta_l^2(6 E_i \sigma^2+\sigma_{i, \Tilde{g}}^2) + 6 E_i \eta_l^2 \mathbb E [\| \nabla f(\thetav^e)\|^2]$. 
By using $1 + q + \dots + q^{n-1} = \frac{q^n - 1}{q-1}$, we get:
\begin{align}
    \mathbb E\|\thetav_{i, k}^e - \thetav^e \|^2 \leq E_i\bigg(\big(1+\frac{1}{E_i}\big)^k - 1\bigg)\big(\eta_l^2(6 E_i \sigma^2+\sigma_{i, \Tilde{g}}^2) + 6 E_i \eta_l^2 \mathbb E [\| \nabla f(\thetav^e)\|^2]\big).
\end{align}
Therefore, we have:

\begin{align}
\sum_{k=0}^{E_i -1} \mathbb E\|\thetav_{i, k}^e - \thetav^e \|^2 &\leq E_i^2 \bigg(\underbrace{\big(1+\frac{1}{E_i}\big)^{E_i}}_{\leq \texttt{cte}} - 2\bigg)\big(\eta_l^2(6 E_i \sigma^2+\sigma_{i, \Tilde{g}}^2) + 6 E_i \eta_l^2 \mathbb E[\| \nabla f(\thetav^e)\|^2]\big) \nonumber \\
& \leq (\texttt{cte}-2) E_i^2 \eta_l^2 \big(6 E_i \sigma^2+\sigma_{i, \Tilde{g}}^2 + 6 E_i \mathbb E[\| \nabla f(\thetav^e)\|^2]\big), 
\end{align}
where $E_i \geq 2$ and $\texttt{cte}$ above is the mathematical constant $e$.
\end{proof}
\end{restatable}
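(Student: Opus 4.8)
The plan is to establish a one-step recursion for the per-iteration local drift $a_k := \mathbb{E}\|\thetav_{i,k}^e - \thetav^e\|^2$ and then unroll it into a geometric sum. Since $\thetav_{i,0}^e = \thetav^e$ we have $a_0 = 0$, so the claimed bound is vacuous when $E_i = 1$ and I only need to handle $E_i \geq 2$. Starting from the local update $\thetav_{i,k}^e = \thetav_{i,k-1}^e - \eta_l \Tilde{g}_i(\thetav_{i,k-1}^e)$, I would first split the noisy gradient into its conditional mean $\nabla f_i(\thetav_{i,k-1}^e)$ and a zero-mean stochastic part. Invoking \Cref{lemma:meanvar} then peels off the noise term and gives $a_k \leq \mathbb{E}\|\thetav_{i,k-1}^e - \eta_l \nabla f_i(\thetav_{i,k-1}^e) - \thetav^e\|^2 + \eta_l^2 \sigma_{i,\Tilde{g}}^2$, where the bounded-variance part of Assumption \ref{assump:lipschitz_smooth_bounded} (extended to $\sigma_{i,\Tilde{g}}^2$ by the added \DP noise) supplies the variance bound.

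Next I would control the remaining full-gradient term via the first form of the relaxed triangle inequality (\Cref{lemma:relaxed_triangle}) with the carefully tuned parameter $a = \tfrac{1}{2E_i-1}$, applied to $v_i = \thetav_{i,k-1}^e - \thetav^e$ and $v_j = -\eta_l\nabla f_i(\thetav_{i,k-1}^e)$; since $1 + \tfrac1a = 2E_i$, this yields $(1 + \tfrac{1}{2E_i-1})\,a_{k-1} + 2E_i\eta_l^2\,\mathbb{E}\|\nabla f_i(\thetav_{i,k-1}^e)\|^2$. To bound the gradient-norm factor I would decompose $\nabla f_i(\thetav_{i,k-1}^e)$ into three pieces: $\nabla f_i(\thetav_{i,k-1}^e) - \nabla f_i(\thetav^e)$, the heterogeneity gap $\nabla f_i(\thetav^e) - \nabla f(\thetav^e)$, and $\nabla f(\thetav^e)$. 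Applying the second form of \Cref{lemma:relaxed_triangle} (the $n=3$ case) and then Assumption \ref{assump:lipschitz_smooth_bounded} bounds these respectively by $3\beta^2 a_{k-1}$ (from $\beta$-smoothness), $3\sigma^2$ (from the $\sigma$-Lipschitz hypothesis on $f_i - f_j$), and $3\|\nabla f(\thetav^e)\|^2$.

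Assembling these yields the scalar recursion $a_k \leq \big(1 + \tfrac{1}{2E_i-1} + 6E_i\beta^2\eta_l^2\big)a_{k-1} + \gamma$, where $\gamma := \eta_l^2(\sigma_{i,\Tilde{g}}^2 + 6E_i\sigma^2) + 6E_i\eta_l^2\,\mathbb{E}\|\nabla f(\thetav^e)\|^2$. Here the step-size condition $\eta_l \leq \tfrac{1}{6\beta E_i}$ does the essential work: it bounds $6E_i\beta^2\eta_l^2 \leq \tfrac{1}{6E_i}$, and for $E_i \geq 2$ one verifies $\tfrac{1}{2E_i-1} + \tfrac{1}{6E_i} \leq \tfrac{1}{E_i}$ (equivalently $4E_i \geq 5$), so the multiplier collapses to exactly $q := 1 + \tfrac{1}{E_i}$. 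With $a_0 = 0$, unrolling the geometric recursion gives $a_k \leq \gamma\sum_{j=0}^{k-1} q^j = \gamma E_i(q^k - 1)$, using $q - 1 = \tfrac{1}{E_i}$.

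Finally I would sum over $k = 0,\ldots,E_i-1$: since $\sum_{k=0}^{E_i-1}(q^k - 1) = E_i(q^{E_i} - 2)$, this gives $\xi_i^e \leq E_i^2(q^{E_i} - 2)\gamma$, and the standard monotone bound $(1 + \tfrac{1}{E_i})^{E_i} \leq \texttt{cte}$ delivers the stated estimate after substituting $\gamma$. The main obstacle, and the only genuinely delicate point, is the deliberate choice $a = \tfrac{1}{2E_i-1}$ in the relaxed triangle inequality: it is calibrated precisely so that, once the step-size condition absorbs the $6E_i\beta^2\eta_l^2$ smoothness contribution, the geometric ratio becomes exactly $1 + \tfrac{1}{E_i}$ and the accumulated exponent is $(1 + \tfrac{1}{E_i})^{E_i}$, which is bounded by $e$ \emph{uniformly} in $E_i$. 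A looser split would leave an $E_i$-dependent prefactor that does not close to the clean constant $\texttt{cte} - 2$; everything else is routine bookkeeping with the two lemmas.
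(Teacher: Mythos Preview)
Your proposal is correct and follows essentially the same route as the paper's proof: the same noise--mean split via \Cref{lemma:meanvar}, the same relaxed-triangle application with parameter $a=\tfrac{1}{2E_i-1}$, the same three-term decomposition of $\nabla f_i(\thetav_{i,k-1}^e)$, the same geometric unrolling, and the same $(1+\tfrac{1}{E_i})^{E_i}\leq e$ endgame. You are in fact a bit more explicit than the paper about why the multiplier collapses to $1+\tfrac{1}{E_i}$ (the paper merely underbraces this, whereas you spell out that $\eta_l\leq\tfrac{1}{6\beta E_i}$ forces $6E_i\beta^2\eta_l^2\leq\tfrac{1}{6E_i}$ and then verify $\tfrac{1}{2E_i-1}+\tfrac{1}{6E_i}\leq\tfrac{1}{E_i}$ for $E_i\geq 2$).
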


We can now plug the bound on local drifts into \Cref{eq:inequalitywithdrift} and get:
\begin{align}\label{eq:inequalitywithpsi}
     \mathbb E \big[f(\thetav^{e+1})\big] & \leq \mathbb E \big[f(\thetav^{e})\big] - \eta_l \bigg (\underbrace{\frac{11 \Bar{E}_l^e-6}{12} - 12 \beta^2\eta_l^2 (1+\sum_{i=1}^n E_i)\big(\sum_{i=1}^n E_i^4\big)}_{\geq \frac{11 \Bar{E}_l^e-7}{12}} \bigg) \mathbb E \big[\|\nabla f(\thetav^e)\|^2\big] \nonumber \\
     & + 6 \beta^2 \eta_l^3(1+\sum_{i=1}^n E_i) \bigg( 2 \sum_{i=1}^n E_i^4 \sigma^2 + \frac{1}{3} \sum_{i=1}^n E_i^3 \sigma_{i, \Tilde{g}}^2 \bigg) + \beta \eta_l^2 \sum_{i=1}^n E_i^2 \sigma_{i, \Tilde{g}}^2\nonumber\\
     & + \frac{8}{3}L_0^2 \eta_l \bigg(n \sum_{i=1}^n E_i^2 \Eb[(w_i^e - \lambda_i)^2] + \|\lambdav\|^2 \sum_{i=1}^n \Eb[(E_i-\Bar{E}_l^e)^2] \bigg),
\end{align} 
where we have used the second condition on $\eta_l$ in the first line to bound the multiplicative factor.

Hence, we have:
\begin{align}
     &\mathbb E \big[f(\thetav^{e+1})\big] \leq \nonumber \\
     &\mathbb E \big[f(\thetav^{e})\big] - \eta_l \bigg (\frac{11 \Bar{E}_l^e-7}{12}\bigg) \mathbb E \big[\|\nabla f(\thetav^e)\|^2\big] \nonumber \\
     & + \eta_l \underbrace{ \bigg(6 \beta^2 \eta_l^2(1+\sum_{i=1}^n E_i) \bigg( 2 \sum_{i=1}^n E_i^4 \sigma^2 + \frac{1}{3} \sum_{i=1}^n E_i^3 \sigma_{i, \Tilde{g}}^2 \bigg) + \beta \eta_l  \sum_{i=1}^n E_i^2 \sigma_{i, \Tilde{g}}^2 \bigg)}_{\Psi_{\sigma}} \nonumber\\
     & + \eta_l\underbrace{\frac{8 L_0^2}{3} \bigg( n\sum_{i=1}^n E_i^2 \Eb[(w_i^e - \lambda_i)^2]  +  \|\lambdav\|^2 \sum_{i=1}^n \Eb[(E_i-\Bar{E}_l^e)^2]\bigg)}_{\Psi_{p}}.\label{eq:psi_ineq}
\end{align}
Remind that $\Bar{E}_l^e=\sum_{i=1}^n w_i^e E_i$ is a weighted average of clients' number of local gradient steps. From above, we have:
\begin{align}\label{eq:finalbound}
\eta_l (\frac{11\Bar{E}_l^e - 7}{12}) \Eb \|\nabla f(\thetav^e)\|^2 \leq \Eb [f(\thetav^e) - f(\thetav^{e+1})] + (\Psi_{\sigma}+ \Psi_{\textit{p}})\eta_l.
\end{align}

We can now replace $\Bar{E}_l^e$, which is a weighted average of $\{E_i\}_{i=1}^n$ in round $e$, with $E_l^{\texttt{min}} = \min_{i} \{E_i\}_{i=1}^n$, and the inequality still holds:

\begin{align}
\eta_l (\frac{11E_l^{\texttt{min}} - 7}{12}) \Eb \|\nabla f(\thetav^e)\|^2 \leq \Eb [f(\thetav^e) - f(\thetav^{e+1})] + (\Psi_{\textit{p}} +\Psi_{\sigma})\eta_l.
\end{align}

By summing both sides of the above inequality over $e=0, \ldots, E-1$ and dividing by $E$, we get:
\begin{align}
    \min_{0\leq e \leq E-1} \mathbb E[\|\nabla f(\thetav^e)\|^2] &\leq \frac{12}{11E_l^{\texttt{min}} - 7} \bigg(\frac{f(\thetav^0)-f^*}{E \eta_l}+  \Psi_{\sigma} + \Psi_{\textit{p}}\bigg),
\end{align}
which completes the proof.
\end{proof}

\clearpage
\newpage
\section{Detailed results}

\subsection{Test accuracy comparison}
In \Cref{table:mnist} to \ref{table:cifar100}, we report the detailed test accuracy values for all algorithms, on all datasets and privacy distributions we study in this work. The results show that Robust-HDP is consistently outperforming the state-of-the-art algorithms across various datasets.  
\begin{table}[hbt!]
\centering
\caption{Comparison of different algorithms (on MNIST, $E=200$). FedAvg achieves $98.6\%$.}
\label{table:mnist}
\begin{tabular}{l|*{8}{c}c}\toprule
\diagbox{alg}{distr}
&\makebox[2.5em]{\footnotesize Dist1}
&\makebox[2.5em]{\footnotesize Dist2}
&\makebox[2.5em]{\footnotesize Dist3}
&\makebox[2.5em]{\footnotesize Dist4}
&\makebox[2.5em]{\footnotesize Dist5}
&\makebox[2.5em]{\footnotesize Dist6}
&\makebox[2.5em]{\footnotesize Dist7}
&\makebox[2.5em]{\footnotesize Dist8}
&\makebox[2.5em]{\footnotesize Dist9}\\
\midrule\midrule
\algname{WeiAvg} \cite{Liu2021ProjectedFA} & \tt \bf 90.08& 88.29& 89.74& 88.20& 84.94 & 81.40& 84.43& 78.71&81.38\\\midrule
\footnotesize \algname{PFA}\cite{Liu2021ProjectedFA} & 88.24 & 87.93 & 88.35 & 88.32 &  85.65 & 82.16 & 83.71& 80.25 & 78.51 \\\midrule
\algname{DPFedAvg} \cite{DPSCAFFOLD2022} & 84.24& 82.84& 83.50& 80.43& 83.02& 75.69& \bf 85.71& 70.58& 80.49\\\midrule
\algname{minimum} $\epsilon$ \cite{Liu2021ProjectedFA}& 77.80& 74.86& 74.86& 71.75& 68.42& 34.32& 77.62& 56.10& 68.44\\\midrule \midrule 
\algname{Robust-HDP} & 
89.83 &
\bf90.71& 
\bf 89.83& 
\bf89.38& 
\bf87.52&
\bf84.60&
84.03&
\bf81.19& 
\bf81.52\\
\bottomrule
\end{tabular}
\vspace{-2mm}
\end{table}

\begin{table}[hbt!]
\centering
\caption{Comparison of different algorithms (on FMNIST, $E=200$). FedAvg achieves $90.28\%$.}
\begin{tabular}{l|*{8}{c}c}\toprule
\diagbox{alg}{distr}
&\makebox[2.5em]{\footnotesize Dist1}
&\makebox[2.5em]{\footnotesize Dist2}
&\makebox[2.5em]{\footnotesize Dist3}
&\makebox[2.5em]{\footnotesize Dist4}
&\makebox[2.5em]{\footnotesize Dist5}
&\makebox[2.5em]{\footnotesize Dist6}
&\makebox[2.5em]{\footnotesize Dist7}
&\makebox[2.5em]{\footnotesize Dist8}
&\makebox[2.5em]{\footnotesize Dist9}\\
\midrule \midrule
\algname{WeiAvg} \cite{Liu2021ProjectedFA} & \bf 77.65 & \bf 78.30 & \bf 75.92 & \bf 77.10& 72.38 & 64.15& 66.80 &66.86 & 64.79\\\midrule
\algname{PFA}\cite{Liu2021ProjectedFA} & 75.03& 74.85& 72.90& \bf 77.10& 72.44& 71.08& 66.30& 67.89& 64.69\\\midrule
\algname{DPFedAvg} \cite{DPSCAFFOLD2022} & 74.12 &71.68 &71.97 &68.10 &70.20 &62.46 & 64.15& 65.87& 65.50\\\midrule
\algname{minimum} $\epsilon$ \cite{Liu2021ProjectedFA}&  73.15 & 64.26 & 64.26 & 62.60& 64.35 & 28.66 & 65.13 & 58.44 & 66.36\\\midrule \midrule
\algname{Robust-HDP} & 75.13 & 76.25 & 75.04 & 76.19 & \bf 73.80 &\bf 71.30 & \bf 66.85& \bf 68.32& \bf 66.96\\\midrule

\end{tabular}

\label{table:fmnist}
\end{table}

\begin{table}[hbt!]
\centering
\caption{Comparison of different algorithms (on CIFAR10, $E=200$). FedAvg achieves $73.55\%$.}
\begin{tabular}{l|*{8}{c}c}\toprule
\diagbox{alg}{distr}
&\makebox[2.5em]{\footnotesize Dist1}
&\makebox[2.5em]{\footnotesize Dist2}
&\makebox[2.5em]{\footnotesize Dist3}
&\makebox[2.5em]{\footnotesize Dist4}
&\makebox[2.5em]{\footnotesize Dist5}
&\makebox[2.5em]{\footnotesize Dist6}
&\makebox[2.5em]{\footnotesize Dist7}
&\makebox[2.5em]{\footnotesize Dist8}
&\makebox[2.5em]{\footnotesize Dist9}\\
\midrule \midrule
\algname{WeiAvg} \cite{Liu2021ProjectedFA} & 31.18& 31.18& 29.65& 27.74& 24.25 & 19.91 & 21.93 & 18.91& 20.64\\\midrule
 \algname{PFA}\cite{Liu2021ProjectedFA} & 26.91 & \bf 32.68 & 25.19& 29.21 & 21.63 & 18.93 & 20.63 & 16.27 & 15.75 \\\midrule
\algname{DPFedAvg} \cite{DPSCAFFOLD2022} &  31.51 & 21.51 & 22.28 & 20.50& 21.25& 15.19& 18.27& 16.45 & 18.63\\\midrule
\algname{minimum} $\epsilon$ \cite{Liu2021ProjectedFA}& 26.20 & 16.71 & 16.45 & 15.86 & 14.23& 10.51 & 13.35& 13.32& 14.11\\\midrule \midrule
\algname{Robust-HDP} & \bf 31.97 & 31.70 & \bf 32.0& \bf 30.60 & \bf 24.86 & \bf 23.61 & \bf 24.10& \bf 19.02 & \bf 22.05\\\midrule

\end{tabular}
\vspace{-2mm}

\label{table:cifar10}
\end{table}

\begin{table}[hbt!]
\centering
\caption{Comparison of different algorithms (on CIFAR100, $E=200$). FedAvg achieves $61.80\%$.}
\begin{tabular}{l|*{8}{c}c}\toprule
\diagbox{alg}{distr}
&\makebox[2.5em]{\footnotesize Dist1}
&\makebox[2.5em]{\footnotesize Dist2}
&\makebox[2.5em]{\footnotesize Dist3}
&\makebox[2.5em]{\footnotesize Dist4}
&\makebox[2.5em]{\footnotesize Dist5}
&\makebox[2.5em]{\footnotesize Dist6}
&\makebox[2.5em]{\footnotesize Dist7}
&\makebox[2.5em]{\footnotesize Dist8}
&\makebox[2.5em]{\footnotesize Dist9}\\
\midrule \midrule
\algname{WeiAvg} \cite{Liu2021ProjectedFA} & \bf 35.91 & \bf 36.23 & 32.61 & 30.92 & 29.42 & 27.37 & 27.26 & 27.03 & 26.57\\\midrule
 \algname{PFA}\cite{Liu2021ProjectedFA} & 34.21& 35.86& 30.12& 29.45& 26.95& 31.27& 24.35& 21.29 &18.05 \\\midrule
\algname{DPFedAvg} \cite{DPSCAFFOLD2022} &  31.34 & 31.30 & 25.01 & 26.74 & 24.96 & 21.27 & 21.72 & 17.36 & 17.71\\\midrule \midrule
\footnotesize \algname{minimum} $\epsilon$ \cite{Liu2021ProjectedFA}& 27.61 & 27.25 & 24.91 & 25.02 & 25.07 & 21.21 & 21.46& 17.42 & 17.50\\\midrule
\algname{Robust-HDP} & 33.35 & 33.38 &  \bf 33.46 & \bf 35.03 & \bf 31.58 & \bf 31.33 & \bf 30.27 & \bf 30.71 & \bf 29.21\\\midrule

\end{tabular}
\vspace{-2mm}

\label{table:cifar100}
\end{table}

\subsection{ablation study on privacy level and number of clients}
The results in \cref{table:ablation_privacy_preference} and \cref{table:ablation_num_clients} show the detailed results for the ablation study on privacy level and number of clients, reported in \cref{fig:ablation_privacy_preference} (left and middle figures, respectively). The values are the mean and standard deviation of average test accuracy across clients over three different runs.

\begin{table}[hbt!]
\centering
\caption{Detailed results for ablation study on the privacy level of clients in \cref{fig:ablation_privacy_preference}, left.}
\begin{tabular}{l|*{4}{c}}\toprule
\diagbox{alg}{distr}
&\makebox[2.5em]{\footnotesize Dist2}
&\makebox[2.5em]{\footnotesize Dist4}
&\makebox[2.5em]{\footnotesize Dist6}
&\makebox[2.5em]{\footnotesize Dist8}\\

\midrule \midrule
\algname{WeiAvg} \cite{Liu2021ProjectedFA} &
88.29$_{\pm0.67}$ & 
88.20$_{\pm0.52}$ &
81.60$_{\pm0.58}$ &
78.71$_{\pm0.67}$ \\\midrule

\algname{PFA}\cite{Liu2021ProjectedFA} & 
88.32$_{\pm0.85}$ &
86.91$_{\pm0.97}$ &
82.16$_{\pm0.99}$ &
79.92$_{\pm0.86}$ \\\midrule

\algname{DPFedAvg} \cite{DPSCAFFOLD2022} &
82.84$_{\pm0.65}$ &
80.43$_{\pm1.72}$ &
74.02$_{\pm1.54}$ &
70.58$_{\pm1.43}$
\\\midrule\midrule

\algname{Robust-HDP} &
\bf90.71$_{\pm0.65}$ &
\bf89.38$_{\pm0.76}$ &
\bf85.13$_{\pm0.68}$ &
\bf81.19$_{\pm0.97}$
\\\midrule

\end{tabular}
\label{table:ablation_privacy_preference}
\end{table}

\begin{table}[hbt!]
\centering
\caption{Detailed results for ablation study on number of clients in \cref{fig:ablation_privacy_preference}, middle.}
\begin{tabular}{l|*{3}{c}}\toprule
\diagbox{alg}{distr}
&\makebox[2.5em]{\footnotesize $n=20$}
&\makebox[2.5em]{\footnotesize $n=40$}
&\makebox[2.5em]{\footnotesize $n=60$}\\
\midrule \midrule

\algname{WeiAvg} \cite{Liu2021ProjectedFA} &
81.60$_{\pm0.58}$ &
75.20$_{\pm0.67}$ &
63.12$_{\pm0.78}$
\\\midrule

 \algname{PFA}\cite{Liu2021ProjectedFA} &
 82.16$_{\pm0.99}$ &
 73.15$_{\pm1.02}$ &
 66.0$_{\pm0.98}$
 \\\midrule
 
\algname{DPFedAvg} \cite{DPSCAFFOLD2022} &
74.02$_{\pm1.54}$ &
62.98$_{\pm1.85}$ &
58.49$_{\pm1.67} $
\\\midrule\midrule

\algname{Robust-HDP} &
\bf 85.13$_{\pm0.68}$ &
\bf 76.85$_{\pm0.75}$ &
\bf 72.77$_{\pm0.78}$
\\\midrule

\end{tabular}
\label{table:ablation_num_clients}
\end{table}

\subsection{Precision of \algname{Robust-HDP}}\label{app:RHDP_precision}
In this section, we investigate the precision of \algname{Robust-HDP} in estimating $\{\sigma_i^2\}_{i=1}^n$ and $\{w_i^*\}_{i=1}^n$. We also check the performance of \algname{RPCA} algorithm used by \algname{Robust-HDP}. \Cref{fig:rank}
shows the eigen values of the matrices $\mathbf{M}$ and $\mathbf{L}$ on MNIST at the end of the first global communication round for when clients' privacy parameters are sampled from Dist3 (inducing less \DP noise) and Dist9 (inducing more \DP noise) from \Cref{table:mixture_dists}. We can clearly observe that most of the eigen values of $\mathbf{L}$ returned by \algname{RPCA} are close to $0$, especially for Dist3, i.e., \algname{RPCA} has returned a low-rank matrix as the underlying low-rank matrix in $\mathbf{M}$ for both Dist3 and Dist9.

\begin{figure*}[ht]
\centering
    \includegraphics[width=0.4\columnwidth]{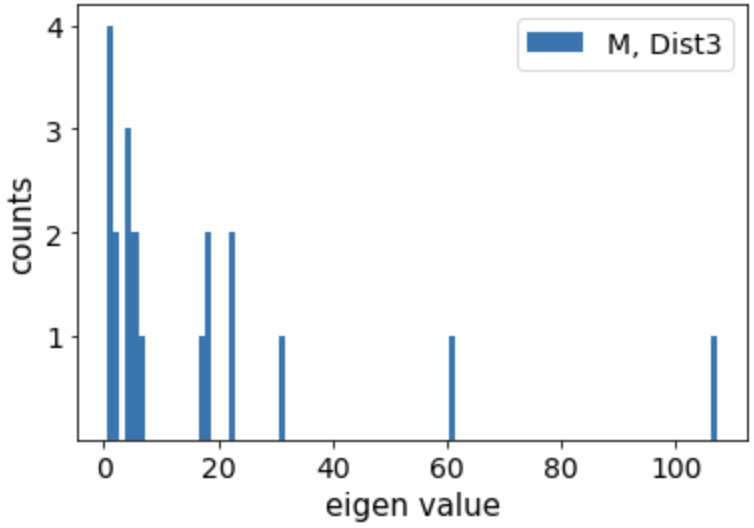}
    \includegraphics[width=0.4\columnwidth]{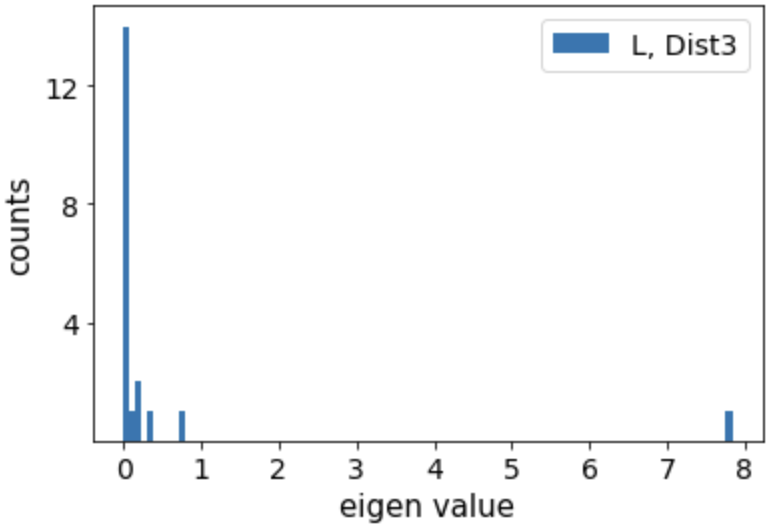}

    \includegraphics[width=0.4\columnwidth]{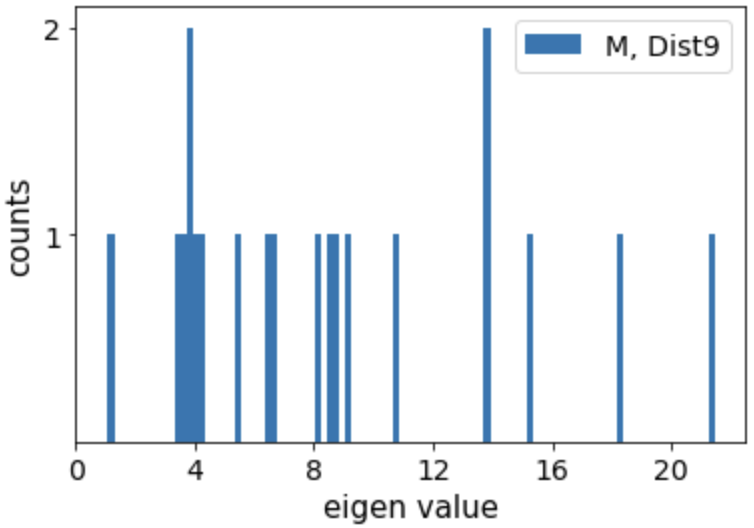}
    \includegraphics[width=0.4\columnwidth]{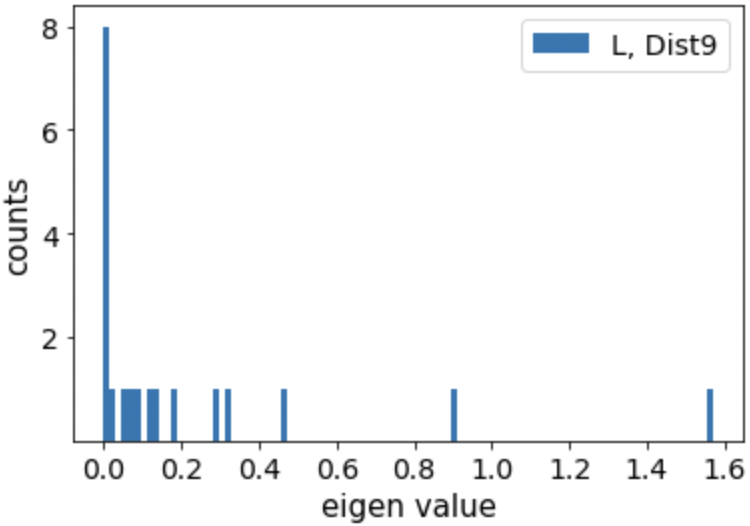}

    \caption{Comparison of the eigen values of matrices $\mathbf{M}$ (left) and $\mathbf{L}$ (right) on MNIST dataset. The concentration of eigen values in the right figures around 0 shows that the matrix $\mathbf{L}$  returned by Robust PCA, is indeed low rank, while $\mathbf{M}$  is not, due to the noise existing in clients model updates. The results for these experiments were reported in \cref{table:mnist} (\algname{Robust-HDP}, Dist3 and Dist9).}
    \label{fig:rank}
\end{figure*}

In \Cref{fig:weights_comparison_details}, we have shown the noise variance estimates $\{\hat{\sigma}_i^2\}_{i=1}^n$ and the aggregation weights $\{w_i\}_{i=1}^n$ returned by \algname{Robust-HDP}, and compared them with their true (optimum) values. We have also shown the weights assigned by other baseline algorithms. Having both privacy and batch size heterogeneity, \algname{Robust-HDP} assigns larger weights to clients with smaller $\epsilon$ and larger batch size (e.g., client 10, which has the largest batch size, has the largest assigned aggregation weight from \algname{Robust-HDP}). The weight assignment of \algname{Robust-HDP} is based on the noise estimates $\{\hat{\sigma}_i^2\}_{i=1}^n$: the larger the $\hat{\sigma}_i^2$, the smaller the assigned weight $w_i$. Also, as observed, the weight assignment of \algname{Robust-HDP} is very close to the optimum wights $\{w_i^*\}_{i=1}^n$. In contrast, WeiAvg and PFA assign weights just based on the privacy parameters $\epsilon_i$ of clients, which is suboptimal. Similarly, \algname{DPFedAvg} assings weights just based on the train set size of clients, which we assumed are uniform for the experiments in the main body of the paper and \cref{fig:weights_comparison_details}. 
We have done similar comparisons in the next section (\Cref{app:additional_exps}) for other heterogeneity scenarios.

\begin{figure*}[ht]
\centering
    \includegraphics[width=0.45\columnwidth]{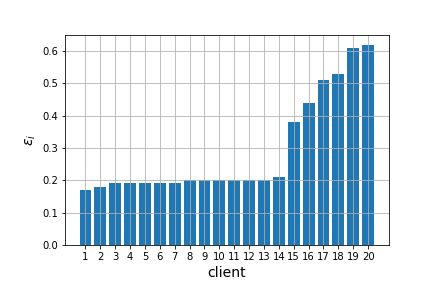}
    \includegraphics[width=0.45\columnwidth]{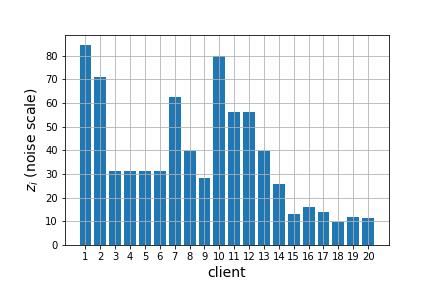}
    \includegraphics[width=0.41\columnwidth]{sigma2_mnist_dist8.png}
    ~~~~~~\includegraphics[width=0.45\columnwidth]{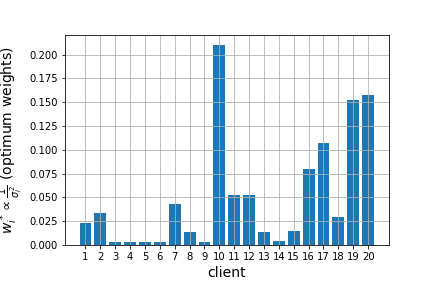}
    \includegraphics[width=0.45\columnwidth]{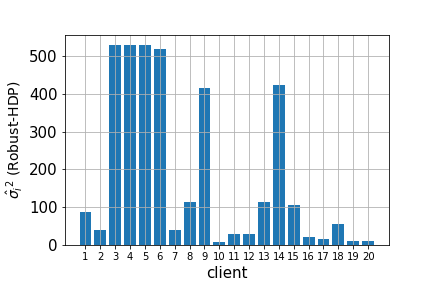}
    \includegraphics[width=0.45\columnwidth]{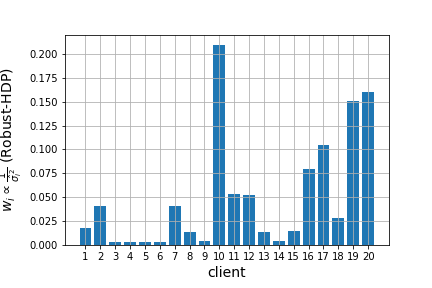}
    \includegraphics[width=0.45\columnwidth]{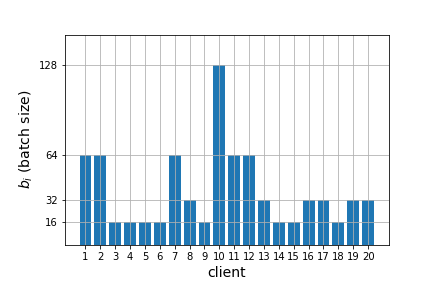}
    \includegraphics[width=0.45\columnwidth]{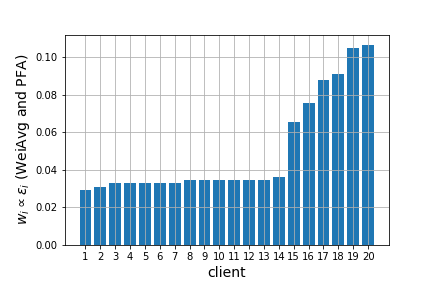}

    \caption{Comparison of weight assignments for Dist8 and MNIST with the data split in \Cref{table:split_uniform}. The assigned weights by baseline algorithms show that their weight assignment strategies are not based on the noise variance in clients model updates, hence suboptimal. The results for this experiment were reported in \cref{table:mnist} (\algname{Robust-HDP}, Dist8).}
    \label{fig:weights_comparison_details}
\end{figure*}

\clearpage
\newpage
\section{Additional Experiments}\label{app:additional_exps}
So far, we assumed heterogeneous batch sizes $\{b_i\}_{i=1}^n$, heterogeneous privacy parameters $\{\epsilon_i\}_{i=1}^n$ and uniform dataset sizes $\{N_i\}_{i=1}^n$. Now, we report and discuss some extra experimental results in this section. We consider three cases: 

\begin{itemize}
    \item uniform batch sizes $\{b_i=b\}$, heterogeneous privacy parameters $\{\epsilon_i\}$ and dataset sizes $\{N_i\}$
    
    \item uniform privacy parameters $\{\epsilon_i = \epsilon\}$, heterogeneous batch sizes $\{b_i\}$ and dataset sizes $\{N_i\}$

    \item uniform batch sizes $\{b_i=b\}$ and uniform privacy parameters $\{\epsilon_i = \epsilon\}$, heterogeneous dataset sizes $\{N_i\}$ (corresponding to regular homogeneous \DPFL setting, which is well-studied in the literature as a separate topic)
    
\end{itemize}
We run experiments on CIFAR10, as it uses a large size model and is more challenging. Unless otherwise stated, we use Dirichlet allocation  \citep{wang2019federated} to get label distribution heterogeneity for the experiments in this section. For all samples in each class $k$, denoted as the set $\Sc_k$, we split $\Sc_k = \Sc_{k,1}\cup \Sc_{k,2} \cup \dots \cup \Sc_{k, n}$ into $n$ clients ($n=20$) according to a symmetric Dirichlet distribution ${\rm Dir}(1)$. Then we gather the samples for client $j$ as $\Sc_{1, j} \cup \Sc_{2, j}\cup \dots \cup \Sc_{C, j}$, if we have $C$ classes in total. This results in different dataset sizes ($N_i$) for different clients. After splitting the data across the clients, we fix it and run the following experiments.

\subsection{uniform batch sizes \texorpdfstring{$\{b_i=b\}$}{Lg}, heterogeneous privacy parameters \texorpdfstring{$\{\epsilon_i\}$}{Lg} and heterogeneous dataset sizes \texorpdfstring{$\{N_i\}$}{Lg}}

Despite the haterogeneity that may exist in the memory budgets and physical batch sizes of clients, they may use gradient accumulation (see \Cref{sec:grad_acc}) to implement \algname{DPSGD} with the same logical batch size. However, such a synchronization can happen only when the untrusted server asks clients to all use a specific logical batch size. Otherwise, if every client decides about its batch size locally, the same batch size heterogeneity that we considered in the main body of the paper will happen again. In the case of such a batch size synchronization by the server, there will be some discrepancy between  the upload times of clients' model updates (as some need to use gradient accumulation with smaller physical batch sizes), which should be tolerated by the server. Having these points in mind, in this subsection, we assume such a batch size synchronization exists and we fix the logical batch size of all clients to the same value of $b=32$ by using gradient accumulation. We also sample their privacy preference parameters $\{\epsilon_i\}$ from \Cref{table:mixture_dists}. In this case, our analysis in \Cref{sec: batchsize_analysis} can be rewritten as follows (as before, we use the same $\delta_i=\delta$ and $K_i=K$ for all clients):

\paragraph{1. Effective clipping threshold:}
when the clipping is indeed effective for all samples, the variance of the noisy stochastic gradient in \Cref{eq:noisy_sg} can be computed as:

\begin{align}
    \mathbb E[\Tilde{g}_i(\thetav)] = \frac{1}{b}\sum_{j \in \mathcal{B}_i^t} \mathbb E[\bar{g}_{ij}(\thetav)] = \frac{1}{b}\sum_{j \in \mathcal{B}_i^t} G_i(\thetav) = G_i(\thetav),
\label{eq:appendix_expectation_gtilde_1}
\end{align} 
\begin{align}
    \sigma_{i, \Tilde{g}}^2 := \texttt{Var}(\Tilde{g}_i(\thetav)) =  \frac{c^2 - \big\| G_i(\thetav)\big\|^2}{b} + \frac{p c^2 z^2(\epsilon_i, \delta, \frac{b}{N_i}, K, E)}{b^2} \approx \frac{p c^2 z^2(\epsilon_i, \delta, \frac{b}{N_i}, K, E)}{b^2},
\label{eq:appendix_var_g_effective_1}
\end{align}

\paragraph{2. Ineffective clipping threshold:}
when the clipping is ineffective for all samples, we have:
\begin{align}
    & \mathbb E[\Tilde{g}_i(\thetav)] = \mathbb E[g_i(\thetav)] = \nabla f_i(\thetav),
    \\
    & \sigma_{i, \Tilde{g}}^2 = \texttt{Var}(\Tilde{g}_i(\thetav)) = \texttt{Var}(g_i(\thetav)) + \frac{p \sigma_{i, \texttt{\DP}}^2}{b^2} \leq \sigma_{i, g}^2 + \frac{p c^2 z^2(\epsilon_i, \delta, \frac{b}{N_i}, K, E)}{b^2},
\label{eq:appendix_var_g_ineffective_1}.
\end{align}

Finally:

\begin{align}
    \sigma_i^2 := \texttt{Var}(\Delta \Tilde{\thetav}_i^e|\thetav^e)
    & = K \cdot \lceil \frac{N_i}{b} \rceil \cdot \eta_l^2 \cdot \sigma_{i, \Tilde{g}}^2.
\end{align}

We observe that, the amount of noise in model updates ($\sigma_i^2$) varies across clients depending on their privacy parameter $\epsilon_i$ \textbf{and dataset size $N_i$}. Also, as observed in \Cref{fig:var_epsilon_b}, noise variance $\sigma_i^2$ does not change linearly with $\epsilon_i$. These altogether show that aggregation strategy $w_i \propto \epsilon_i$ is suboptimal. In contrast, Robust-HDP takes both of the sources of heterogeneity into account by assigning aggregation weights based on an estimation of $\{\sigma_i^2\}$ directly. With these settings, we got the results in \Cref{table:cifar10_uniform_b} on CIFAR10, which shows superiority of Robust-HDP in this heterogeneity scenario.

\begin{table}[t]
\centering
\caption{Comparison of different algorithms on CIFAR10 with uniform batch sizes $\{b_i=b\}$, heterogeneous privacy parameters $\{\epsilon_i\}$ and heterogeneous dataset sizes $\{N_i\}$. FedAvg ($E=200$) achieves $77.58\%$. We have dropped the "minimum $\epsilon$" algorithm due to its very low performance.}
\begin{tabular}{l|*{8}{c}c}\toprule
\diagbox{alg}{distr}
&\makebox[2.5em]{\footnotesize Dist1}
&\makebox[2.5em]{\footnotesize Dist2}
&\makebox[2.5em]{\footnotesize Dist3}
&\makebox[2.5em]{\footnotesize Dist4}
&\makebox[2.5em]{\footnotesize Dist5}
&\makebox[2.5em]{\footnotesize Dist6}
&\makebox[2.5em]{\footnotesize Dist7}
&\makebox[2.5em]{\footnotesize Dist8}
&\makebox[2.5em]{\footnotesize Dist9}\\
\midrule \midrule
\algname{WeiAvg} \cite{Liu2021ProjectedFA} & 31.99& 31.18& 29.59& 25.97& 24.66 & 16.61 & 23.13 & 17.01 & 14.34\\\midrule
 \algname{PFA}\cite{Liu2021ProjectedFA} & 32.12& 32.2 & 30.11& 28.48& 25.09& 16.85& \bf 23.34& 17.11& 15.12\\\midrule
\algname{DPFedAvg} \cite{DPSCAFFOLD2022} & 33.89 & 24.16 & 24.62 & 17.50 & 22.71& 16.76 & 19.52 & 13.81& \bf 16.27 \\\midrule
\algname{Robust-HDP} & \bf 34.94 & \bf 33.78 & \bf 31.34& \bf 32.50 & \bf 26.05 & \bf 17.98 & 23.13 & \bf 17.97 & 15.79\\\midrule

\end{tabular}
\label{table:cifar10_uniform_b}
\end{table}

\subsection{Heterogeneous batch sizes \texorpdfstring{$\{b_i\}$}{Lg}, uniform privacy parameters \texorpdfstring{$\{\epsilon_i = \epsilon\}$}{Lg} and heterogeneous dataset sizes \texorpdfstring{$\{N_i\}$}{Lg}}

In this section, we assume the same values for privacy parameters ($\epsilon_i=\epsilon$), but different batch and dataset sizes. Therefore, we have:

\paragraph{1. Effective clipping threshold:}

\begin{align}
    \mathbb E[\Tilde{g}_i(\thetav)] = \frac{1}{b}\sum_{j \in \mathcal{B}_i^t} \mathbb E[\bar{g}_{ij}(\thetav)] = \frac{1}{b}\sum_{j \in \mathcal{B}_i^t} G_i(\thetav) = G_i(\thetav),
\label{eq:appendix_expectation_gtilde_2}
\end{align} 
\begin{align}
    \sigma_{i, \Tilde{g}}^2 := \texttt{Var}(\Tilde{g}_i(\thetav)) =  \frac{c^2 - \big\| G_i(\thetav)\big\|^2}{b} + \frac{p c^2 z^2(\epsilon, \delta, \frac{b_i}{N_i}, K, E)}{b^2} \approx \frac{p c^2 z^2(\epsilon, \delta, \frac{b_i}{N_i}, K, E)}{b_i^2}.
\label{eq:appendix_var_g_effective_2}
\end{align}

\paragraph{2. Ineffective clipping threshold:}

\begin{align}
    & \mathbb E[\Tilde{g}_i(\thetav)] = \mathbb E[g_i(\thetav)] = \nabla f_i(\thetav),
    \\
    & \sigma_{i, \Tilde{g}}^2 = \texttt{Var}(\Tilde{g}_i(\thetav)) = \texttt{Var}(g_i(\thetav)) + \frac{p \sigma_{i, \texttt{\DP}}^2}{b^2} \leq \sigma_{i, g}^2 + \frac{p c^2 z^2(\epsilon, \delta, \frac{b_i}{N_i}, K, E)}{b_i^2},
\label{eq:appendix_var_g_ineffective_2}
\end{align}

and

\begin{align}
    \sigma_i^2 := \texttt{Var}(\Delta \Tilde{\thetav}_i^e|\thetav^e)
    & = K \times \lceil \frac{1}{q_i} \rceil \cdot \eta_l^2 \cdot \sigma_{i, \Tilde{g}}^2 \approx K \cdot \frac{N_i}{b_i} \cdot \eta_l^2 \cdot \sigma_{i, \Tilde{g}}^2.
\end{align}

Hence, $\sigma_i^2$ varies across clients as a function of both $b_i$ and $N_i$ and heavily depends on $b_i$ ($b_i$ appears with power 3). Despite this heterogeneity in the set $\{\sigma_i^2\}_{i=1}^n$, WeiAvg assigns the same aggregation weights to all clients, due to their privacy parameters being equal, which is clearly inefficient. In contrast, Robust-HDP estimates the values in $\{\sigma_i^2\}_{i=1}^n$ directly and assigns larger weights to clients with larger batch sizes. With these settings and the Dirichlet data allocation mentioned above, we got the results in \Cref{table:cifar10_uniform_eps_hetb}, which shows superiority of Robust-HDP in this case as well. We have used the mean values of the distributions Dist1, Dist3, Dist5, Dist7 and Dist9 from \Cref{table:mixture_dists} for $\epsilon$, i.e., $\epsilon \in \{\tt 2.6, \tt 2.0, \tt 1.1, \tt 0.6, \tt 0.35\}$. Also, as before, we have fixd $\delta_i$ to $\tt 1e-4$.

\begin{table}[t]
\centering
\caption{Comparison of different algorithms on CIFAR10 with heterogeneous batch sizes $\{b_i\}$, uniform privacy parameters $\{\epsilon_i = \epsilon\}$ and heterogeneous dataset sizes $\{N_i\}$. FedAvg ($E=200$) achieves $73.55\%$. "minimum $\epsilon$" algorithm is equivalent to DPFedAvg in this case.}
\begin{tabular}{l|*{4}{c}c}\toprule
\diagbox{alg}{distr}
&\makebox[2.5em]{\footnotesize $\epsilon = 2.6$}
&\makebox[2.5em]{\footnotesize $\epsilon = 2.0$}
&\makebox[2.5em]{\footnotesize $\epsilon = 1.1$}
&\makebox[2.5em]{\footnotesize $\epsilon = 0.6$}
&\makebox[2.5em]{\footnotesize $\epsilon = 0.35$}
\\
\midrule \midrule
\algname{WeiAvg} and \algname{PFA}\cite{Liu2021ProjectedFA} & 35.86 & 33.50 & 29.21 & \bf 24.49 & 18.40 \\\midrule
\algname{DPFedAvg \cite{DPSCAFFOLD2022}} & 37.00& 32.89 & 29.32& 23.06 & 19.14 \\\midrule
\algname{Robust-HDP} & \bf 37.45 & \bf 34.93& \bf 29.78& 23.15& \bf 19.54\\\midrule
\end{tabular}
\label{table:cifar10_uniform_eps_hetb}
\end{table}

\subsection{uniform batch sizes \texorpdfstring{$\{b_i=b\}$}{Lg}, uniform privacy parameters \texorpdfstring{$\{\epsilon_i = \epsilon\}$}{Lg} and heterogeneous dataset sizes \texorpdfstring{$\{N_i\}$}{Lg}}\label{app:uuh}
In this section, other than using the same values for clients batch sizes ($b_i=b$), we fix the privacy parameter of all clients to the same value $\epsilon$ (i.e., we have homogeneous DPFL, for which \algname{DPFedAvg} has been proposed). Therefore, we have:

\paragraph{1. Effective clipping threshold:}

\begin{align}
    \mathbb E[\Tilde{g}_i(\thetav)] = \frac{1}{b}\sum_{j \in \mathcal{B}_i^t} \mathbb E[\bar{g}_{ij}(\thetav)] = \frac{1}{b}\sum_{j \in \mathcal{B}_i^t} G_i(\thetav) = G_i(\thetav),
\label{eq:appendix_expectation_gtilde_3}
\end{align} 
\begin{align}
    \sigma_{i, \Tilde{g}}^2 := \texttt{Var}(\Tilde{g}_i(\thetav)) =  \frac{c^2 - \big\| G_i(\thetav)\big\|^2}{b} + \frac{p c^2 z^2(\epsilon, \delta, \frac{b}{N_i}, K, E)}{b^2} \approx \frac{p c^2 z^2(\epsilon, \delta, \frac{b}{N_i}, K, E)}{b^2}.
\label{eq:appendix_var_g_effective_3}
\end{align}

\paragraph{2. Ineffective clipping threshold:}

\begin{align}
    & \mathbb E[\Tilde{g}_i(\thetav)] = \mathbb E[g_i(\thetav)] = \nabla f_i(\thetav),
    \\
    & \sigma_{i, \Tilde{g}}^2 = \texttt{Var}(\Tilde{g}_i(\thetav)) = \texttt{Var}(g_i(\thetav)) + \frac{p \sigma_{i, \texttt{\DP}}^2}{b^2} \leq \sigma_{i, g}^2 + \frac{p c^2 z^2(\epsilon, \delta, \frac{b}{N_i}, K, E)}{b^2},
\label{eq:appendix_var_g_ineffective_3}
\end{align}

and

\begin{align}
    \sigma_i^2 := \texttt{Var}(\Delta \Tilde{\thetav}_i^e|\thetav^e)
    & = K \cdot \lceil \frac{1}{q_i} \rceil \cdot \eta_l^2 \cdot \sigma_{i, \Tilde{g}}^2 \approx K \cdot \frac{N_i}{b} \cdot \eta_l^2 \cdot \sigma_{i, \Tilde{g}}^2.
\end{align}

Hence $\sigma_i^2$ varies across clients as a function of only $N_i$. In the next paragraph, we show that this variation with $N_i$ is small. This means that when clients hold the same privacy parameter and also use the same batch size, the amount of noise in their model updates sent to the server are almost the same, i.e., $\sigma_i^2 \approx \sigma_j^2, i \neq j$. \textbf{Hence, in this case the problem in \Cref{eq:w_opt} has solution $w_i \approx \frac{1}{n}$}. In the following, we show what is the difference between the solutions provided by different algorithms for this case.

\subsubsection{Performance parity in DPFL systems} Before proceeding to the experimental results, we draw your attention to the weight assignments by Robust-HDP in this setting, where both privacy parameters and batch sizes are uniform. Robust-HDP aims at approximating $\{\sigma_i^2\}$ and:

\begin{align}\label{eq:wi_Ni}
    w_i^*\propto \frac{1}{\sigma_i^2} \approx \frac{b}{K \eta_l^2} \cdot \frac{1}{N_i \sigma_{i, \Tilde{g}}^2} \approx \frac{b^3}{K p c^2\eta_l^2} \cdot \frac{1}{N_i z^2(\epsilon, \delta, \frac{b}{N_i}, K, E)} = \frac{b^3}{K p c^2\eta_l^2} \cdot \frac{1}{H(N_i, b, \epsilon, \delta, K, E)}
\end{align}
where we have used \Cref{eq:var_g_effective} (with $b$ and $\epsilon$) and $H(N_i, b, \epsilon, \delta, K, E) := N_i z^2(\epsilon, \delta, \frac{b}{N_i}, K, E)$. Now note that $z$ decreases with $N_i$ sublinearly (see \Cref{fig:zvsq}. Remember that $q_i=\frac{b_i}{N_i}$). We have plotted the behavior of the function $H(N_i, b, \epsilon, \delta, K, E)$ as a function of $N_i$ in \Cref{fig:HvsN}. Hence, \emph{when $N_i$ decreases, $w_i^*$ increases slowly}. This means that Robust-HDP tries to minimize the noise in the aggregated model parameter (problem \ref{eq:w_opt}) and also assigns slightly larger weights to the clients with smaller datasets. Similarly, WeiAvg assigns uniform weights to all clients. In contrast, the solution provided by DPFedAvg focuses more on clients with larger train sets ($w_i \propto N_i$). Considering the point that $\{\sigma_i^2\}$ is almost uniform, this way it exploits clients with larger train sets during training. In the following, we discuss how this is related to performance fairness across clients.

\begin{figure*}[hbt!]
\centering
    \includegraphics[width=0.45\columnwidth]{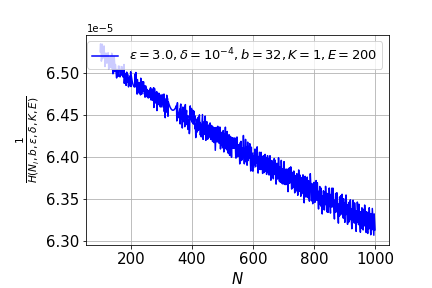}

    \caption{The behavior of function $1/H(N, b, \epsilon, \delta, K, E)$ as a function of $N$ (dataset size) for an instance value of $\epsilon=3$ and batch size $b=32$. Oscillations in the curves are due to finding $z$ empirically.}
    \label{fig:HvsN}
\end{figure*}

 There have been multiple works in the literature , showing that \DP has adverse effects on fairness in ML systems making it impossible to achieve both fairness and \DP simultaneously \cite{10.1145/3314183.3323847, Fioretto_2022, matzken2023tradeoffs}. The work in \cite{bagdasaryan2019differential} showed that accuracy of DP models drops much more for the underrepresented classes and subgroups, which yields to fairness issues. Interestingly, our Robust-HDP takes care of clients with minority data (i.e., those with small $N_i$) by assigning slightly larger weights to them at aggregation time, as shown in \Cref{eq:wi_Ni} and \Cref{fig:HvsN}. Similarly, WeiAvg assigns uniform weights to clients. Hence, when both batch size and privacy parameters are uniform across clients (i.e., homogeneous DPFL), we expect the weight assignments of Robust-HDP and WeiAvg to yield to a higher performance fairness across clients, while we expect DPFedAvg - which was designed for homogeneous DPFL - to improve system utility. Our experimental results further clarifies this. We use the mean values of the distributions Dist1, Dist3, Dist5, Dist7 and Dist9 from \Cref{table:mixture_dists} for $\epsilon$, i.e., $\epsilon \in \{\tt 2.6, \tt 2.0, \tt 1.1, \tt 0.6, \tt 0.35\}$. Also, as before, we fix $\delta_i$ to $\tt 1e-4$. With these settings and using the Dirichlet data allocation mentioned before, we got the results in \Cref{table:cifar10_alluniform} and \Cref{table:cifar10_alluniform_fairness}.

\begin{table}[t]
\centering
\caption{Comparison of different algorithms on CIFAR10 with uniform batch sizes $\{b_i=32\}$, uniform privacy parameters $\{\epsilon_i = \epsilon\}$ and heterogeneous dataset sizes $\{N_i\}$. FedAvg ($E=200$) achieves $73.55\%$. "minimum $\epsilon$" algorithm is equivalent to DPFedAvg in this case.}
\begin{tabular}{l|*{4}{c}c}\toprule
\diagbox{alg}{distr}
&\makebox[2.5em]{\footnotesize $\epsilon = 2.6$}
&\makebox[2.5em]{\footnotesize $\epsilon = 2.0$}
&\makebox[2.5em]{\footnotesize $\epsilon = 1.1$}
&\makebox[2.5em]{\footnotesize $\epsilon = 0.6$}
&\makebox[2.5em]{\footnotesize $\epsilon = 0.35$}
\\
\midrule \midrule
\algname{WeiAvg} and \algname{PFA}\cite{Liu2021ProjectedFA} & 37.24 & \bf 34.90 & 27.80 & 23.22 & 19.01\\\midrule
\algname{DPFedAvg \cite{DPSCAFFOLD2022}} & \bf 38.52 & 32.78 & \bf 30.42 & \bf 23.50& 20.26 \\\midrule
\algname{Robust-HDP} & 37.68 & 32.54 & 27.51 & 22.58 & \bf 20.52\\\midrule

\end{tabular}
\label{table:cifar10_alluniform}
\end{table}

\begin{table}[t]
\centering
\caption{Comparison of different algorithms on CIFAR10 with uniform batch sizes $\{b_i=32\}$, uniform privacy parameters $\{\epsilon_i = \epsilon\}$ and heterogeneous dataset sizes $\{N_i\}$ in terms of the std of clients' test accuracies and test accuracy of the client with the smallest train set (in parentheses). "minimum $\epsilon$" algorithm is equivalent to DPFedAvg in this case.}
\begin{tabular}{l|*{4}{c}c}\toprule
\diagbox{alg}{distr}
&\makebox[2.5em]{\footnotesize $\epsilon = 2.6$}
&\makebox[2.5em]{\footnotesize $\epsilon = 2.0$}
&\makebox[2.5em]{\footnotesize $\epsilon = 1.1$}
&\makebox[2.5em]{\footnotesize $\epsilon = 0.6$}
&\makebox[2.5em]{\footnotesize $\epsilon = 0.35$}
\\
\midrule \midrule
\algname{WeiAvg} and \algname{PFA}\cite{Liu2021ProjectedFA} &  \textbf{4.24} (32.95) & \textbf{4.11} (30.68) & 4.95 (25.01) & \textbf{3.89} (\textbf{27.84})& 5.70 (17.04)\\\midrule
\algname{DPFedAvg \cite{DPSCAFFOLD2022}} & 4.92 (\textbf{34.69}) & 4.71 (29.54) & 4.95 (25.41) & 6.78 (14.77) & 6.86 (11.36)\\\midrule
\algname{Robust-HDP} & 4.77 (34.09) & 4.59 (\textbf{34.91}) & \textbf{4.66} (\textbf{26.13}) & 6.01 (15.34) & \textbf{3.89} (\textbf{18.97})\\\midrule

\end{tabular}
\label{table:cifar10_alluniform_fairness}
\end{table}

\subsection{Conclusion: when to use Robust-HDP?}

We now summarize our understandings from the theories and experimental results in previous sections to conclude when to use Robust-HDP in \DPFL settings. From our experimental results, heterogeneity in either of the privacy parameters $\{\epsilon_i\}_{i=1}^n$ and batch sizes $\{b_i\}_{i=1}^n$, results in a considerable heterogeneity in noise variances $\{\sigma_i^2\}_{i=1}^n$. Hence, using \algname{Robust-HDP} in this cases will be beneficial in the system overall utility. However, if both $\{\epsilon_i\}_{i=1}^n$ and $\{b_i\}_{i=1}^n$ are homogeneous, and the only potential heterogeneity is in $\{N_i\}_{i=1}^n$ (i.e., homogeneous \DPFL), then using \algname{DPFedAvg} will be slightly better in terms of the system overall utility, as it assigns larger weights to the clients with larger dataset sizes. Despite this, using \algname{Robust-HDP} will alightly improve the performance of clients with smaller dataset sizes.

\subsection{Gradient accumulation}\label{sec:grad_acc}
When training large models with \algname{DPSGD}, increasing the batch size results in memory exploding during training or finetuning. This might happen even when we are not using \DP training. On the other hand, using a small batch size results in larger stochastic noise in batch gradients. Also, in the case of \DP training, using a small batch size results in fast increment of \DP noise (as explained in \ref{sec:noisy_updates} in details). Therefore, if the memory budget of devices allow, we prefer to avoid using small batch sizes. But what if there is a limited memory budget? A solution for virtually increasing batch size is ``gradient accumulation", which is very useful when the available physical memory of GPU is insufficient to accommodate the desired batch size. In gradient accumulation, gradients are computed for smaller batch sizes and summed over multiple batches, instead of updating model parameters after computing each batch gradient. When the accumulated gradients reach the target logical batch size, the model weights are updated with the accumulated batch gradients. The page in \url{https://opacus.ai/api/batch_memory_manager.html} shows the implementation of gradient accumulation for \DP training.

\section{Limitations and Future works}\label{app:future_dirs}

In this section, we investigate the potential limitations of our proposed \algname{Robust-HDP}, and look at the future directions for addressing them. As before, we assume full participation of clients for simplicity. Specifically, we are curious about what happens if the data distribution across clients is not completely \emph{i.i.d}, but rather is moderately/highly heterogeneous. We investigate \algname{Robust-HDP} in these two scenarios in \cref{app:moderatenoniid} and \cref{app:heavilynoniid}, respectively.

\subsection{\algname{Robust-HDP} with moderately heterogeneous data distribution}\label{app:moderatenoniid}

In order to evaluate \algname{Robust-HDP} when the data split is moderately heterogeneous, we run experiments on MNIST. In order to simulate a controlled higher data heterogeneity, we use the sharding data splitting method described in \Cref{appendix:datasets} and \Cref{tab:datasets}, \emph{and we let each client to hold data samples of at maximum 8 classes}, with 60 clients in total. We consider two cases:

\paragraph{All 60 clients use the same batch size \texttt{128}:} 
the results obtained for this case, i.e., heterogeneous data with uniform batch sizes 128, were reported in \cref{table:mnist_8labels_60clients_uniformbatch}. As we observed, \algname{Robust-HDP} still outperforms the baselines in most of the cases. However, compared to the results in \cref{table:mnist}, which were obtained when the data split was \emph{i.i.d}, its superiority has decreased. In order to get an understanding why this is the case, lets have a look at the aggregation weight assignments by different algorithms for this setting in \cref{fig:weights_comparison_details_60_noniid_uniformbatch128}. Remember that, we have assumed uniform batch size of 128 for all clients. Therefore, the only parameter that makes variation in $\{\sigma_i^2\}$ is the clients' privacy parameters $\{\epsilon_i\}$ being different. There are multiple points in \cref{fig:weights_comparison_details_60_noniid_uniformbatch128}. First, the accuracy of RPCA decomposition in estimating $\{\sigma_i^2\}$ has decreased (compare the difference between $\{\sigma_i^2\}$ and their estimates in \cref{fig:weights_comparison_details_60_noniid_uniformbatch128} with that in \cref{fig:weights_comparison_details} which was on a \emph{i.i.d} data split). Second, despite this, the aggregation weights returned by \algname{Robust-HDP} are very close to the optimum weights. This is the case because, as explained in \cref{sec:reliability_rpdp}, estimating the noise variances $\{\sigma_i^2\}$ up to a multiplicative factor suffices for \algname{Robust-HDP} to get to the optimum aggregation weights $\{w_i^*\}$. Lastly, compared to the aggregation weights returned by \algname{WeiAvg}, \algname{Robust-HDP} has smoothly assigned larger weights to the clients with larger privacy parameters $\{\epsilon_i\}$. Note that, as we have assumed uniform batch size for all clients, having a larger privacy parameter $\epsilon$ is equivalent to having a less noisy model update sent to the server. 

\emph{From the points mentioned above and the results in \cref{fig:weights_comparison_details_60_noniid_uniformbatch128}, we conclude that, despite the moderate data heterogeneity, \algname{Robust-HDP} is still successful in assigning the aggregation wights $\{w_i\}_{i=1}^n$ such that the noise the aggregated model update is minimized}. But considering the heterogeneity in clients' data, is this good for the accuracy of the model too? More specifically, with data heterogeneity, does assigning larger weights to the clients with less noisy model updates necessarily result in higher utility too? From the results in \cref{table:mnist_8labels_60clients_uniformbatch}, we observe that when the data is slightly heterogeneous and batch sizes are uniform, this is the case most of the times. However, as we will show next, this seems to be not the case when we also consider an additional heterogeneity in clients' batch sizes.

 \paragraph{The batch sizes of the 60 clients are randomly selected from $\{\texttt{16}, \texttt{32}, \texttt{64}, \texttt{128}\}$:}
 we recall \cref{eq:sigma_i^2}, which showed the considerable effect of batch size of a client on the noise variance in its model updates. When batch size decreases, its noise variance increases fast. Hence, unlike the previous case with uniform batch sizes, it is now both the batch sizes and privacy parameters of clients that determine the noise variance in their model updates.  The results in \cref{table:mnist_8labels_60clients_nonuniformbatch} are obtained in this case. Also, \cref{fig:weights_comparison_details_60_noniid_nonuniformbatch} compares the weight assignments by different algorithms.

As observed, \algname{Robust-HDP} no longer outperforms the baselines. To get a better understanding, lets have a look at the aggregation weight assignments by different algorithms for this setting in \cref{fig:weights_comparison_details_60_noniid_nonuniformbatch}.  
First, the accuracy of RPCA decomposition in estimating $\{\sigma_i^2\}$ has again decreased compared to that in  \cref{fig:weights_comparison_details}, which was on a \emph{i.i.d} data split. Second, despite this, the aggregation weights returned by \algname{Robust-HDP} are still close to the optimum weights. However, the plot of assigned weights by \algname{Robust-HDP} are more spiky than that in \Cref{fig:weights_comparison_details_60_noniid_uniformbatch128}: compared to the aggregation weights returned by \algname{WeiAvg}, \algname{Robust-HDP} has assigned larger weights to the clients with larger privacy parameters $\{\epsilon_i\}$ \emph{and larger batch sizes}. Batch size of clients has a larger effect on the aggregation weights assigned to them. For instance client 59, which has batch size 128 and the second largest privacy parameter, has been assigned aggregation weight close to 0.18, while the same client got aggregation weight close to 0.05 when all clients used the same batch size (\cref{fig:weights_comparison_details_60_noniid_uniformbatch128}). There are 6 clients whose aggregation weights sum to more than 0.5, i.e., these only 6 clients contribute to the aggregated model parameter more than the other 54 clients altogether. The reason behind this is that  \algname{Robust-HDP} aims at minimizing the noise level in the aggregated model update at the end of each round, and it has been successful in that. But the question is that, in this scenario with data heterogeneity, is this strategy beneficial for the utility of the final trained model too? Although, this strategy results in maximizing the trained model utility when the data split is \emph{i.i.d}, it is not the case when we have data heterogeneity and batch size heterogeneity simultaneously, and the results in \cref{table:mnist_8labels_60clients_nonuniformbatch} confirm this. This is a limitation for \algname{Robust-HDP}. However, we can provide a solution for it. Heterogeneity in batch sizes usually happens when clients have different memory budgets. Clients with low memory budgets can not use large batches, especially when training privately with \algname{DPSGD} \cite{Abadi2016}. As observed, when having data heterogeneity, this batch size heterogeneity deteriorates the performance of \algname{Robust-HDP}. Despite the heterogeneity that may exist in clients' memory budgets, they can use gradient accumulation explained in \cref{sec:grad_acc} to virtually increase their batch sizes to a uniform batch size (e.g., 128). In this case, we get back to the results in \cref{table:mnist_8labels_60clients_uniformbatch}, in which \algname{Robust-HDP} works well most of the times. The cost that we pay is that clients with limited physical memory sizes have to spend more time locally during each global round, and the server should wait longer for these clients before performing each aggregation.

\begin{figure*}[ht]
\centering
    \includegraphics[width=0.45\columnwidth]{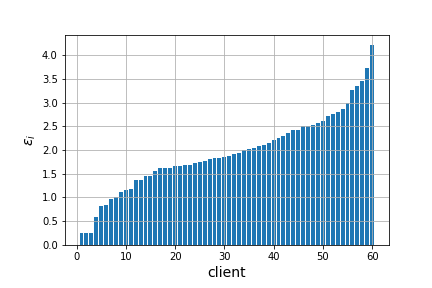}
    \includegraphics[width=0.45\columnwidth]{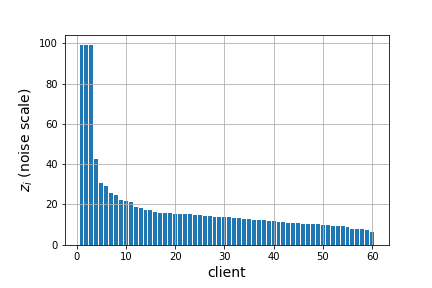}

    \includegraphics[width=0.45\columnwidth]{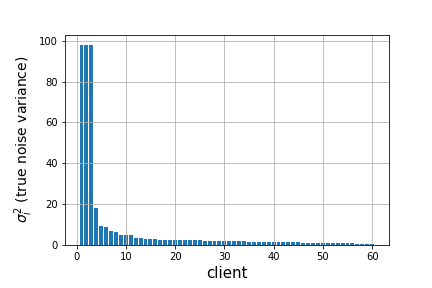}
    \includegraphics[width=0.45\columnwidth]{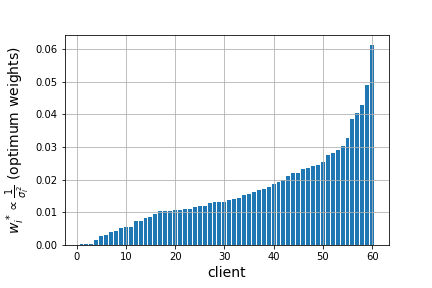}
    \includegraphics[width=0.45\columnwidth]{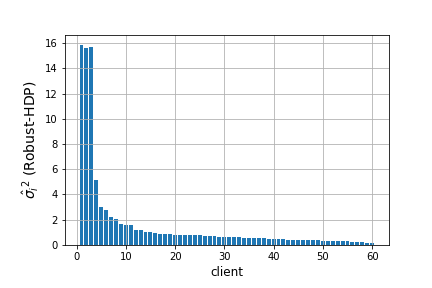}
    \includegraphics[width=0.45\columnwidth]{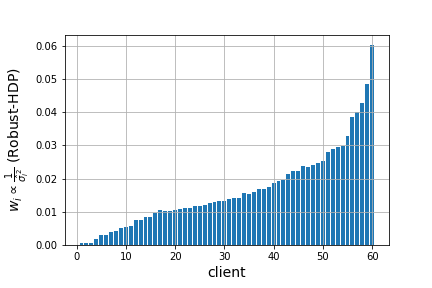}
    \includegraphics[width=0.45\columnwidth]{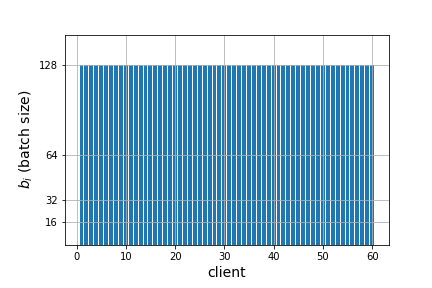}
    \includegraphics[width=0.45\columnwidth]{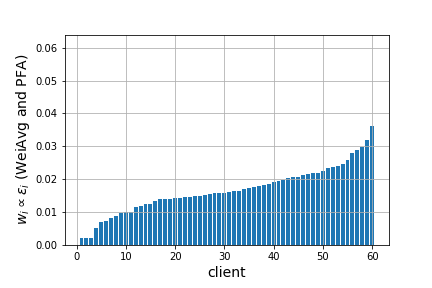}

    \caption{Comparison of weight assignments by different algorithms on MNIST, Dist1 and 60 clients with heterogeneous data distribution (maximum 8 labels per client) and uniform batch size ($b_i=128$). The weight assignments by \algname{Robust-HDP} are very close to the optimal weight assignment strategy, despite the heterogeneity in the data split. Also, the first 40 (last 20) clients, which have the smallest (largest) $\epsilon$ privacy parameters, get assigned smaller (larger) weights by \algname{Robust-HDP} than by \algname{PFA} and \algname{WeiAvg}, \textbf{showing the suboptimality of aggregation strategy of \algname{PFA} and \algname{WeiAvg}}. The results for this experiment were reported in \cref{table:mnist_8labels_60clients_uniformbatch} (\algname{Robust-HDP}, Dist1).}
    \label{fig:weights_comparison_details_60_noniid_uniformbatch128}
\end{figure*}

\begin{table}[t]
\centering
\caption{Comparison of different algorithms (on MNIST, $E=200$) with heterogeneous data split (maximum 8 labels per client) and 60 clients in the system with heterogeneous batch sizes.}
\label{table:mnist_8labels_60clients_nonuniformbatch}
\begin{tabular}{l|*{8}{c}c}\toprule
\diagbox{alg}{distr}
&\makebox[2.5em]{\footnotesize Dist1}
&\makebox[2.5em]{\footnotesize Dist2}
&\makebox[2.5em]{\footnotesize Dist3}
&\makebox[2.5em]{\footnotesize Dist4}
&\makebox[2.5em]{\footnotesize Dist5}
&\makebox[2.5em]{\footnotesize Dist6}
&\makebox[2.5em]{\footnotesize Dist7}
&\makebox[2.5em]{\footnotesize Dist8}
&\makebox[2.5em]{\footnotesize Dist9}\\
\midrule\midrule
\algname{WeiAvg} \cite{Liu2021ProjectedFA} & \bf 84.60 & 78.00 & \bf 83.70 &  \bf 78.42 & 77.3 & \bf 75.23 & 78.01 & 66.10. & 68.12\\\midrule
\footnotesize \algname{PFA}\cite{Liu2021ProjectedFA} & 79.07 & 75.72 & 83.23 & 76.54 & 78.06 & 64.00 & 79.10 & \bf 68.08 & 71.74\\\midrule
\algname{DPFedAvg} \cite{DPSCAFFOLD2022} & 83.41 & 74.95 & 82.69 & 70.85 & 77.45& 74.32 & 74.92& 62.07& 67.25\\\midrule
\algname{Robust-HDP} & 83.65 & \bf 79.38 & 82.88 & 77.13& \bf 83.53& 71.14& \bf 80.41 & 61.75 & \bf 71.76\\
\bottomrule
\end{tabular}
\end{table}

\begin{figure*}[ht]
\centering
    \includegraphics[width=0.45\columnwidth]{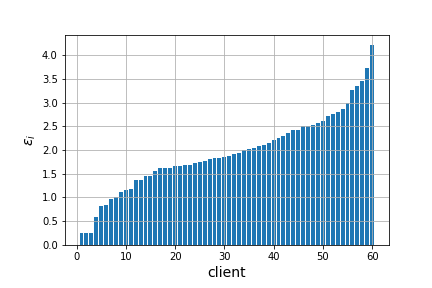}
    \includegraphics[width=0.45\columnwidth]{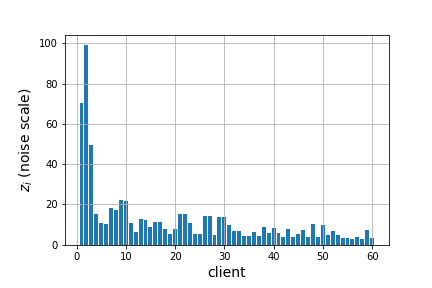}

    \includegraphics[width=0.45\columnwidth]{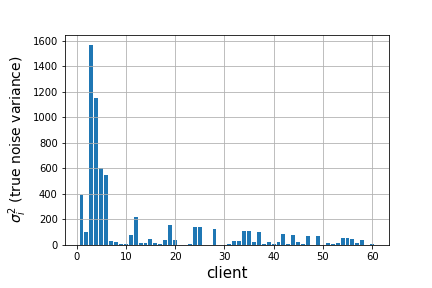}
    \includegraphics[width=0.45\columnwidth]{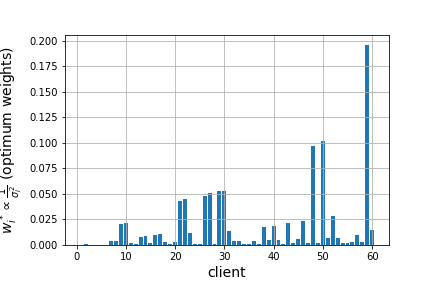}
    \includegraphics[width=0.45\columnwidth]{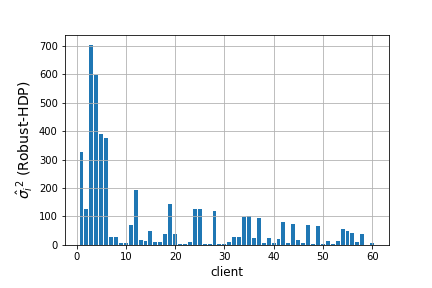}
    \includegraphics[width=0.45\columnwidth]{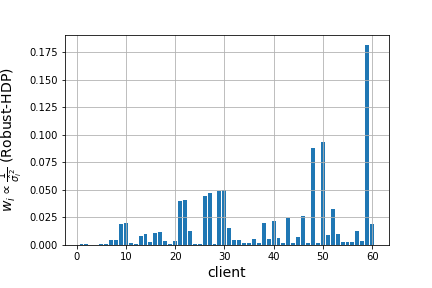}
    \includegraphics[width=0.45\columnwidth]{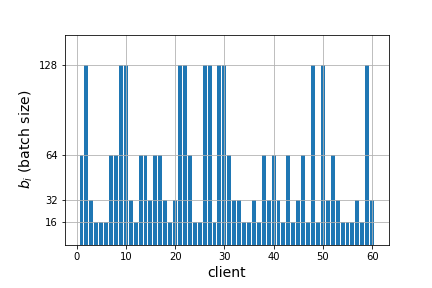}
    \includegraphics[width=0.45\columnwidth]{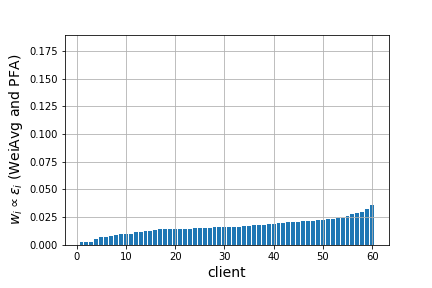}

   \caption{Comparison of weight assignments by different algorithms on MNIST, Dist1 and 60 clients with hetrogeneous data distribution (maximum 8 labels per client) and heterogeneous batch sizes. The weight assignments by \algname{Robust-HDP} are close to the optimal weight assignment strategy, despite the heterogeneity in the data split.  The results for this experiment were reported in \cref{table:mnist_8labels_60clients_nonuniformbatch} (\algname{Robust-HDP}, Dist1).}
    \label{fig:weights_comparison_details_60_noniid_nonuniformbatch}
\end{figure*}

\clearpage
\newpage
\subsection{\DPFL with highly heterogeneous data split across clients (future work)}\label{app:heavilynoniid}

Having studied \algname{Robust-HDP} in scenarios with \emph{i.i.d} and slightly heterogeneous data splits, we are curious about the scenarios with highly heterogeneous data splits. In non-private \FL systems, high data heterogeneity is usually addressed by personalized \FL \cite{Li2020DittoFA} and clustered \FL \cite{Sattler2019ClusteredFL}. In the former case, each client learns a model specifically for itself by fine-tuning the common model obtained from \FL on its local data. In the latter case, clients with similar data are first grouped into a cluster by the server, followed by federated training of a model for each cluster. In highly heterogeneous data distributions, clustered \FL is more common \cite{Sattler2019ClusteredFL, Werner2023ProvablyPA}.

On the other hand, we have \DPFL systems with highly heterogeneous data splits. In the existence of a trusted server, an idea was proposed by \citet{Chathoth2022cohortDP} for clustering clients with cohort-level privacy with privacy and data heterogeneity across cohorts, using $\epsilon$-\DP definition (\cref{def:epsilondeltadp} with $\delta=0$). When there is no trusted server, we can follow a similar direction of clustered \DPFL to address scenarios with highly heterogeneous data splits: clients are first clustered by the server such that the data distribution of clients in a cluster are more similar to each other, and then, a model is learned for each cluster. However, the \DP noise in clients' model updates makes clustering of clients harder. A recent work in \cite{malekmohammadi2024mitigating} has addressed this scenario by proposing an algorithm, which is robust to the \DP noise existing in clients' model updates, for clustering clients in \DPFL system.
\end{document}